\documentclass[11pt,twoside]{article}
\usepackage{fancyhdr}
\usepackage[colorlinks,citecolor=blue,urlcolor=blue,linkcolor=blue,bookmarks=false,breaklinks]{hyperref}
\usepackage{amsfonts,epsfig,graphicx}
\usepackage{afterpage}
\usepackage{amsmath,amssymb,amsthm} 
\usepackage{fullpage}
\usepackage[T1]{fontenc} 
\usepackage{epsf} 
\usepackage{graphics} 
\usepackage{amsfonts,amsmath}
\usepackage[sort,authoryear]{natbib} 
\usepackage{psfrag,xspace}
\usepackage{color,etoolbox}
\usepackage{enumitem}
\usepackage{bbm}
\usepackage{algorithm,algorithmic}

\usepackage{tikz}

\usepackage{setspace}

\usepackage{mathtools}

\newtheorem{theorem}{Theorem} 
 
\newtheorem{lemma}{Lemma}

\newtheorem{corollary}{Corollary}
\newtheorem{definition}{Definition}

\newtheorem{remark}{Remark}

\newcommand{\alg}{\textsc{Shaky Prepend}\xspace}
\newcommand{\ShakyPrepend}{\textsc{Shaky Prepend}\xspace}
\newcommand{\GroupPrepend}{\textsc{Group Prepend}\xspace}
\newcommand{\Prepend}{\textsc{Prepend}\xspace}
\newcommand{\SleepingExpert}{\textsc{Sleeping Expert}\xspace}
\newcommand{\FractionalShakyPrepend}{\textsc{Fractional Shaky Prepend}\xspace}

\begin{document}

\begin{center} {\LARGE{\bf{\ShakyPrepend: A Multi-Group Learner with \\
\vspace{.2cm}
Improved Sample Complexity
}}}
\\

\vspace*{.3in}

{\large{
\begin{tabular}{ccccc}
Lujing Zhang$^{\ast}$ & Daniel Hsu$^{\dagger}$ & Sivaraman Balakrishnan$^{\ddagger}$ \\
\end{tabular}

\vspace*{.1in}
\begin{tabular}{ccc}
$^\ast$Department of Statistics and Data Science \\
Carnegie Mellon University \\
\texttt{lujingz@stat.cmu.edu} 
\end{tabular}
\vspace*{.2in}
\begin{tabular}{ccc}
$^\dagger$Computer Science Department \\
Data Science Institute \\
Columbia University \\
\texttt{djhsu@cs.columbia.edu} 
\end{tabular}

\vspace*{.2in}
\begin{tabular}{ccc}
$^\ddagger$Department of Statistics and Data Science \\
Machine Learning Department \\
Carnegie Mellon University \\
\texttt{siva@stat.cmu.edu} 
\end{tabular}
\vspace*{.2in}}}

\vspace*{.2in}

\begin{abstract}
\noindent 
Multi-group learning is a learning task that focuses on controlling predictors’ conditional losses over specified subgroups. We propose \ShakyPrepend, a method that leverages tools inspired by differential privacy to obtain improved theoretical guarantees over existing approaches. Through numerical experiments, we demonstrate that \ShakyPrepend adapts to both group structure and spatial heterogeneity. We provide practical guidance for deploying multi-group learning algorithms in real-world settings.
\end{abstract}
\end{center}

\section{Introduction}
Modern ML systems are often evaluated and deployed in settings where performance must be reliable not only on average but also on many subpopulations (or \emph{groups}) of interest.
 For example, in clinical prediction and medical imaging, strong aggregate metrics can conceal severe failures on clinically meaningful (and sometimes rare) subtypes or cohorts. 
  This issue is commonly referred to as \emph{hidden stratification}~\citep{oakdenrayner2020hidden}. 

In other high-stakes, human-facing domains, inconsistent performance across subpopulations can raise fairness, accountability, and compliance concerns. In consumer lending and credit underwriting, subgroup-level errors can translate into unfair denial or mispricing and may trigger anti-discrimination scrutiny, motivating careful multi-group evaluation and monitoring~\citep{hall2021fairlending}. In recommendation and ranking, popularity bias can systematically under-expose long-tail items and under-serve users with niche preferences, even when average engagement improves~\citep{abdollahpouri2019popbias}.

These considerations motivate the \emph{multi-group learning} framework, which seeks a single predictor that achieves small excess risk simultaneously for every group in a (potentially very large and overlapping) family. As formalized by \citet{pmlr-v139-rothblum21a}, given a collection of groups $\mathcal{G}$ and a reference hypothesis class $\mathcal{H}$, the goal is to output a predictor $f$ such that, for each $g \in \mathcal{G}$, the average loss of $f$ on examples from $g$ is comparable to that of the best group-specific reference predictor $\arg\min_{h\in\mathcal{H}} L(h \mid g)$.

One challenge in multi-group learning is statistical: the number of candidate groups can be enormous (e.g., all intersections of sensitive attributes, cohort definitions, or latent strata), and enforcing uniform guarantees across groups can incur a nontrivial sample-complexity overhead. \citet{tosh2024simplenearoptimalalgorithmshidden} give simple and near-optimal algorithms for hidden stratification and multi-group learning, clarifying the structure of solutions and establishing strong guarantees, including a randomized near-optimal procedure and a simpler deterministic alternative with weaker guarantees\footnote{We follow the terminology of \citet{haghtalab2023unifying} who refer to a \emph{deterministic} predictor as one which maps features to a distribution over labels, and a \emph{non-deterministic} or \emph{randomized} predictor as one which produces a distribution over deterministic predictors.}. Follow-up work has studied richer group structure; for example, \citet{pmlr-v235-deng24a} considered \emph{hierarchical} group families and designed an interpretable decision-tree learner with near-optimal sample complexity. Complementary research avoids explicit subgroup enumeration by providing guarantees over exponential or infinite subgroup families. Following the work of \citet{kearns2018preventing}, recent advances have generalized these guarantees to adversarial online learning \citep{garg2024oracle} and to broad multi-objective frameworks \citep{Noarov2021OnlineMM,balakrishnan2026panprediction,  haghtalab2023unifying}. Beyond explicitly feature-defined groups, \citet{dai2024learning} also studies \emph{endogenous} or latent subpopulations (e.g., mixture components) and shows that per-subpopulation guarantees can be achievable even when the underlying clusters are hard to recover.
We make the following contributions to this line of work:

\vspace{.2cm}

\noindent \textbf{1. We propose \alg, a multi-group learning algorithm with improved sample complexity and group-size dependence.} \alg improves upon the multi-group learning rate of the {\sc Prepend} algorithm of \citet{tosh2024simplenearoptimalalgorithmshidden} from $O(n^{-1/3})$ to $O(n^{-2/5})$, and yields better \emph{group-size dependence}: the excess-loss guarantee for a group scales with its empirical mass $P_n(g)$ rather than being driven by the smallest group. Many existing multi-group learning methods are inherently iterative: in each round, the algorithm selects a group to audit or update using statistics computed on the same fixed sample. This adaptivity can lead to overfitting.
Leveraging the core idea of \emph{differential privacy (DP) for adaptive data analysis}, we inject carefully scaled noise into each round to make the procedure more \emph{stable}, and obtain improved generalization guarantees.

\vspace{.2cm}

\noindent \textbf{2. We connect \alg to \textit{gradient boosting} and propose a fractional variant.}
We show that \alg can be viewed through the lens of \emph{gradient boosting}:
each iteration identifies a ``hard'' slice of the population (a group with large residual error) and applies a weak corrective update targeted to that slice.
Motivated by this view, we introduce a \emph{fractional} variant that generalizes a class of multi-group learning algorithms and achieves the same sample-complexity bound as the original procedure.

\vspace{.2cm}

\noindent \textbf{3. We study practical behavior and provide guidance for using multi-group learning.}
Through simulations, we examine two complementary notions of adaptivity exhibited by \alg:
\emph{spatial adaptivity}, which adjusts to unknown structure in the instance space, and
\emph{group adaptivity}, which automatically trades off high-variance group-specific predictors against lower-variance but less tailored alternatives.
We also provide practical guidance for hyperparameter tuning and model selection in the multi-group setting.

\subsection{Related Work}

\noindent \textbf{Multi-group learning and hidden stratification.}
Multi-group learning \citep{pmlr-v139-rothblum21a} formalizes the requirement that a single predictor generalize well across many (possibly overlapping) groups, capturing both fairness and hidden-stratification concerns. \citet{tosh2024simplenearoptimalalgorithmshidden} study the structure of solutions and give simple near-optimal algorithms, serving as the closest starting point for our work. Subsequent work considers more structured group families; notably, \citet{pmlr-v235-deng24a} extend the model to hierarchically structured groups and obtain an interpretable decision tree with near-optimal sample complexity. Our work complements these results by importing DP-for-adaptivity tools to reduce the statistical cost of adaptively selecting and auditing groups.

\vspace{.2cm} 

\noindent \textbf{Multicalibration and Multiaccuracy.}
Like multi-group learning, multicalibration and multiaccuracy are \emph{multi-objective learning} frameworks motivated by fairness considerations, in which a single predictor must simultaneously satisfy many constraints over a (potentially large and intersecting) collection of groups $\mathcal{G}$.
Multiaccuracy requires the prediction residual to be small on every group $g \in \mathcal{G}$, e.g.,
$\mathbb{E}\!\left[(f(X)-Y)g(X)\right]\le \alpha$~\citep{kim2019multiaccuracy}.
Multicalibration, introduced by~\citet{hebertjohnson2018multicalibration}, is a stricter notion that strengthens calibration to hold \emph{within} each group in $\mathcal{G}$, further conditioning on the predictor's score (and thus implicitly on the outcome). Generalized notions of multicalibration have been proposed in various works~\citep{jung2021moment,zhang2024fairriskcontrol}.
These calibration-style guarantees are conceptually distinct from multi-group learning, which targets small conditional loss on every group; they are generally not equivalent.
However, \citet{kim2019multiaccuracy} show that for suitable choices of $\mathcal{G}$, multiaccuracy can imply strong multi-group learning guarantees.

\vspace{.2cm} 

\noindent \textbf{Differential privacy for adaptive data analysis.}
 A major conceptual advance in modern generalization theory is that DP implies strong stability, which in turn yields validity under adaptive query selection. \citet{dwork15stoc} show how to preserve statistical validity in adaptive data analysis, establishing a foundational link between DP and generalization in interactive settings. Related work studies holdout reuse and generalization under adaptivity~\citep{dwork15holdout}. These ideas have also influenced practical protocols for repeated evaluation. Our analysis follows this tradition: we use DP-inspired stability to control overfitting arising from adaptively selecting groups based on previous risk estimates, thereby improving sample complexity.

\vspace{.2cm} 

\noindent \textbf{Adaptive risk estimation and leaderboards (Ladder / Shaky Ladder).}
 Repeated evaluation on a fixed test set can cause adaptive overfitting, a phenomenon studied in the ``leaderboard problem.'' \citet{blum2015ladderreliableleaderboardmachine} propose the Ladder algorithm to robustly report progress without leaking too much information about the holdout. The Shaky Ladder Algorithm of \citet{hardt2017climbingshakyladderbetter}~refines the work of \citet{blum2015ladderreliableleaderboardmachine} by using DP techniques to obtain improved leaderboard guarantee. Our setting differs as our goal is learning under multi-group constraints rather than leaderboard reporting. However, the technical obstruction is similar: adaptively chosen “queries” (groups/benchmarks) can overfit. We explicitly incorporate these insights by designing group-selection and update steps that remain stable under adaptivity.

\vspace{.2cm} 

\noindent \textbf{Boosting and functional gradient methods.}
Boosting constructs a strong predictor by iteratively adding weak learners that correct current errors. AdaBoost~\citep{freund97} and gradient boosting machines~\citep{friedman01} are canonical examples, with the latter interpreting boosting as greedy functional gradient descent on a loss functional. \alg shares this iterative corrective flavor: each iteration identifies a currently worst-performing group and performs an update concentrated on that slice, resembling a boosting step on group-conditional residuals. This viewpoint helps interpret our method and motivates certain practical improvements to the algorithm.


\subsection{Paper Outline}
In Section~\ref{sec:background}, we formalize the multi-group learning setup and notation, review some useful results on concentration bounds for the empirical conditional risk and review the differential privacy tools we use.
In Section~\ref{sec:Shaky Prepend}, we introduce \ShakyPrepend, an algorithm that achieves an improved theoretical bound for this problem compared to prior deterministic predictors. 
Finally, Section~\ref{sec:Experiment} provides simulation studies that complement our theoretical analysis and offer insights into the practical behavior of these algorithms.

\section{Background}\label{sec:background}
\subsection{Setting and Notation}
Let $\mathcal{X}$ denote the input space, $\mathcal{Y}$ the label space, $\mathcal{Z}$ the prediction (output) space, and $\mathcal{H} \subset \{h : \mathcal{X} \to \mathcal{Z}\}$ the hypothesis class. 
We define a group indicator function as $g : \mathcal{X} \to \{0,1\}$, and let $\mathcal{G}$ be the collection of groups of interest. 
Given a bounded loss function $\ell : \mathcal{Z}\times\mathcal{Y} \to [0,1]$, we denote by $L(f \mid g)$ the expected conditional loss
\[
L(f \mid g)
:= 
\mathbb{E}_{(x, y) \sim \mathcal{D}}
\left[\, \ell(f(x), y) \mid g(x) = 1 \,\right].
\]
In practice, we do not have direct access to the underlying distribution~$\mathcal{D}$; instead, we observe an i.i.d.\ sample 
$S_n = \{(x_i, y_i) : i = 1, \ldots, n\}$. 
Let $P_{S_n}(g)$ denote the empirical probability that $g(x) = 1$ in the sample. 
We define the empirical conditional loss as
\[
L_{S_n}(f \mid g)
:=
\frac{1}{\#_{S_n}(g)}
\sum_{i=1}^n 
\mathbbm{1}_{\{g(x_i)=1\}}\, \ell(f(x_i), y_i),
\]
where $\#_{S_n}(g) := \sum_{i=1}^n \mathbbm{1}_{\{g(x_i)=1\}}$ denotes the number of samples belonging to group $g$.

The multi-group learning problem seeks a predictor $f$ that simultaneously guarantees small conditional risk for every group:
\[
L(f \mid g) 
\le 
\inf_{h \in \mathcal{H}} L(h \mid g) + \epsilon_n(g),
\qquad \forall\, g \in \mathcal{G},
\]
where $\epsilon_n(g)$ is a non-increasing function of $\#_{S_n}(g)$. Equivalently, the goal is to obtain a collection of sharp 
oracle inequalities indexed by groups, comparing $f$ to the best-in-class predictor on each $g$.
For simplicity, we use $L_n(f|g)$, $P_n(g)$ and $\#_n(g)$ as shorthand.

 \subsection{Empirical Conditional Risk Convergence Result}
 \par We largely follow the notation of \citet{tosh2024simplenearoptimalalgorithmshidden}.
 For a class of $\{0,1\}$-valued functions $\mathcal{F}$ defined over a domain $\mathcal{X}$, its $k$-th shattering coefficient, is given by
 \[\Pi_k(\mathcal{F})=\max_{x_1,...,x_k\in\mathcal{X}}|\{(f(x_1),...,f(x_k)):f\in\mathcal{F}\}|.\]
The thresholded class associated with a real-valued function class $\mathcal{F}$ is defined as
 \[\mathcal{F}_{\text{thresh}}=\{x\to\mathbbm{1}[f(x)>\tau]:f\in\mathcal{F},\tau\in\mathbb{R}\}.\]
 Given a hypothesis class $\mathcal{H}$ and a loss function $\ell:\mathcal{Z}\times\mathcal{Y}\to [0,1]$, the loss-composed class is 
 \[\ell \circ \mathcal{H}=\{(x,y)\to\ell(h(x),y):h\in\mathcal{H}\}.\]
With these definitions in place, the following theorem provides a high-probability uniform convergence bound, bridging empirical conditional risk and the true conditional risk:
 \begin{theorem}[\citealp*{tosh2024simplenearoptimalalgorithmshidden}]\label{thm:Cond_Emp_Risk}
Let $\mathcal{H}$ be a hypothesis class, let $\mathcal{G}$ be a set of groups, and let $\ell: \mathcal{Z} \times \mathcal{Y} \to [0,1]$ be a loss function. With probability at least $1 - \delta$,
\[
\left| L(h \mid g) - L_n(h \mid g) \right| \le 9 \sqrt{\frac{D}{\#_n(g)}}, \quad \forall (h, g) \in \mathcal{H} \times \mathcal{G},
\]
where
\(
D = 2 \ln\left( \Pi_{2n}\big( (\ell \circ \mathcal{H})_{\mathrm{thresh}} \big) \Pi_{2n}(\mathcal{G}) \right) + \ln\left( \frac{8}{\delta} \right).
\)
\end{theorem}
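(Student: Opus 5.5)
The plan is to reduce the conditional statement to a \emph{relative} (normalized) uniform deviation bound for unconditional expectations, and then convert back by dividing by the group mass. For fixed $(h,g)$, write $P(g)=\Pr(g(x)=1)$ and $L(h\mid g)=\mathbb{E}[\ell(h(x),y)g(x)]/P(g)$. The empirical version is $L_n(h\mid g)=\frac{1}{n}\sum_i \ell(h(x_i),y_i)g(x_i)/P_n(g)$. The error splits into two pieces: the deviation of the numerator $\mathbb{E}[\ell g]$ from its empirical version, and the deviation of the normalizer $P(g)$ from $P_n(g)$. Both must be controlled at a \emph{multiplicative} scale proportional to $\sqrt{P(g)}$ (or $\sqrt{P_n(g)}$), not an additive $1/\sqrt n$ scale. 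Only then does dividing by $P_n(g)$ yield $\sqrt{D/\#_n(g)}$ rather than $\sqrt{D}/(\sqrt{n}P_n(g))$.

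First, I use the layer-cake identity $\ell(h(x),y)g(x)=\int_0^1 \mathbbm{1}[\ell(h(x),y)>\tau]\,g(x)\,d\tau$. This reduces everything to the $\{0,1\}$-valued class $\mathcal{C}=\{(x,y)\mapsto \mathbbm{1}[\ell(h(x),y)>\tau]\,g(x)\}$, together with the class $\{g\}$ itself (take $\tau<0$). The products of indicators from $(\ell\circ\mathcal{H})_{\mathrm{thresh}}$ and $\mathcal{G}$ have shattering coefficient at most $\Pi_{2n}((\ell\circ\mathcal{H})_{\mathrm{thresh}})\Pi_{2n}(\mathcal{G})$. To $\mathcal{C}$ I apply the classical relative-deviation inequality of Vapnik--Chervonenkis / Anthony--Shawe-Taylor, proved by symmetrization with a ghost sample of size $n$ (hence $\Pi_{2n}$). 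It states that with probability $1-\delta$, simultaneously for all sets $A$ in the class,
\[
P(A)-P_n(A)\le 2\sqrt{P(A)\,\tfrac{D'}{n}}, \qquad P_n(A)-P(A)\le 2\sqrt{P_n(A)\,\tfrac{D'}{n}},
\]
with $D'$ of the form $\ln\Pi_{2n}(\mathcal{C})+\ln(c/\delta)$. Splitting $\delta$ across the two tails and absorbing the squaring of the growth function that appears for pairs gives the stated $D$, with factor $2$ and $\ln(8/\delta)$.

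Second, I integrate the relative bounds over $\tau$. For the sets $A_\tau=\{\ell>\tau, g=1\}\subseteq\{g=1\}$ we have $P(A_\tau)\le P(g)$, so the numerator deviation is at most $2\sqrt{P(g)D'/n}$, and similarly on the empirical side. Applying the same bound to $A=\{g=1\}$ shows that $P(g)$ and $P_n(g)$ are within a constant factor of each other whenever $\#_n(g)\gtrsim D$. When $\#_n(g)<81D$ the claimed bound exceeds $1$ and holds trivially, since losses lie in $[0,1]$. In the nontrivial regime I therefore replace $P(g)$ by $O(P_n(g))$.

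Third, I combine the pieces. Writing $N=\mathbb{E}[\ell g]$ and $N_n$ for its empirical version,
\[
\Bigl|\tfrac{N}{P(g)}-\tfrac{N_n}{P_n(g)}\Bigr|\le \tfrac{|N-N_n|}{P_n(g)}+\tfrac{N}{P(g)}\cdot\tfrac{|P(g)-P_n(g)|}{P_n(g)} .
\]
Since $N/P(g)\le 1$, each term is $O(\sqrt{P_n(g)D/n}/P_n(g))=O(\sqrt{D/\#_n(g)})$. The main obstacle is the constant bookkeeping needed to land exactly on $9$: the factor $2$ from each relative-deviation inequality, the constant-factor comparison between $P$ and $P_n$, and the trivial regime threshold. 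I would fix the threshold at $\#_n(g)\ge 81D$ and verify that the constants sum to at most $9$ there.
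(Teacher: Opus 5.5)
Your proposal is correct. The paper gives no proof of this statement; it imports it verbatim from \citet{tosh2024simplenearoptimalalgorithmshidden}, so there is no in-paper argument to compare against. Your reconstruction follows the standard route for such bounds: relative VC deviation inequalities applied to the thresholded products $\mathbbm{1}[\ell(h(x),y)>\tau]\,g(x)$, the layer-cake integral over $\tau$, and the ratio decomposition for $N/P(g)$.

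Two small remarks.

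First, you do not need to explain the factor $2$ in $D$. Because $\tau<0$ already places $g$ itself in $\mathcal{C}$, one application of the two one-sided relative-deviation inequalities, each at level $\delta/2$, gives $D'=\ln\Pi_{2n}(\mathcal{C})+\ln(8/\delta)\le D$. The factor $2$ is therefore pure slack, and you can drop the remark about squaring the growth function for pairs.

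Second, the constant bookkeeping you flag as the main obstacle is easy. For $\#_n(g)\ge 81D$, the inequality $P(g)-P_n(g)\le 2\sqrt{P(g)D'/n}$ gives
\[
\sqrt{P(g)}\le \tfrac{1+\sqrt{82}}{9}\sqrt{P_n(g)}\approx 1.12\sqrt{P_n(g)}.
\]
Hence both $|N-N_n|$ and $|P(g)-P_n(g)|$ are at most about $2.24\sqrt{P_n(g)D'/n}$. The total deviation is then about $4.5\sqrt{D'/\#_n(g)}$, comfortably below $9\sqrt{D/\#_n(g)}$.
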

\subsection{Differential Privacy}
\begin{definition}[Differential Privacy]
A randomized algorithm $\mathcal{M}$ with domain $\mathcal{D}$ and range $\mathcal{R}$ 
is said to satisfy $(\varepsilon,\delta)$-\emph{differential privacy} if for any two 
adjacent datasets $D, D' \in \mathcal{D}$ differing in at most one element, and for 
any subset of outputs $S \subseteq \mathcal{R}$, we have
\[
    \Pr[\mathcal{M}(D) \in S] 
    \le e^{\varepsilon} \Pr[\mathcal{M}(D') \in S] + \delta.
\]
\end{definition}

\begin{definition}[$\ell_1$-sensitivity]\label{def:l1-sensitivity}
Let $\mathcal{X}$ be a data universe and let $\mathcal{S}$ be a family of datasets over $\mathcal{X}$. Denote $S\triangle S'$ as the symmetric difference.
For $f:\mathcal{S}\to\mathbb{R}^k$, its $\ell_1$-sensitivity is
\[
\Delta f \;:=\; \max_{\substack{S,S'\in\mathcal{S}\\ |S\triangle S'|=1}}
\bigl\|f(S)-f(S')\bigr\|_1,
\]
\end{definition}
\noindent Equivalently, $|S\triangle S'|=1$ means $S$ and $S'$ differ by one record, so $\Delta f$ is the maximum change in $f$ under a one-record change~\citep{dwork2006calibrating,dwork2014algorithmic}.

Our algorithm is motivated by the Sparse Vector Technique~\citep{DworkNRRV09} and we use the version in~\citet{dwork2014algorithmic}. We use a generalized variant for \(\Delta\)-sensitive queries and a flexible stopping rule (Algorithm~\ref{alg:adjusted_sparse}). 
Intuitively, each answered query leaks some information about the dataset, so under naive composition the privacy cost grows roughly with the number of queries. 
With the Sparse Vector Technique (SVT), however, the privacy loss depends primarily on the number of \emph{updates} (i.e., threshold crossings), rather than the total number of queries. 
In the classical SVT, the procedure halts after a fixed update budget, and can therefore handle many queries accurately as long as threshold crossings are rare. 
We extend this idea by allowing more general stopping rules, so the number of updates may be data-dependent, which better matches our setting. 
Rather than enforcing a hard update cap, we only require the stopping rule to control the number of updates with high probability. We state the privacy guarantee of Algorithm~\ref{alg:adjusted_sparse} in Theorem~\ref{thm:DP_adjusted} and a detailed proof can be found in Appendix~\ref{sec:proof}.
\begin{theorem}\label{thm:DP_adjusted}
Given a private database \(D\), an adaptively chosen stream of \(\Delta\)-sensitive queries \(f_1,f_2,\ldots\), a stopping rule and a threshold \(\lambda\), let \(B\) denote the number of updates. Set $\sigma \;=\; \frac{\Delta\sqrt{32\ln(1/\delta)}}{\epsilon}$.
If \(\Pr[B>k]\le \delta'\), then Algorithm~\ref{alg:adjusted_sparse} is \((\epsilon\sqrt{k},\,\delta+\delta')\)-DP.
\end{theorem}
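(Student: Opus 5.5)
The plan is to reduce Algorithm~\ref{alg:adjusted_sparse} to a composition of instances of the classical Sparse Vector Technique, each handling a single threshold crossing, and then pay for the unbounded, data-dependent number of updates through the tail event $\{B>k\}$. Concretely, I would view the run as a sequence of ``epochs.'' Each epoch starts with a fresh noisy threshold $\hat\lambda=\lambda+\mathrm{Lap}(2\sigma')$, compares noisy query answers $f_i(D)+\mathrm{Lap}(4\sigma')$ against it, and ends at the first crossing, i.e.\ an update. The relevant variant here is AboveThreshold from \citet{dwork2014algorithmic}, generalized to $\Delta$-sensitive queries. With noise calibrated to $\Delta/\epsilon_0$, one epoch is $\epsilon_0$-DP for any adaptively chosen query stream. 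I would prove this first by the standard argument: fix the noise on all queries below threshold, shift the threshold noise by $\Delta$ and the crossing query's noise by $2\Delta$, and get a density ratio at most $e^{\epsilon_0}$. The only change from the textbook proof is replacing sensitivity $1$ by $\Delta$, which just rescales the Laplace parameters.

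Second, I would handle composition. The output transcript is determined by the epochs' outputs, and the query stream and stopping rule may depend adaptively on earlier outputs. So the whole algorithm is an adaptive composition of $\epsilon_0$-DP mechanisms. The delicate point is that the number of epochs $B$ is random and data-dependent, and the advanced composition theorem needs a fixed number of rounds. I would fix this by defining a truncated mechanism $\mathcal{M}_k$ that runs the algorithm but halts, outputting a special symbol, once $k$ updates have occurred. $\mathcal{M}_k$ is an adaptive composition of at most $k$ AboveThreshold instances. Advanced composition then gives $(\epsilon_0\sqrt{2k\ln(1/\delta)}+k\epsilon_0(e^{\epsilon_0}-1),\,\delta)$-DP. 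Choosing $\epsilon_0=\epsilon/\sqrt{32\ln(1/\delta)}$, which is exactly why $\sigma=\Delta\sqrt{32\ln(1/\delta)}/\epsilon$, makes the first term at most $\epsilon\sqrt{k}/4$. The second term is also at most a constant fraction of $\epsilon\sqrt{k}$ in the regime $\epsilon_0\sqrt{k}\lesssim 1$, so the total is at most $\epsilon\sqrt{k}$. I expect this constant bookkeeping to be the main obstacle. If it does not close for large $k$, I would restrict to $\epsilon\le 1$ or absorb the term using $e^{x}-1\le 2x$ and note the constant $32$ leaves slack.

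Finally, I would couple the truncated and untruncated algorithms. On the same randomness, $\mathcal{M}$ and $\mathcal{M}_k$ produce identical outputs unless $B>k$. Hence for any output set $S$, $\Pr[\mathcal{M}(D)\in S]\le\Pr[\mathcal{M}_k(D)\in S']+\delta'$, where $S'$ is $S$ with the truncation symbol accounted for. Applying the DP inequality for $\mathcal{M}_k$ and then coupling back on $D'$ gives $\Pr[\mathcal{M}(D)\in S]\le e^{\epsilon\sqrt k}\Pr[\mathcal{M}(D')\in S]+\delta+\delta'$. This requires the hypothesis $\Pr[B>k]\le\delta'$ to hold for every database, which is how I would interpret the statement. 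It also requires care that the coupling step on $D'$ only ever adds mass on the right-hand side, which holds because $\mathcal{M}_k(D')\in S'$ together with no truncation implies $\mathcal{M}(D')\in S$.
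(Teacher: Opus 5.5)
Your skeleton matches the paper's. You bound the privacy of a mechanism forcibly stopped after $k$ updates, then pay $\delta'$ for the event on which the forced stop changes the output. The paper gets the first part by rescaling queries by $1/\Delta$ and citing Theorem~3.25 of \citet{dwork2014algorithmic} (Sparse with cutoff $c=k$). You instead unpack that citation into AboveThreshold plus advanced composition. This exposes the regime condition (roughly $\epsilon\sqrt{k}<1$) that the composition step behind the cited theorem needs, and it leaves you constant slack, which matters below.

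Your special truncation symbol is also the right device. The paper truncates by taking prefixes, and its last step $\Pr[\mathcal{M}'(D')\in S_{:k}]\le\Pr[\mathcal{M}(D')\in S_{:k}]$ in fact holds in the reverse direction, because $\mathcal{M}'$ maps longer transcripts of $\mathcal{M}(D')$ onto their prefixes. Restricting $S'$ to non-truncated outputs, as you do, makes the return to $D'$ valid.

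One bookkeeping slip: with threshold noise $\mathrm{Lap}(\sigma)$ and query noise $\mathrm{Lap}(2\sigma)$, an epoch costs $\Delta/\sigma+2\Delta/(2\sigma)=2\Delta/\sigma$. So $\epsilon_0=\epsilon/\sqrt{8\ln(1/\delta)}$, and the leading composition term is $\epsilon\sqrt{k}/2$, not $\epsilon\sqrt{k}/4$.

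The genuine gap is the coupling claim that $\mathcal{M}$ and $\mathcal{M}_k$ agree unless $B>k$. If $\mathcal{M}_k$ halts once $k$ updates have occurred, the two also differ on $\{B=k\}$ whenever the stopping rule does not fire immediately after the $k$-th $\top$. In \ShakyPrepend it never does: after the last update the algorithm sweeps all of $\mathcal{G}\times\mathcal{H}$ answering $\bot$ before returning. That data-dependent tail is exactly what $\mathcal{M}_k$ discards. So your truncation event is $\{B\ge k\}$, which the hypothesis does not control. The paper's proof has the same off-by-one: its set $S_{k+1:}$ contains transcripts with exactly $k$ tops followed by $\bot$'s, so $\Pr[\mathcal{M}(D)\in S_{k+1:}]\le\delta'$ does not follow from $\Pr[B>k]\le\delta'$.

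The fix is to truncate at the $(k+1)$-st update. Then $\mathcal{M}$ and the truncated mechanism coincide exactly on $\{B\le k\}$. The truncated mechanism is an adaptive composition of at most $k+1$ epochs, each at most $\epsilon_0$-DP; the last may end in a run of $\bot$'s terminated by the stopping rule. Advanced composition then gives:
\begin{itemize}
\item a leading term $\epsilon\sqrt{k+1}/2\le\epsilon\sqrt{k}/\sqrt{2}$;
\item a second-order term at most $2(k+1)\epsilon_0^2\le k\epsilon^2/(2\ln(1/\delta))$.
\end{itemize}
The second term fits in the remaining $(1-1/\sqrt{2})\,\epsilon\sqrt{k}$ when, e.g., $\epsilon\sqrt{k}\le 1$ and $\delta\le e^{-2}$. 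So the claimed $\epsilon\sqrt{k}$ survives through your unpacked route, whereas black-boxing Sparse with cutoff $k+1$, in the paper's style, would only yield $\epsilon\sqrt{k+1}$.
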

\begin{algorithm}[ht]
\caption{Generalized Sparse}
\label{alg:adjusted_sparse}
\begin{algorithmic}
\REQUIRE{A private database $D$, an adaptively chosen stream of $\Delta$-sensitive queries $f_1,f_2,\ldots$, a threshold $\lambda$, differential privacy parameters $(\epsilon,\delta)$, and a stopping rule; output alphabet $\{\top,\bot\}$, where $\top$ denotes ``threshold crossed'' and $\bot$ otherwise.}
\STATE{
    $\sigma \gets \frac{\Delta\sqrt{32\ln \frac{1}{\delta}}}{\epsilon},$\;$\text{count} \gets 0$.\;}
\STATE{Sample $\xi_{\text{count}}\sim\mathrm{Lap}(\sigma),$\; $\widehat{\lambda}_0 \gets \lambda + \xi_{\text{count}}$.\;}
\FOR{each query $i$}
\STATE{Sample $\mu_i\sim\text{Lap}(2\sigma)$}
\IF{$f_i(D)+\mu_i\geq\widehat{\lambda}_{\text{count}}$}
\STATE{Output $a_i=\top$,\;$\text{count}=\text{count}+1$.}
\STATE{Sample $\xi_{\text{count}}\sim\mathrm{Lap}(\sigma),$\; $\widehat{\lambda}_{\text{count}} \gets \lambda + \xi_{\text{count}}$.\;}
\ELSE\STATE{Output $a_i=\bot$.}
\ENDIF
\IF{the stopping rule is satisfied}
\STATE{Halt.}
\ENDIF
\ENDFOR
\end{algorithmic}
\end{algorithm}

\begin{remark}
When \(\Delta=1\) and the stopping rule is set as \(\text{count}> c\), we have \(k=c\) and \(\delta'=0\). 
In this case, Theorem~\ref{thm:DP_adjusted} reduces to the classical SVT guarantee of~\cite{DworkNRRV09}.
\end{remark}

\section{\ShakyPrepend: Algorithm and Guarantees}
\label{sec:Shaky Prepend}
\par We present the \ShakyPrepend algorithm in this section. At a high level, the algorithm repeatedly ``fixes'' the group on which the current predictor performs worst, by comparing its loss gap to the best achievable predictor for that group. As shown in Algorithm~\ref{alg:shakyprepend}, each update appends a new $(g_t,h_t)$ pair and yields a predictor represented as a decision list via the recursive operator
$
[g_t,h_t,\ldots,g_0,h_0](x)
:= g_t(x)\,h_t(x) + \bigl(1-g_t(x)\bigr)\,[g_{t-1},h_{t-1},\ldots,g_0,h_0](x),
$
with base case $[g_0,h_0](x):=g_0(x)h_0(x)$.
At each iteration, we replace the current predictor by selecting the best $h_t\in\mathcal{H}$ for the chosen group $g_t$. Equivalently, the list is evaluated from front to back: earlier $(g,h)$ pairs take priority in determining the final prediction. This matches the decision-list form used in~\citet{tosh2024simplenearoptimalalgorithmshidden}.

\par Because such adaptive comparisons can lead to overfitting, we inject carefully scaled noise into the update rule so that the resulting interaction can be cast as an instance of the Sparse Vector Technique (SVT) applied to adaptively chosen threshold queries. At a high level, this noise makes the procedure \emph{stable}---limiting how much any single data point can influence the sequence of adaptive decisions---which yields sharper \emph{generalization} and, ultimately, a better rate.

\begin{remark}
If we remove the Laplace noise injection, the resulting procedure can be viewed as a variant of the \Prepend algorithm of~\citet{tosh2024simplenearoptimalalgorithmshidden}, with a modified stopping rule that explicitly accounts for the group size. We refer to this variant as \GroupPrepend and give generalization guarantees for it in Appendix~\ref{sec:proof}.
\end{remark}
\begin{algorithm}[ht]
\caption{\ShakyPrepend}
\label{alg:shakyprepend}
\begin{algorithmic}
\REQUIRE Groups $\mathcal{G}$, hypothesis class $\mathcal{H}$, i.i.d.\ examples $(x_1, y_1), \ldots, (x_n, y_n)$ from $\mathcal{D}$, error bound $\lambda$, and noise scale $\sigma$.
\STATE Compute $h_0 \in \arg\min_{h \in \mathcal{H}} L_n(h)$
\STATE Set $f_0 = [1,h_0]$.
\STATE Sample $\xi_0\sim\text{Lap}(\sigma)$ and set $\lambda_0=\lambda + \xi_0$.
\FOR{$t = 0, 1, \ldots$}
    \STATE update $\gets$ False
    \FOR{$(g,h)\in\mathcal{G}\times\mathcal{H}$}
        \STATE Sample $\mu_{t,g,h}\sim \text{Lap}(2\sigma)$.
        \IF{$P_n(g)\bigl(L_n(f_t \mid g) - L_n(h \mid g)\bigr)+\mu_{t,g,h} \geq \lambda_t$}
            \STATE $(g_{t+1},h_{t+1}) \gets (g,h)$;\quad $\xi_t' \gets \mu_{t,g,h}$
            \STATE $f_{t+1} \gets [g_{t+1}, h_{t+1}, g_t, h_t, \ldots, g_0, h_0]$
            \STATE update $\gets$ True
            \STATE \textbf{break}
        \ENDIF
    \ENDFOR
    \IF{update}
        \STATE Sample $\xi_{t+1}\sim\text{Lap}(\sigma)$ and set $\lambda_{t+1}=\lambda+\xi_{t+1}$
    \ELSE
        \STATE Return $f_t$
    \ENDIF
\ENDFOR
\end{algorithmic}
\end{algorithm}
\par\noindent\textbf{Interactive (analyst--mechanism) view.}
We can formalize the procedure as a two-party interactive protocol between an \emph{adaptive analyst} (who selects queries based on the transcript so far) and a randomized mechanism (which returns a single bit \textsf{yes}/\textsf{no}).

\begin{itemize}
    \item \textbf{Initialization.}
    Given threshold $\lambda$, sample $\xi_{0}\sim \mathrm{Lap}(\sigma)$, set $\lambda_{0}=\lambda+\xi_{0}$, and initialize $\mathrm{count}=0$.

    \item \textbf{Analyst's move (round $i$).}
    Given the previous responses and the current state of the update rule, the analyst specifies a query
    $q_i:\mathcal{X}^n\to\mathbb{R}$ of the form
    \[
        q_i(S_n)\;=\;P_{S_n}(g_i)\Bigl[L_{S_n}(f_i\mid g_i)-L_{S_n}(h_i\mid g_i)\Bigr],
    \]
    where $(g_i,h_i,f_i)$ are chosen according to the following logic:

        If the last response was \textsf{yes}, set $f_i=[g_{i-1},h_{i-1},f_{i-1}]$ and enumerate $(g_i,h_i)\in\mathcal{G}\times\mathcal{H}$ from the beginning.
        
        If the last response was \textsf{no}, keep $f_i=f_{i-1}$ and continue enumerating $(g_i,h_i)\in\mathcal{G}\times\mathcal{H}$.
        
        If the last response was \textsf{no} and the enumeration of $\mathcal{G}\times\mathcal{H}$ is exhausted, output $f_i$ and terminate the protocol.

    \item \textbf{Mechanism's move (response to $q_i$).}
    Sample $\mu_i\sim \mathrm{Lap}(2\sigma)$. 
    \par If \(q_i(S_n)+\mu_i
        \ge\;\lambda_{\mathrm{count}}\), output \textsf{yes}, 
    set $\mathrm{count}\leftarrow \mathrm{count}+1$, sample
    $\xi_{\mathrm{count}}\sim \mathrm{Lap}(\sigma)$, and update
    $\lambda_{\mathrm{count}}=\lambda+\xi_{\mathrm{count}}$.
    
    Otherwise output \textsf{no}.
\end{itemize}
Our main result for Algorithm~\ref{alg:shakyprepend} is stated in Theorem~\ref{thm:shakyprepend}.
\begin{theorem}\label{thm:shakyprepend}
Suppose that $\mathcal{H}$ and $\mathcal{G}$ are finite, and define
$L_g^* \coloneqq \min_{h\in\mathcal{H}} L(h \mid g)$.
With probability at least $1-\beta$,
for all $g \in \mathcal{G}$,
\[
L(f\mid g)-L_g^*
\lesssim
\frac{n^{-\frac{2}{5}}\ln n}{P_n(g)}\,
\ln\!\left(\frac{24|\mathcal{G}||\mathcal{H}|}{\beta}\right)^{\frac{2}{5}}
\ln\!\left(\frac{2n|\mathcal{G}||\mathcal{H}|}{\beta}\right)^{\frac{1}{5}}.
\]
\end{theorem}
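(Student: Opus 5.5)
The plan combines three ingredients: (i) a potential argument bounding the number of updates $B$; (ii) Theorem~\ref{thm:DP_adjusted} together with the transfer theorem ``DP implies generalization'' for adaptively chosen low-sensitivity queries; and (iii) a termination analysis that converts the final ``no'' answers into group-wise excess-loss bounds.

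\textbf{Bounding the number of updates.} Each query $q_i(S_n)=\frac1n\sum_{j}g_i(x_j)\bigl[\ell(f_i(x_j),y_j)-\ell(h_i(x_j),y_j)\bigr]$ is an unnormalized empirical average, so it has sensitivity $\Delta\le 2/n$. When an update occurs, the decision list replaces $f_t$ by $h_{t+1}$ on $g_{t+1}$ and leaves it unchanged elsewhere. Hence the empirical risk satisfies
\[
L_n(f_{t+1})=L_n(f_t)-P_n(g_{t+1})\bigl(L_n(f_t\mid g_{t+1})-L_n(h_{t+1}\mid g_{t+1})\bigr).
\]
On the event $\mathcal{E}_{\mathrm{noise}}$ that all Laplace variables used are at most $\lambda/4$ in magnitude, each update therefore decreases $L_n$ by at least $\lambda/2$. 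Since $L_n\in[0,1]$, this gives $B\le k:=2/\lambda$. The number of noise draws is at most $(k+1)|\mathcal{G}||\mathcal{H}|$, so a union bound gives $\Pr[\mathcal{E}_{\mathrm{noise}}^c]\le \beta/3$, provided $\sigma\ln(6(k+1)|\mathcal{G}||\mathcal{H}|/\beta)\lesssim\lambda$. This produces the $\ln(2n|\mathcal{G}||\mathcal{H}|/\beta)$ factor, using $k\le n$. With $\delta'=\beta/3$, Theorem~\ref{thm:DP_adjusted} then shows that the transcript, and hence the final $f$, is $(\epsilon\sqrt{k},\delta+\delta')$-DP, with $\sigma=\frac{2}{n}\sqrt{32\ln(1/\delta)}/\epsilon$.

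\textbf{Generalization.} The transcript determines $f$. For each fixed $(g,h)$, the statistic
\[
\Phi_{g,h}(S)=P_S(g)\bigl(L_S(f\mid g)-L_S(h\mid g)\bigr)
\]
is a $(2/n)$-sensitive empirical mean of a $[-1,1]$-valued function selected by a DP mechanism. I would apply the DP transfer theorem of \citet{dwork15stoc}, in the high-probability form used for Shaky Ladder, to each of the $|\mathcal{G}||\mathcal{H}|$ post-processed queries and take a union bound. This gives, with probability $1-\beta/3$,
\[
|\Phi_{g,h}(S_n)-\Phi_{g,h}(\mathcal{D})|\lesssim \epsilon\sqrt{k}+\sqrt{\ln(|\mathcal{G}||\mathcal{H}|/\beta)/n},
\]
with $\delta$ chosen polynomially small in $\beta/|\mathcal{G}||\mathcal{H}|$; this is the source of the $\ln(24|\mathcal{G}||\mathcal{H}|/\beta)$ factor. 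Next I must pass from the population unnormalized quantity $P(g)\bigl(L(f\mid g)-L(h\mid g)\bigr)$ to the conditional excess loss. Dividing by $P(g)$ and using a multiplicative Chernoff bound $P(g)\gtrsim P_n(g)$, which is valid whenever the claimed bound is nontrivial, yields the $1/P_n(g)$ factor. The comparator $h$ is taken to be $h_g^*=\arg\min_h L(h\mid g)$, which is a fixed function, so no further uniform convergence is needed.

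\textbf{Termination and tuning.} At termination every $(g,h)$ received ``no'', so on $\mathcal{E}_{\mathrm{noise}}$ we have $\Phi_{g,h}(S_n)\le \lambda+\lambda/2$. Combining this with the generalization step gives
\[
P_n(g)\bigl(L(f\mid g)-L_g^*\bigr)\lesssim \lambda+\epsilon\lambda^{-1/2}+n^{-1/2}\sqrt{\ln}.
\]
The noise constraint $\sigma\lesssim\lambda/\ln$ forces $\epsilon\asymp \sqrt{\ln(1/\delta)}\,\ln/(n\lambda)$, so the middle term is $\asymp \ln^{3/2}\,n^{-1}\lambda^{-3/2}$. Balancing $\lambda\asymp n^{-1}\lambda^{-3/2}$ gives $\lambda\asymp n^{-2/5}$, with the log powers $2/5$ and $1/5$ emerging from the same balance; the stray $\ln n$ absorbs $\ln k$. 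I expect the main obstacle to be the generalization step. The transfer theorem must be applied to queries that are chosen after the mechanism finishes, namely via post-processing of $f$, rather than to queries asked during it. It must also be applied in a form that keeps the $\epsilon\sqrt{k}$ dependence additive, and the $\delta+\delta'$ term from the data-dependent stopping rule must be handled correctly, e.g.\ by conditioning on $B\le k$ through a truncated mechanism that coincides with Algorithm~\ref{alg:shakyprepend} on $\mathcal{E}_{\mathrm{noise}}$.
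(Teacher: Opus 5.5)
Your plan is correct and has the same skeleton as the paper's proof. Both bound $B$ by $2/\lambda$ with a potential argument and invoke the generalized SVT guarantee of Theorem~\ref{thm:DP_adjusted}. Both then apply a DP-to-generalization transfer theorem to statistics obtained by post-processing $f$, use the inequality from the final all-\textsf{no} round, and balance $\lambda$ against $\epsilon\lambda^{-1/2}$ under $\sigma\lesssim\lambda/\ln(\cdot)$ to reach $n^{-2/5}$. The middle steps are organized differently in three ways.

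\emph{Which statistic is transferred.} The paper transfers the two $\frac1n$-sensitive statistics $P_n(g)$ and $P_n(g)L_n(f\mid g)$. It then divides by $P_n(g)$ through an exact algebraic identity, so no Chernoff step is needed. It handles the comparator with the three-term decomposition plus the finite-class uniform convergence bound of Corollary~\ref{cor:Cond_Emp_Risk}. You instead transfer the unnormalized excess statistic $\Phi_{g,h_g^*}$ directly, with a fixed population comparator. This removes the need for uniform convergence over $\mathcal{H}$, at the price of a multiplicative Chernoff bound to replace $P(g)$ by $P_n(g)$. That price is harmless, as you note, because the claimed bound is vacuous when $P_n(g)\lesssim\ln(|\mathcal{G}|/\beta)/n$.

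\emph{The data-dependent stopping rule.} The paper feeds the full $\delta+\delta'$ into the transfer theorem. It therefore must make $\delta'=e^{-\lambda/(4\sigma)}\cdot 4/\lambda$ smaller than roughly $\beta\epsilon/|\mathcal{G}|$, which it enforces by taking $\lambda/\sigma\gtrsim \ln n+\ln(|\mathcal{G}||\mathcal{H}|/\beta)$. Your truncation coupling applies the transfer theorem to the mechanism cut off after $k$ updates. That mechanism is $(\epsilon\sqrt{k},\delta)$-DP and outputs the same $f$ on $\mathcal{E}_{\mathrm{noise}}$, so you only need $\Pr[B>k]\le\beta/3$. You are right that one of these two devices is necessary, since $(\delta+\beta/3)/(\epsilon\sqrt{k})$ is not small.

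\emph{The noise event.} Your union bound over every noise draw is more careful than the paper's Lemma~\ref{lemma:update}. That lemma applies Lemma~\ref{lemma:laplace_bound} to the noise $\xi_t'$ of the query that triggered an update. But $\xi_t'$ was selected for crossing the threshold, so it is biased upward (the harmful direction) and is not Laplace-distributed. The extra $\ln(|\mathcal{G}||\mathcal{H}|)$ you pay is absorbed in logarithmic factors the paper already incurs. All of these routes give the same rate up to logarithms.

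One point neither your plan nor the paper addresses: $h_0=\arg\min_h L_n(h)$ is computed from the data without noise, so the map from $S_n$ to $f$ is not literally differentially private. The standard repair is to run your argument for each fixed starting hypothesis $h_0\in\mathcal{H}$ and union bound. Use a common noise array, so that $\mathcal{E}_{\mathrm{noise}}$ is unchanged. This costs only an extra $\ln|\mathcal{H}|$ inside $\ln(1/\delta)$.
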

We now provide a proof sketch for intuition and a complete proof is deferred to Appendix~\ref{sec:proof}.
\begin{proof}[Proof sketch]
\noindent\textbf{Differential-Privacy Property of the Algorithm:}
Let $B$ denote the total number of \emph{updates} made by the algorithm. We first show that $B$ is upper-bounded with high probability:
\begin{lemma}[Upper Bound for Update Times]\label{lemma:update}
Denote $\alpha=\min_h L_n(h)$. Then,
\[
\Pr\!\left[B\leq \frac{2\alpha}{\lambda}\right]\geq 1-e^{-\frac{\lambda}{4\sigma}}\frac{4\alpha}{\lambda}.
\]
\end{lemma}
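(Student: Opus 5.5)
The argument is a potential-function one: the empirical overall loss $L_n(f_t)$ starts at $\alpha=\min_h L_n(h)$ (since $f_0=[1,h_0]$), is always nonnegative, and should drop by roughly $\lambda$ at every update, except when the Laplace noise is atypically large.

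\textbf{Step 1 (one update).} If at round $t$ the pair $(g,h)$ is accepted and we prepend it, then $f_{t+1}$ agrees with $h$ on $g$ and with $f_t$ off $g$. Hence
\[
L_n(f_t)-L_n(f_{t+1}) = P_n(g)\bigl(L_n(f_t\mid g)-L_n(h\mid g)\bigr).
\]
This is exactly the query value $q$. Acceptance means $q+\mu_{t,g,h}\ge \lambda+\xi_t$, so the decrease is at least $\lambda+\xi_t-\xi_t'$, where $\xi_t'=\mu_{t,g,h}$ is the noise on the accepted query.

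\textbf{Step 2 (good event).} Let $E_t=\{\xi_t\ge -\lambda/4\}\cap\{\xi_t'\le \lambda/4\}$. On $E_t$ the decrease is at least $\lambda/2$. A Laplace tail gives $\Pr[\xi_t<-\lambda/4]=\tfrac12 e^{-\lambda/(4\sigma)}$.

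The term $\xi_t'$ is the delicate part, because it is the noise on a query that was \emph{selected} by crossing the threshold, so it is biased upward. Two options:
\begin{itemize}
\item Bound it crudely by the $\mathrm{Lap}(2\sigma)$ tail, $\Pr[\xi'_t>\lambda/4]=\tfrac12 e^{-\lambda/(8\sigma)}$. This would give a worse exponent than stated.
\item Rebalance the split, e.g.\ require $\xi_t\ge-\lambda/2$ and take $\xi_t'$ with its own budget.
\end{itemize}
Since the lemma's exponent is $\lambda/(4\sigma)$ with a factor $4\alpha/\lambda$, I would aim for a per-update failure probability of at most $2e^{-\lambda/(4\sigma)}$, allocating $\lambda/4$ to each noise term. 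If the $2\sigma$ scale breaks this, I would accept a slightly modified constant. This selection bias is the main obstacle. I would handle it by conditioning on the history and the current threshold and bounding the noise of the first accepted query through the tail of a single Laplace variable; because the relevant event concerns only the accepted noise, a union bound over queries is not needed.

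\textbf{Step 3 (counting).} Let $K=\lfloor 2\alpha/\lambda\rfloor$. Suppose the good events $E_1,\dots,E_{K+1}$ hold for the first $K+1$ updates, if that many occur. Then after $K+1$ updates,
\[
L_n \le \alpha-(K+1)\lambda/2<0,
\]
which is impossible, so $B\le 2\alpha/\lambda$. A union bound over at most $K+1\le 2\alpha/\lambda+1$ potential updates, each failing with probability at most about $2e^{-\lambda/(4\sigma)}$, gives a total failure probability of order $e^{-\lambda/(4\sigma)}\cdot 4\alpha/\lambda$. Absorbing the $+1$ into the constant, or assuming $\alpha\ge\lambda/2$ (the case $\alpha<\lambda/2$ is trivial), yields the stated bound.

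Throughout, the union bound is taken over the index of the update, not over the queries. The per-update events are defined on fresh Laplace draws $\xi_t$ together with the accepted $\mu$, so the dependence on the adaptively chosen history never has to be controlled.
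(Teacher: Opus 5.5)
Your argument follows the same route as the paper's proof. The potential $L_n(f_t)$ starts at $\alpha$, and each accepted pair lowers it by exactly its query value $q\ge\lambda+\xi_t-\xi_t'$. A $\lambda/4$ budget for each noise term makes every update worth at least $\lambda/2$, and more than $2\alpha/\lambda$ such updates would force $L_n<0$. The paper finishes by pooling the $\xi_t$ and $\xi_t'$ of the first $2\alpha/\lambda$ rounds into $4\alpha/\lambda$ variables and applying Lemma~\ref{lemma:laplace_bound} as though they were i.i.d.\ $\mathrm{Lap}(\sigma)$. It therefore passes over exactly the issue you flag.

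Your proposed fix, however, is false. Conditioning on the history and on $\lambda_t$ does not give the accepted noise a single-Laplace tail, and the union bound over the queries of a round cannot be skipped. Suppose $q_{t,g,h}\le 0$ for every pair in round $t$ and $\lambda_t>\lambda/4$. This happens, for instance, in round $0$ when $\alpha=0$, or in any round where $f_t$ is already at least as good as the best $h$ on every group. Then every acceptance has $\xi_t'\ge\lambda_t-q_{t,g,h}>\lambda/4$. When all $q_{t,g,h}=0$, an acceptance occurs with conditional probability $1-\bigl(1-\tfrac12 e^{-\lambda_t/(2\sigma)}\bigr)^{|\mathcal{G}||\mathcal{H}|}$. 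This tends to $1$ as $|\mathcal{G}||\mathcal{H}|$ grows, so it is not bounded by $\tfrac12 e^{-\lambda/(8\sigma)}$, let alone $\tfrac12 e^{-\lambda/(4\sigma)}$. Such an update does not lower $L_n$ at all, so your count of good updates breaks exactly there. The accepted $\mu$ is not a fresh draw: it is the first of up to $|\mathcal{G}||\mathcal{H}|$ draws to clear a threshold, which is precisely what makes it large.

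What the argument actually yields comes from pre-sampling the whole noise array and using $\{\xi_t'>\lambda/4\}\subseteq\bigcup_{(g,h)}\{\mu_{t,g,h}>\lambda/4\}$. With $K=\lfloor 2\alpha/\lambda\rfloor$ this gives $\Pr[B>2\alpha/\lambda]\le (K+1)\bigl(\tfrac12 e^{-\lambda/(4\sigma)}+\tfrac12|\mathcal{G}||\mathcal{H}|\,e^{-\lambda/(8\sigma)}\bigr)$. This bound carries a $|\mathcal{G}||\mathcal{H}|$ factor, the weaker exponent coming from the $2\sigma$ scale, and the additive $+1$.

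Your claim that the case $\alpha<\lambda/2$ is trivial is also wrong, and it shows the $+1$ cannot be absorbed. At $\alpha=0$ the stated inequality asserts $B=0$ almost surely. Yet the first enumerated query is accepted with positive probability, because $\mu-\xi_0$ has full support. So the lemma as written is not provable, by your route or the paper's. The honest statement is the weaker bound above, and the $\delta'$ used in Lemma~\ref{lemma:DP_of_shaky} should be adjusted accordingly.
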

This bound follows from the boundedness of the loss and the structure of the update criterion. Combining the result with Theorem~\ref{thm:DP_adjusted}, we prove that Algorithm~\ref{alg:shakyprepend} is differentially private:
\begin{lemma}[Differential Privacy]\label{lemma:DP_of_shaky}
Algorithm~\ref{alg:shakyprepend} is $\Big(\epsilon\sqrt{\frac{2\alpha}{\lambda}},\delta+e^{-\frac{\lambda}{4\sigma}}\frac{4\alpha}{\lambda}\Big)$-DP.
\end{lemma}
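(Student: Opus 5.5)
The plan is to cast Algorithm~\ref{alg:shakyprepend} as an instance of the Generalized Sparse procedure (Algorithm~\ref{alg:adjusted_sparse}). Theorem~\ref{thm:DP_adjusted} then applies with $k=2\alpha/\lambda$ and $\delta'=e^{-\lambda/(4\sigma)}\,4\alpha/\lambda$, where the bound $\Pr[B>k]\le\delta'$ is exactly Lemma~\ref{lemma:update}. Substituting these into Theorem~\ref{thm:DP_adjusted} gives the claimed $\bigl(\epsilon\sqrt{2\alpha/\lambda},\,\delta+e^{-\lambda/(4\sigma)}4\alpha/\lambda\bigr)$ guarantee. Here $\sigma$ is chosen as in Theorem~\ref{thm:DP_adjusted}, $\sigma=\Delta\sqrt{32\ln(1/\delta)}/\epsilon$, with $\Delta$ the sensitivity of the queries.

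\textbf{Step 1: the interactive view matches Algorithm~\ref{alg:adjusted_sparse}.} I will check that the analyst--mechanism protocol described above coincides with Algorithm~\ref{alg:adjusted_sparse}.
\begin{itemize}
\item The threshold noise $\xi\sim\mathrm{Lap}(\sigma)$ is resampled exactly after each $\top$ answer.
\item The query noise $\mu\sim\mathrm{Lap}(2\sigma)$ is fresh for each query.
\item Each query $q_i$ depends only on the transcript of past answers. The index $(g_i,h_i)$ comes from the enumeration order, and $f_i$ is the decision list built from previously accepted pairs, which are themselves functions of the transcript.
\end{itemize}
So the stream is adaptively chosen in the required sense. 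The stopping rule is ``the last answer was $\bot$ and the enumeration of $\mathcal{G}\times\mathcal{H}$ is exhausted,'' which is also a function of the transcript, as Theorem~\ref{thm:DP_adjusted} allows. The output $f_t$ is a deterministic function of the transcript, so by post-processing its privacy is inherited from that of the bit sequence.

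\textbf{Step 2: bound the sensitivity.} Write $q(S)=P_S(g)\bigl(L_S(f\mid g)-L_S(h\mid g)\bigr)=\frac1n\sum_i g(x_i)\bigl(\ell(f(x_i),y_i)-\ell(h(x_i),y_i)\bigr)$. The normalization by $P_n(g)$ cancels the $1/\#_n(g)$, leaving a plain empirical average. Each summand lies in $[-1,1]$, so replacing one record changes $q$ by at most $2/n$, giving $\Delta\le 2/n$. For fixed $(f,g,h)$ specified by the transcript, this holds uniformly.

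\textbf{Step 3 (main obstacle): $k$ and $\delta'$ depend on $\alpha$.} The subtle point is that $\alpha=\min_h L_n(h)$ is data-dependent, whereas Theorem~\ref{thm:DP_adjusted} wants a fixed $k$ and $\delta'$. I would handle this in one of two ways.
\begin{itemize}
\item First option: use Lemma~\ref{lemma:update} with $\alpha$ replaced by its trivial upper bound $1$. The loss is bounded, so $\alpha\le1$.
\item Second option: note that neighboring datasets change $\alpha$ by at most $1/n$, and argue that the statement is to be read with $\alpha$ treated as a fixed bound.
\end{itemize}
Since $\Pr[B>k]\le\delta'$ must hold for every dataset, I would state the lemma with $\alpha$ interpreted as any valid upper bound on $\min_h L_n(h)$ over neighboring datasets, which is at most $1$. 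Once $k$ and $\delta'$ are fixed, the privacy accounting in Theorem~\ref{thm:DP_adjusted} gives $\epsilon\sqrt{k}$ via advanced composition over at most $k$ threshold crossings. Adding the failure probability $\delta'$ completes the proof.
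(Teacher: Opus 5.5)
Your proposal is correct and follows the paper's proof: view the algorithm as the analyst--mechanism game, identify it with Generalized Sparse, bound the query sensitivity, and plug Lemma~\ref{lemma:update} into Theorem~\ref{thm:DP_adjusted} with $k=2\alpha/\lambda$. Your two departures are refinements, not a different route: (i) writing $q(S)=\frac1n\sum_i g(x_i)\bigl(\ell(f(x_i),y_i)-\ell(h(x_i),y_i)\bigr)$ gives sensitivity $2/n$, tighter than the $4/n$ of Lemma~\ref{lemma:tech}; (ii) you note that a data-dependent $\alpha$ cannot appear in a DP guarantee and fix it by taking $\alpha=1$, which Lemma~\ref{lemma:update} allows by monotonicity in $\alpha$ — the paper's proof skips this point but implicitly makes the same choice when it sets $\alpha=1$ in the proof of Theorem~\ref{thm:shakyprepend}.
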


\noindent \textbf{Bounding the target gap:}
    For any group $g$, we decompose the excess risk as
 \begin{small}
      \[
    L(f\mid g)\!-\!\min_{h}L(h\mid g)\!
    =\!
    \Bigl(L_n(f\mid g)-\min_{h}L_n(h\mid g)\Bigr)
    \!+\!
    \Bigl(L(f\mid g)-L_n(f\mid g)\Bigr)
    \!+\!
    \Bigl(\min_{h}L_n(h\mid g)-\min_{h}L(h\mid g)\Bigr).\]
 \end{small}

    The last two terms are controlled by a tight generalization bound for any differentially-private mechanism:
\begin{lemma}[Generalization Bound]\label{lemma:gen_bound}
Suppose $f_B$ is the output of the Algorithm~\ref{alg:shakyprepend} and $n\geq\frac{1}{\epsilon^2\frac{2\alpha}{\lambda}}\ln\Big(\frac{4\epsilon\sqrt{\frac{2\alpha}{\lambda}}}{\delta+e^{-\frac{\lambda}{4\sigma}}\frac{4\alpha}{\lambda}}\Big)$.
\[
    \Pr\!\left[\max_{g\in\mathcal{G}}
\left|L_n(f_B\mid g)-L(f_B\mid g)\right|>\frac{36\epsilon\sqrt{\frac{2\alpha}{\lambda}}}{P_n(g)}\right] <\frac{2|\mathcal{G}|(\delta+e^{-\frac{\lambda}{4\sigma}}\frac{4\alpha}{\lambda})}{\epsilon}.
\]
\end{lemma}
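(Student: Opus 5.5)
The plan is to derive the bound from the standard transfer theorem for differentially private mechanisms (DP implies generalization for low-sensitivity statistical queries), in the form of \citet{dwork15stoc} as sharpened by Bassily et al. That theorem says: if $\mathcal{M}$ is $(\epsilon',\delta')$-DP and outputs a statistical query $\phi:\mathcal{X}\times\mathcal{Y}\to[0,1]$, and $n\ge \frac{1}{\epsilon'^2}\ln\frac{4\epsilon'}{\delta'}$, then
\[
\Pr\bigl[\,|\phi(S_n)-\phi(\mathcal{D})|>18\epsilon'\,\bigr]<\frac{\delta'}{\epsilon'} .
\]
Here $\phi(S_n)=\frac1n\sum_i\phi(x_i,y_i)$ is the empirical average and $\phi(\mathcal{D})$ is its expectation.

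We set $\epsilon'=\epsilon\sqrt{2\alpha/\lambda}$ and $\delta'=\delta+e^{-\lambda/(4\sigma)}\frac{4\alpha}{\lambda}$. By Lemma~\ref{lemma:DP_of_shaky}, Algorithm~\ref{alg:shakyprepend} is $(\epsilon',\delta')$-DP, and the sample-size assumption in the statement is exactly the transfer theorem's hypothesis.

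Fix $g\in\mathcal{G}$. We will apply the transfer theorem to two queries, both obtained from $f_B$ by post-processing, so the DP guarantee is preserved.
\begin{itemize}
\item The first is $\phi^{(1)}_g(x,y)=g(x)\,\ell(f_B(x),y)\in[0,1]$. Its empirical average is $A_n:=P_n(g)L_n(f_B\mid g)$ and its population value is $A:=P(g)L(f_B\mid g)$.
\item The second is $\phi^{(2)}_g(x,y)=g(x)$. It does not depend on the output at all (a constant post-processing), with empirical average $P_n(g)$ and population value $P(g)$.
\end{itemize}
Each query has sensitivity $1/n$ on the empirical scale. The transfer theorem therefore gives $|A_n-A|\le 18\epsilon'$ and $|P_n(g)-P(g)|\le 18\epsilon'$, each except with probability $\delta'/\epsilon'$.

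Next we convert these additive bounds into a bound on conditional losses. Writing $L_n(f_B\mid g)=A_n/P_n(g)$ and $L(f_B\mid g)=A/P(g)$,
\[
\left|\frac{A_n}{P_n(g)}-\frac{A}{P(g)}\right|
\le \frac{|A_n-A|}{P_n(g)}+\frac{A}{P(g)}\cdot\frac{|P(g)-P_n(g)|}{P_n(g)}
\le \frac{18\epsilon'+18\epsilon'}{P_n(g)},
\]
where the last step uses $A/P(g)=L(f_B\mid g)\le 1$ because the loss is bounded. This gives the threshold $36\epsilon'/P_n(g)$. Finally, a union bound over the $2|\mathcal{G}|$ events (two per group) yields the failure probability $2|\mathcal{G}|\delta'/\epsilon'$, which is the bound stated in the lemma.

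The main obstacle is justifying that the transfer theorem applies to a data-dependently selected query. The predictor $f_B$ is chosen adaptively, with a random number of rounds, and $\delta'$ carries the failure probability of the update-count bound. We need $\phi^{(1)}_g$ to be the output of a single $(\epsilon',\delta')$-DP mechanism on $S_n$. This holds because Lemma~\ref{lemma:DP_of_shaky} already accounts for the stopping rule through Theorem~\ref{thm:DP_adjusted}, and the map from $f_B$ to $\phi^{(1)}_g$ is post-processing. A second point to check is the transfer theorem's normalization: its precise constants (18, the $\ln(4\epsilon'/\delta')$ condition, and whether the query must be $[0,1]$-valued) should be matched to the version being cited.
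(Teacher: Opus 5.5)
Your proposal is correct and follows the paper's proof essentially step for step. The paper likewise applies Theorem~7.2 of Bassily et al.\ to the two post-processed $\frac{1}{n}$-sensitive statistics $\frac1n\sum_i g(x_i)\ell(f_B(x_i),y_i)$ and $P_n(g)$, with $\epsilon'=\epsilon\sqrt{2\alpha/\lambda}$ and $\delta'=\delta+e^{-\lambda/(4\sigma)}\frac{4\alpha}{\lambda}$; it uses the same decomposition $|L_n(f_B\mid g)-L(f_B\mid g)|\le \frac{1}{P_n(g)}\bigl(|P_n(g)L_n(f_B\mid g)-P(g)L(f_B\mid g)|+|P_n(g)-P(g)|\bigr)$ and union-bounds over $\mathcal{G}$; your $18\epsilon'+18\epsilon'$ accounting is cleaner than the paper's, which splits into $9\epsilon'$ events the cited theorem does not directly control, and, as in the paper, going from your $2|\mathcal{G}|\delta'/\epsilon'$ to the stated $2|\mathcal{G}|\delta'/\epsilon$ requires $\epsilon'\ge\epsilon$, i.e.\ $\lambda\le 2\alpha$.
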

The empirical gap term is controlled by the stopping rule together with tail bounds for the injected Laplace noise. Combining this with Lemma~\ref{lemma:gen_bound}, we obtain an error bound conditioned on the number of updates:
\begin{lemma}[Error Bound]\label{lemma:err_bound}
Denote $a=\max_{g,h}-\mu_{B,g,h}$, $L_g^*=\min_{h\in\mathcal{H}}L(h\!\mid g)$. When $n\geq\frac{\lambda}{2\epsilon^2\alpha}\ln(\frac{4\epsilon\sqrt{\frac{2\alpha}{\lambda}}}{\delta})$,
\begin{small}
    \[\Pr\!\left[\max_{g\in\mathcal{G}}(L(f_B\!\mid g)-L_g^*)
>\frac{36\epsilon\sqrt{\frac{2\alpha}{\lambda}}+\lambda+a+\xi_B+\epsilon\sqrt{\frac{2}{\lambda}}}{P_n(g)}\right] \leq \mathcal{O}\!\left(\frac{|\mathcal{G}|(\delta+e^{-\frac{\lambda}{4\sigma}}\frac{4\alpha}{\lambda})}{\epsilon}\right)
+ 8|\mathcal{H}|^2|\mathcal{G}|^2 e^{-\frac{2\epsilon^2 n}{81\lambda}}.
\]
\end{small}

\end{lemma}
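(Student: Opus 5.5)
The plan is to prove Lemma~\ref{lemma:err_bound} by splitting the excess risk into three terms, as in the sketch of Theorem~\ref{thm:shakyprepend}, and bounding each separately. For each $g$ write
\[
L(f_B\mid g)-L_g^* = \bigl(L(f_B\mid g)-L_n(f_B\mid g)\bigr) + \bigl(L_n(f_B\mid g)-\min_h L_n(h\mid g)\bigr) + \bigl(\min_h L_n(h\mid g)-L_g^*\bigr).
\]
Every term is multiplied by $P_n(g)$ so that it matches the form of the threshold query. The three terms are handled as follows.

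\textbf{First term.} This is the adaptively chosen quantity. I bound it directly with Lemma~\ref{lemma:gen_bound}. The sample-size hypothesis stated here implies the one required there, because dropping the $e^{-\lambda/(4\sigma)}$ term in the denominator only enlarges the logarithm. The result is $P_n(g)\,|L(f_B\mid g)-L_n(f_B\mid g)|\le 36\epsilon\sqrt{2\alpha/\lambda}$ for all $g$, except on an event of probability $O\bigl(|\mathcal G|(\delta+e^{-\lambda/(4\sigma)}4\alpha/\lambda)/\epsilon\bigr)$.

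\textbf{Second term.} This uses the stopping rule. The algorithm returns $f_B$ only after a full sweep over $\mathcal G\times\mathcal H$ in round $B$ produces no threshold crossing. So for every $(g,h)$,
\[
P_n(g)\bigl(L_n(f_B\mid g)-L_n(h\mid g)\bigr) < \lambda_B-\mu_{B,g,h} = \lambda+\xi_B-\mu_{B,g,h}\le \lambda+\xi_B+a.
\]
Taking $h$ to be the empirical minimizer on $g$ bounds the middle term by $(\lambda+a+\xi_B)/P_n(g)$. This holds deterministically, so it costs no failure probability; the noise variables $a$ and $\xi_B$ are simply left in the bound, as the statement does.

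\textbf{Third term.} This involves no adaptivity. Let $\hat h_g=\arg\min_h L_n(h\mid g)$ and $h_g^*=\arg\min_h L(h\mid g)$. Then
\[
\min_h L_n(h\mid g)-L_g^*\le L_n(h_g^*\mid g)-L(h_g^*\mid g),
\]
which is a deviation for a fixed, data-independent pair. I then need $P_n(g)\bigl(L_n(h\mid g)-L(h\mid g)\bigr)\le\epsilon\sqrt{2/\lambda}$ uniformly over $(g,h)$.

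This is the step I expect to be the main obstacle. The quantity $P_n(g)L_n(h\mid g)=\frac1n\sum_i g(x_i)\ell(h(x_i),y_i)$ is an empirical mean of $[0,1]$ variables, so Hoeffding controls it. But $P_n(g)L(h\mid g)$ is not its expectation, since $P_n(g)$ is itself random. I would therefore write
\[
P_n(g)L(h\mid g)=P(g)L(h\mid g)+\bigl(P_n(g)-P(g)\bigr)L(h\mid g),
\]
and apply Hoeffding separately to $\frac1n\sum_i g(x_i)\ell(h(x_i),y_i)$ and to $P_n(g)$. Each application gives deviation at most $\tfrac{\epsilon}{2}\sqrt{2/\lambda}$ with failure probability of order $e^{-2n\epsilon^2/(c\lambda)}$, for a suitable constant $c$. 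The constant $81$ and the factor $|\mathcal H|^2|\mathcal G|^2$ in the stated bound suggest the authors take a union bound over pairs of pairs, for instance comparing $\hat h_g$ and $h_g^*$ jointly. I would arrange the constants in the split so that the total failure probability is at most $8|\mathcal H|^2|\mathcal G|^2e^{-2\epsilon^2 n/(81\lambda)}$, which is certainly enough.

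Finally, I add the three bounds and take a union bound over the failure events from Lemma~\ref{lemma:gen_bound} and from the Hoeffding step. This gives exactly the threshold $\bigl(36\epsilon\sqrt{2\alpha/\lambda}+\lambda+a+\xi_B+\epsilon\sqrt{2/\lambda}\bigr)/P_n(g)$ with the stated probability.
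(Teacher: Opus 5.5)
Your proposal is correct. For the first two terms it matches the paper: Lemma~\ref{lemma:gen_bound} handles the adaptive term, and the no-crossing condition in the final sweep handles the empirical gap. Your check that the sample-size hypothesis here implies the one in Lemma~\ref{lemma:gen_bound} is a step the paper leaves implicit.

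You differ from the paper on the third term. The paper applies Corollary~\ref{cor:Cond_Emp_Risk}, the finite-class form of the Tosh--Hsu conditional uniform convergence bound, with $\delta=8(|\mathcal{H}||\mathcal{G}|)^2e^{-2\epsilon^2n/(81\lambda)}$. This choice makes $2\ln(|\mathcal{G}||\mathcal{H}|)+\ln(8/\delta)=2\epsilon^2n/(81\lambda)$, so the corollary's $9\sqrt{\cdot/(nP_n(g))}$ becomes exactly $\epsilon\sqrt{2/(\lambda P_n(g))}$. The paper then loosens this using $1/\sqrt{P_n(g)}\le 1/P_n(g)$. So the $81$ is $9^2$, and the factor $|\mathcal{H}|^2|\mathcal{G}|^2$ comes from cancelling the log term inside the corollary, not from a union bound over pairs of pairs as you guessed.

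Your route instead reduces one-sidedly to the fixed population minimizer $h_g^*$. It then applies Hoeffding to the unnormalized averages $\frac1n\sum_i g(x_i)\ell(h_g^*(x_i),y_i)$ and $P_n(g)$ with $t=\frac{\epsilon}{2}\sqrt{2/\lambda}$. This gives failure probability at most $4|\mathcal{G}|e^{-n\epsilon^2/\lambda}$, which is dominated by the stated $8|\mathcal{H}|^2|\mathcal{G}|^2e^{-2\epsilon^2n/(81\lambda)}$, so your constants do close.

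\textbf{What your route gains:} it is self-contained and elementary, and it needs no uniformity over $\mathcal{H}$. The paper's absolute-value form of this term requires a bound uniform over all $(h,g)$.

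\textbf{What it gives up:} it only controls the $P_n(g)$-scaled deviation, so it never produces the sharper $1/\sqrt{P_n(g)}$ intermediate rate that the paper's tool gives. A Hoeffding-plus-union-bound argument also does not extend to infinite classes the way the shattering-coefficient bound does. Neither matters here, since the lemma's threshold is already divided by $P_n(g)$ and $\mathcal{H},\mathcal{G}$ are finite.
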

With an appropriate choice of \(\lambda\), \(\epsilon\) and $\delta$, we obtain the final theorem.
\end{proof}

\subsection{Fractional Variants of Prepend-Like Algorithms}
When the label space is numerical (e.g., $\mathbb{R}^k$), the update rule of \ShakyPrepend can be written as
\[
f_{t+1}(x)
= f_t(x) + g_{t+1}(x)\bigl(h_{t+1}(x) - f_t(x)\bigr),
\]
which naturally suggests a fractional variant obtained by introducing a step-size parameter:
\[
f_{t+1}(x)
= f_t(x) + \eta\, g_{t+1}(x)\bigl(h_{t+1}(x) - f_t(x)\bigr),
\]
where $\eta\in(0,1]$ is the step size hyperparameter. The previous stopping rule can also be viewed from a complementary perspective: since $L_n(h\mid g)=L_n([g,h,f_t]\mid g)$, the algorithm stops precisely when updating along any pair $(g,h)$ fails to improve the empirical conditional loss on the corresponding group $g$. Thus, the stopping rule can be easily extended in the fractional version by replacing $L_n(h\mid g)$ with $L_n(f'\mid g)$, where $f'=f_t + \eta\, g_{t+1}\bigl(h_{t+1} - f_t\bigr)$.

When $\eta<1$, each update moves only partially toward the group-specific best response $h_{t+1}$. In this regime, it is natural to continue updating whenever the update promises a substantial reduction in the group-conditional loss, rather than updating only when the current predictor is substantially worse than the best response. Overall, the $\eta$-update performs a fractional interpolation between the current predictor and the target update. The \FractionalShakyPrepend (the full algorithm is deferred to Appendix~\ref{sec:fraction}) has a similar theoretical guarantee as the original \ShakyPrepend, as shown in Theorem~\ref{thm:fraction}. Even without improving the theory, step-size tuning can substantially boost practical performance (Section~\ref{sec:Experiment}), since smaller $\eta$ yields a richer family of intermediate predictors.

\begin{theorem}\label{thm:fraction}
Suppose that $\mathcal{H}$ and $\mathcal{G}$ are finite, the loss function $\ell$ is convex with respect to $f(x)$, and define
$L_g^* \coloneqq \min_{h\in\mathcal{H}} L(h \mid g)$. With probability $1-\mathcal{O}(\beta)$, Algorithm~\ref{alg:smoothedshakyprepend} terminates with predictor $f$ satisfying, for all $g\in\mathcal{G}$,
\begin{small}
    \[
L(f\mid g)-L_g^*
\lesssim
\frac{n^{-\frac{2}{5}}\ln n}{\eta P_n(g)}\,
\ln\!\left(\frac{4|\mathcal{G}||\mathcal{H}|}{\beta}\right)^{\frac{2}{5}}
\ln\!\left(\frac{2n|\mathcal{G}||\mathcal{H}|}{\beta}\right)^{\frac{1}{5}}.
\]
\end{small}
\end{theorem}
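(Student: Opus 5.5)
We follow the same four-stage route as the proof sketch of Theorem~\ref{thm:shakyprepend}, adapting each stage to the fractional update $f_{t+1}=f_t+\eta\,g_{t+1}(h_{t+1}-f_t)$ and the modified test statistic $P_n(g)\bigl(L_n(f_t\mid g)-L_n(f'\mid g)\bigr)$ with $f'=f_t+\eta g(h-f_t)$.

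\textbf{Step 1: bounding the number of updates.} We use a potential argument on the overall empirical loss $L_n(f_t)$. Since $f_{t+1}$ coincides with $f_t$ off $g_{t+1}$, we have
\[
L_n(f_t)-L_n(f_{t+1})=P_n(g_{t+1})\bigl(L_n(f_t\mid g_{t+1})-L_n(f_{t+1}\mid g_{t+1})\bigr).
\]
This is exactly the query that crossed the noisy threshold, so it is at least $\lambda+\xi_t-\xi_t'$. On the event that all Laplace variables are below $\lambda/4$ in magnitude, each update decreases $L_n$ by at least $\lambda/2$. Because $L_n(f_0)=\alpha$ and $L_n\ge 0$, this gives $B\le 2\alpha/\lambda$, with the same failure probability as in Lemma~\ref{lemma:update}.

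\textbf{Step 2: privacy and generalization.} Each query is a difference of two empirical averages of $[0,1]$-valued quantities, so it has sensitivity $O(1/n)$. The interaction is then an instance of Algorithm~\ref{alg:adjusted_sparse}, and Theorem~\ref{thm:DP_adjusted} together with Step~1 gives the analogue of Lemma~\ref{lemma:DP_of_shaky}. The generalization bound of Lemma~\ref{lemma:gen_bound} then transfers verbatim, because it uses only the DP property of the transcript, and $f_B$ is a deterministic function of that transcript.

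\textbf{Step 3: the empirical optimization gap.} This is where convexity and the factor $1/\eta$ enter, and it is the main obstacle. At termination, every pair $(g,h)$ fails the noisy test, so
\[
P_n(g)\bigl(L_n(f\mid g)-L_n(f+\eta g(h-f)\mid g)\bigr)\le \lambda+\xi_B+a.
\]
On the set where $g=1$ the update equals the convex combination $(1-\eta)f+\eta h$, and convexity of $\ell$ in its first argument gives
\[
L_n\bigl((1-\eta)f+\eta h\mid g\bigr)\le (1-\eta)L_n(f\mid g)+\eta L_n(h\mid g).
\]
Substituting, the left-hand side is at least $\eta\,P_n(g)\bigl(L_n(f\mid g)-L_n(h\mid g)\bigr)$. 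Taking $h$ to be the empirical minimizer on $g$ yields
\[
L_n(f\mid g)-\min_h L_n(h\mid g)\le \frac{\lambda+\xi_B+a}{\eta P_n(g)}.
\]

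\textbf{Step 4: combining the terms.} We control $\min_h L_n(h\mid g)-\min_h L(h\mid g)$ by uniform convergence over the finite class $\mathcal{H}\times\mathcal{G}$, via Hoeffding's inequality or Theorem~\ref{thm:Cond_Emp_Risk}. The Laplace terms $a$ and $\xi_B$ are bounded by $O(\sigma\ln(n|\mathcal{G}||\mathcal{H}|/\beta))$ via a union bound. This gives the fractional analogue of Lemma~\ref{lemma:err_bound}, in which only the optimization term carries the extra factor $1/\eta$.

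\textbf{Step 5: choice of parameters.} We take $\sigma\asymp\sqrt{\ln(1/\delta)}/(n\epsilon)$ and require $\lambda\gtrsim\sigma\ln(n|\mathcal{G}||\mathcal{H}|/\beta)$ so that the noise terms are dominated by $\lambda$. We then balance $\epsilon\sqrt{\alpha/\lambda}$ against $\lambda/\eta$, with $\alpha\le 1$, and set $\delta\asymp\beta\epsilon/|\mathcal{G}|$. This gives $\lambda\asymp n^{-2/5}$ up to logarithmic factors, matching the computation for Theorem~\ref{thm:shakyprepend}. Since $\eta\le 1$, the resulting bound is at most the stated rate divided by $\eta P_n(g)$.

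Besides Step 3, the other delicate point is ensuring the $O(1/n)$ sensitivity claim in Step 2 is correct. It holds because the two terms $P_n(g)L_n(\cdot\mid g)$ are plain averages over the full sample, so no division by the group count $\#_n(g)$ occurs.
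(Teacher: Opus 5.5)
Your proposal is correct and follows essentially the same route as the paper. The paper's proof likewise applies Jensen's inequality to the failed stopping test to obtain $\eta P_n(g)\bigl(L_n(f_B\mid g)-L_n(h\mid g)\bigr)\le \lambda+a+\xi_B$, asserts that the update-count, privacy, and generalization lemmas carry over unchanged, and reuses the parameter choices from Theorem~\ref{thm:shakyprepend}; your write-up is cleaner on the direction of the termination inequality and on the $2/n$ sensitivity bound. One point you share with the paper is that $f_B$ is not purely a function of the private transcript, because $h_0=\arg\min_h L_n(h)$ is chosen non-privately; you can patch this by running the privacy/generalization argument for each fixed $h_0\in\mathcal{H}$ with the same noise and taking a union bound, which adds only a $\ln|\mathcal{H}|$ term that the existing logarithmic factors already absorb.
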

\begin{remark}
Beyond \FractionalShakyPrepend, analogous variants of \Prepend\ and \GroupPrepend\ can be proposed and admit the same guarantees by essentially the same proof.
\end{remark}
\subsection{Relation with Gradient Boosting}
Let $A(f)$ be an objective functional of a predictor $f$. Gradient boosting aims to solve
$\inf_{f\in \operatorname{span}(S)} A(f)$,
where $S$ is a collection of real-valued functions and
\[
\operatorname{span}(S)
=
\left\{
\sum_{j=1}^{m} w^{j} f^{j}
:\; f^{j}\in S,\; w^{j}\in \mathbb{R},\; m\in \mathbb{Z}^{+}
\right\}.
\]
Starting from an initial predictor $f$, at each iteration the algorithm approximately solves
\(
\arg\min_{a\in\mathbb{R},\, g\in S} A(f + a g),
\)
and then updates $f \leftarrow f + a g$.

Algorithm~\ref{alg:shakyprepend} and Algorithm~\ref{alg:smoothedshakyprepend} can both be interpreted as a fixed-step-size variant of gradient boosting, equipped with an early-stopping rule that acts as a regularizer, and with update directions that depend on the current predictor. Concretely, at iteration $t$ we choose an update direction from the set
\[
\left\{\, \eta\,\mathbbm{1}_{\{g(x)=1\}}\,(h_g - f_t)\ :\ g\in \mathcal{G} \,\right\},
\]
where $\eta$ is the step size, $h_g=\arg\min_{h\in\mathcal{H}}L_n(h|g)$, $f_t$ denotes the current predictor. Instead of updating globally at each step, the algorithm updates only locally—restricted to a selected group—along the corresponding direction in this set, yielding strong group-wise guarantees.

\section{Experiments}
\label{sec:Experiment}
\par We present simulation experiments evaluating \ShakyPrepend and other multi-group learning algorithms from three angles: (i) \textbf{practical guidance}, focusing on parameter tuning and the choice of tuning criterion; (ii) \textbf{properties}, including group-size and spatial adaptivity;  and (iii) \textbf{variants}, assessing whether the fractional version improves performance in practice.
The code is publicly available at \href{https://github.com/lujingz/shaky_prepend}{https://github.com/lujingz/shaky\_prepend}.

\noindent \textbf{Baselines.}
We consider the multi-group learning methods of~\citet{tosh2024simplenearoptimalalgorithmshidden}: \Prepend, \GroupPrepend, and \SleepingExpert, as reviewed in Section~\ref{sec:Shaky Prepend}. Among them, \SleepingExpert is a randomized, online-learning-based algorithm that attains optimal rates, but it is less space- and time-efficient in practice due to storing intermediate policies and requiring sampling at inference.

\noindent \textbf{Plot conventions.}
Unless otherwise stated, points denote the mean over independent runs and error bars show $\pm$ one standard error; each run is evaluated on a fixed test set.

\subsection{Criterion Selection: Guidance for Hyperparameter Tuning}
\par In practice, since it is impossible to calculate the exact value of the hyperparameters, the standard solution is to use a validation set to fine-tune them. However, in the multi-group setting, there are many natural choices of the criterion when fine-tuning. We mainly discuss two criteria here: the global loss $L(f)$ and the worst group loss $\max_{g\in\mathcal{G}}L(f|g)$. Our experiments suggest the following intuitive guidelines: with enough validation data, tune for the application’s target metric; with limited data, worst-group tuning suffers high variance, so tuning by the global loss is often more reliable.
  \begin{figure}[ht]
      \centering
      \includegraphics[width=0.7\linewidth]{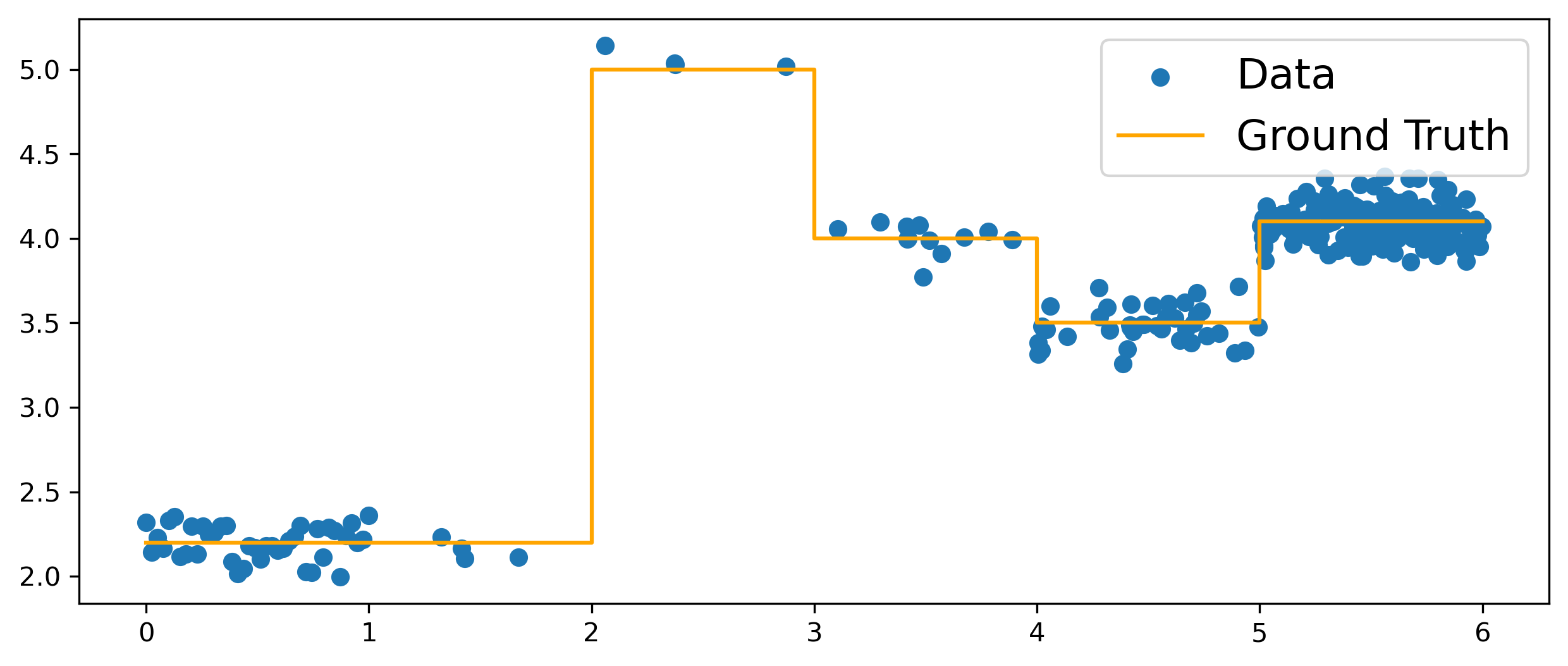}
      \includegraphics[width=0.65\linewidth]{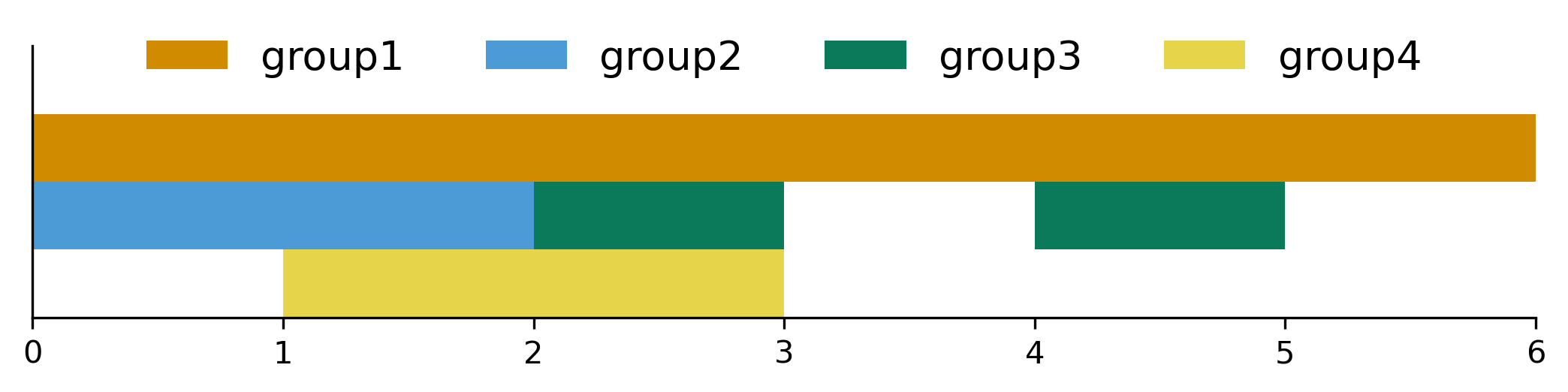}
      \caption{\textbf{Criterion-selection setup.} \emph{Top:} Ground truth (orange) and noisy training samples (blue). \emph{Bottom:} Groups are constructed so that the worst-group loss is not aligned with the total loss.}
      \label{fig:criterion_selection_setup}
  \end{figure}
The setup is illustrated in Figure~\ref{fig:criterion_selection_setup}. It is designed so that optimizing the total loss conflicts with optimizing the worst-group loss: one candidate solution attains a smaller total loss, while another achieves a smaller worst-group loss. A more detailed discussion is deferred to Appendix~\ref{sec:experiment details}. 
\begin{figure}[ht]
    \centering
    \includegraphics[width=0.48\linewidth]{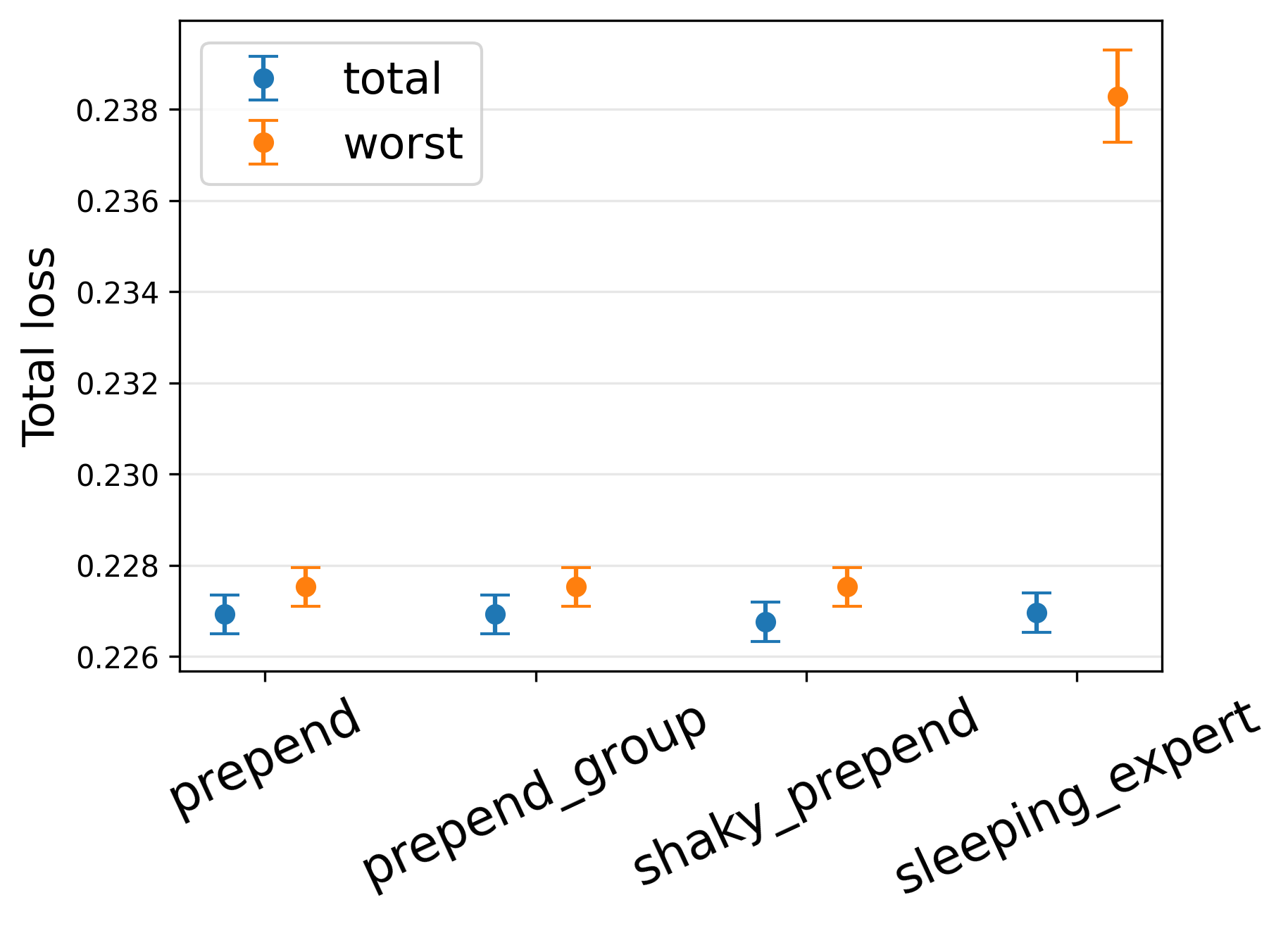}
    \includegraphics[width=0.48\linewidth]{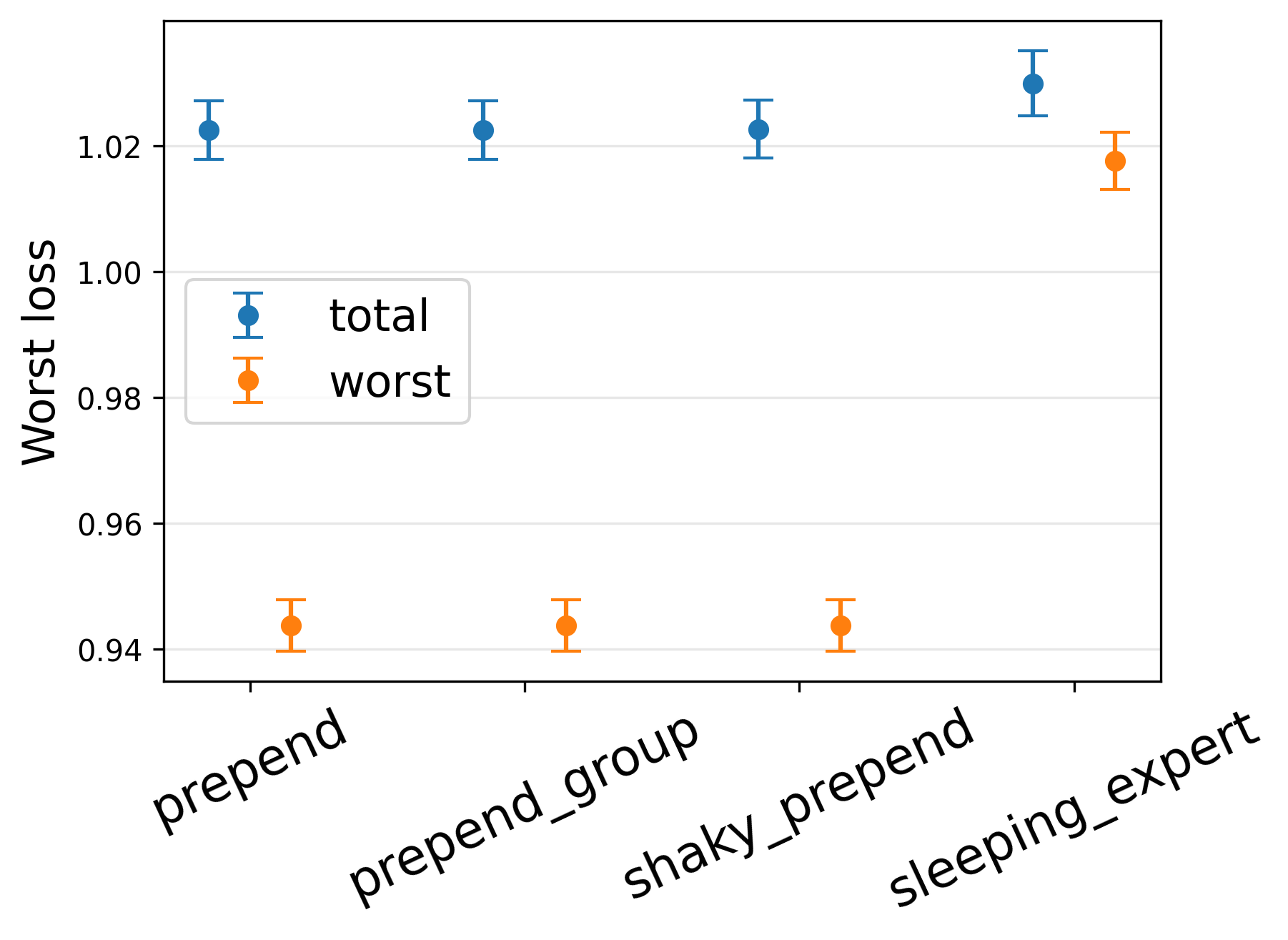}
\caption{\textbf{Criterion selection (large sample).}
Total loss (left) and worst-group loss (right) when tuning hyperparameters by total loss vs.\ worst-group loss (20 runs; 26{,}000 training points per run). }
    \label{fig:criterion_selection}
\end{figure}
\begin{figure}[ht]
    \centering
    \includegraphics[width=0.48\linewidth]{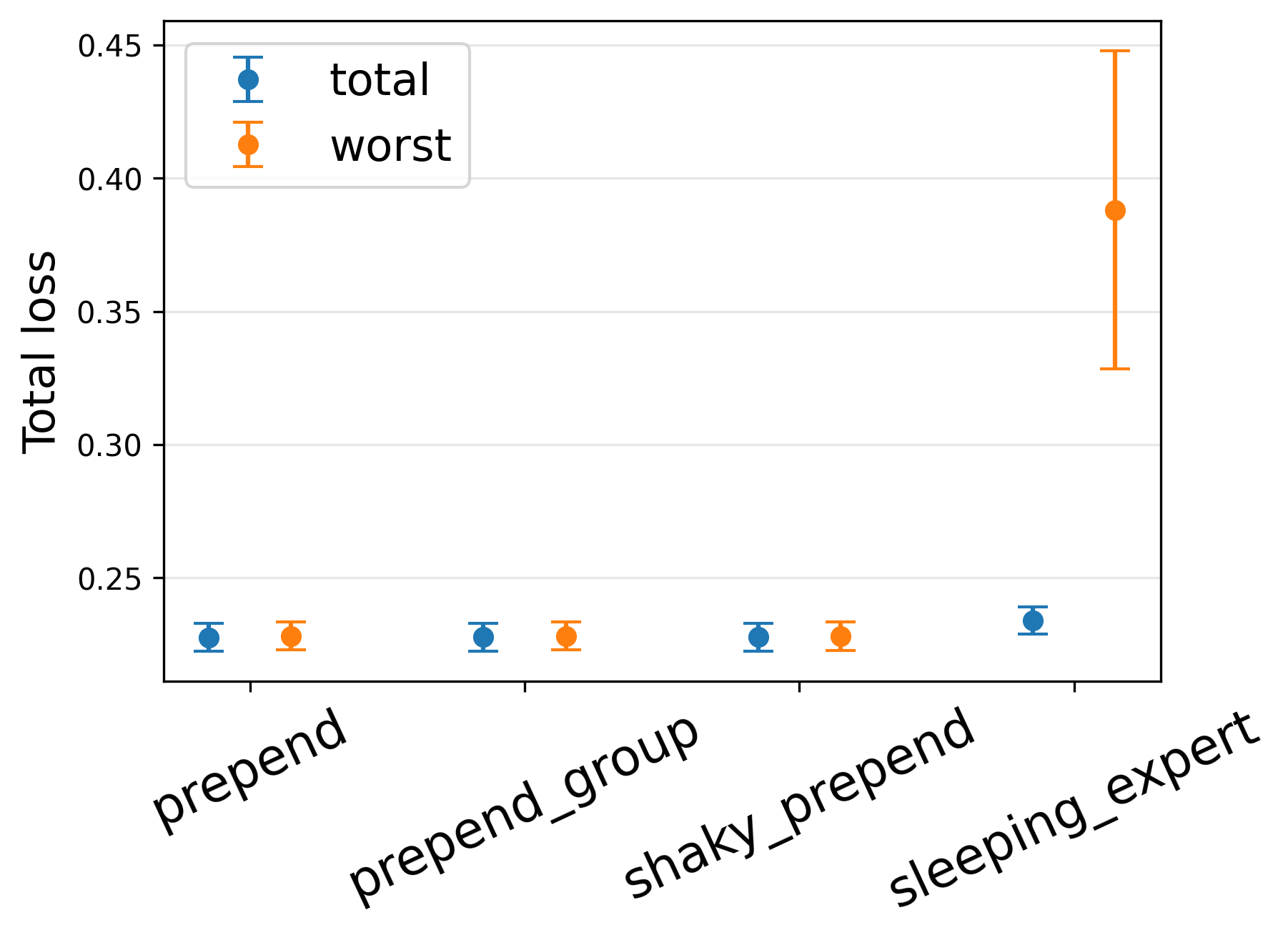}
    \includegraphics[width=0.48\linewidth]{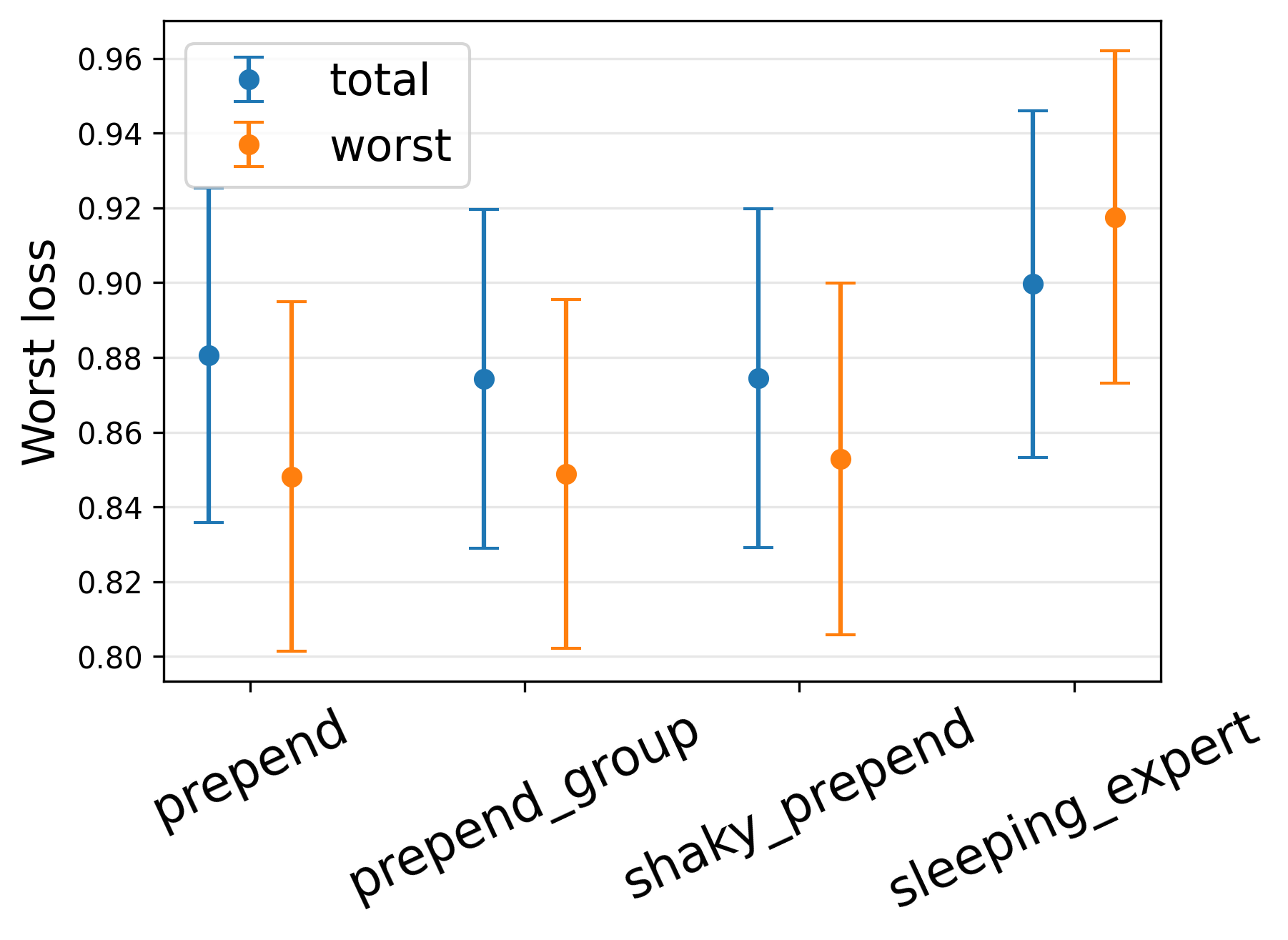}
    \caption{\textbf{Criterion selection (small sample).}
Total loss (left) and worst-group loss (right) when tuning hyperparameters by total loss vs.\ worst-group loss (20 runs; 260 training points per run).}
    \label{fig:criterion_selection2}
\end{figure}

Figure~\ref{fig:criterion_selection} shows that when the sample size is large, tuning hyperparameters for worst-group loss improves worst-group performance, while tuning for total loss yields better total-loss performance. Notably, worst-group tuning is less effective for \SleepingExpert: it substantially increases total loss while only slightly reducing worst-group loss.

When the sample size is small, Figure~\ref{fig:criterion_selection2} shows that worst-loss tuning can be unreliable, especially for \SleepingExpert. In particular, \SleepingExpert tuned by worst loss can even attain a \emph{larger} worst-group loss than when tuned by total loss. This is expected: worst-group loss is a high-variance criterion, so hyperparameter selection based on a noisy estimate can degrade performance. The effect is particularly pronounced for stochastic methods such as \SleepingExpert, whose inherent randomness further reduces stability.

\subsection{Unbalanced-Group Setting: Group-Size Adaptivity of \GroupPrepend and \ShakyPrepend}
\par In many practical settings, the data are group-wise unbalanced, raising the question of whether one should use a potentially biased predictor trained on the larger group or a high-variance predictor trained on the minority group.
\par In this simulation study, we demonstrate that \GroupPrepend and \ShakyPrepend can automatically balance these two extremes, achieving consistently better performance than the original \Prepend method.

\begin{figure}[ht]
    \centering
    \includegraphics[width=.6\linewidth]{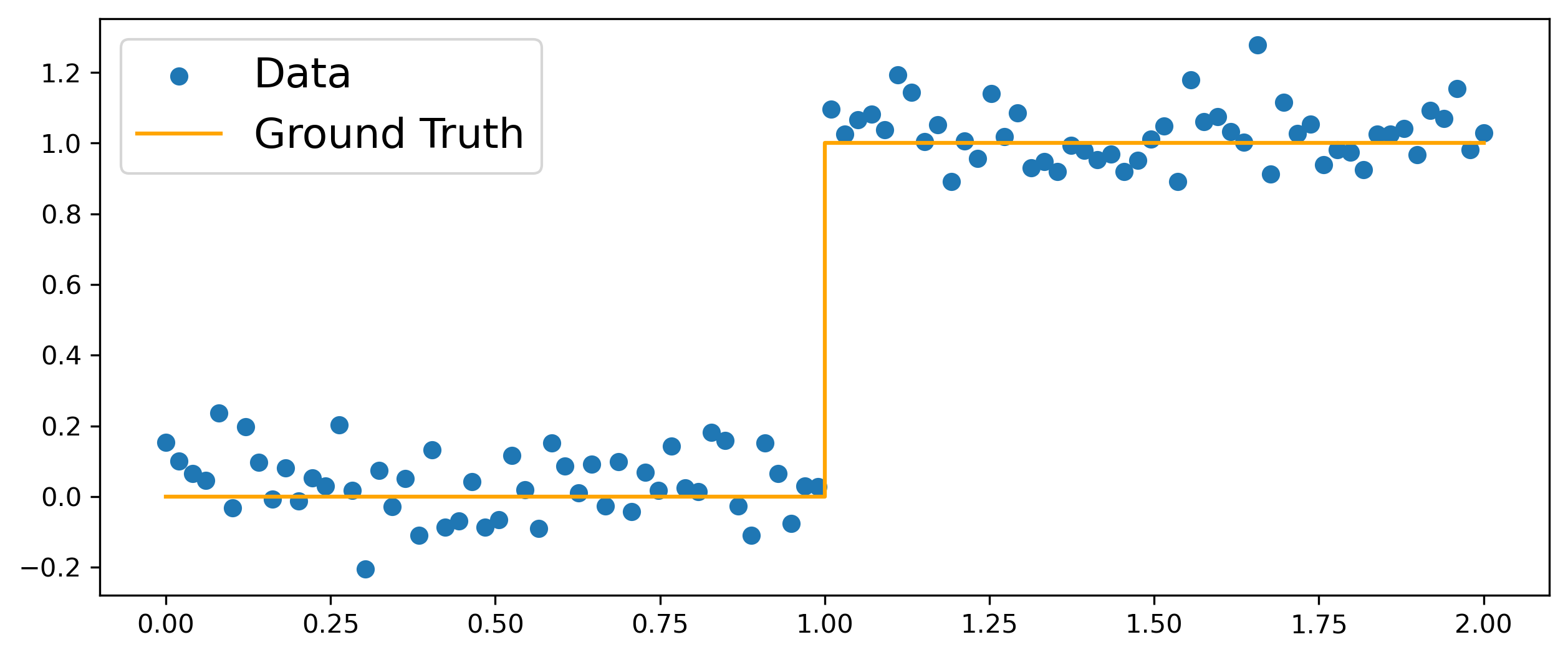}
    \includegraphics[width=.6\linewidth]{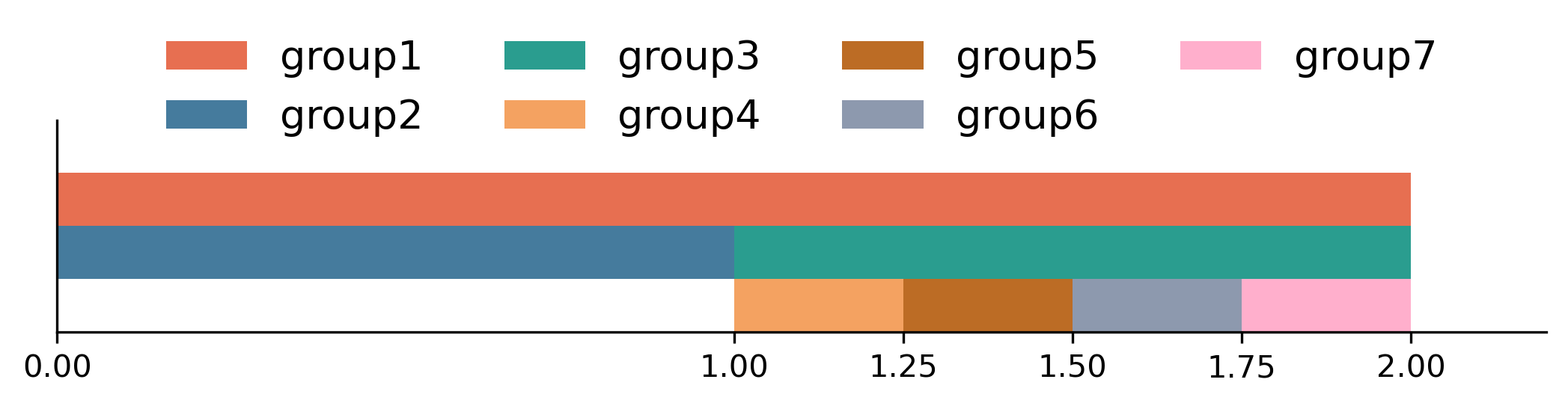}
    \caption{\textbf{Unbalanced-group setup.}
\emph{Top:} Ground truth (orange) and noisy training samples (blue).
\emph{Bottom:} Layered group intervals, where only part of the domain is further refined into smaller subgroups, yielding unbalanced granularity.}
    \label{fig:unbalanced_experiment_construction}
\end{figure}
\par As illustrated in Figure~\ref{fig:unbalanced_experiment_construction}, we use two coarse groups (groups 2 and 3) and four finer groups (groups 4--7) to mimic a multi-granularity group structure (e.g., clinical cohorts where some conditions admit meaningful subtype stratification while others do not).
\begin{figure}[ht]
    \centering
    \includegraphics[width=\linewidth]{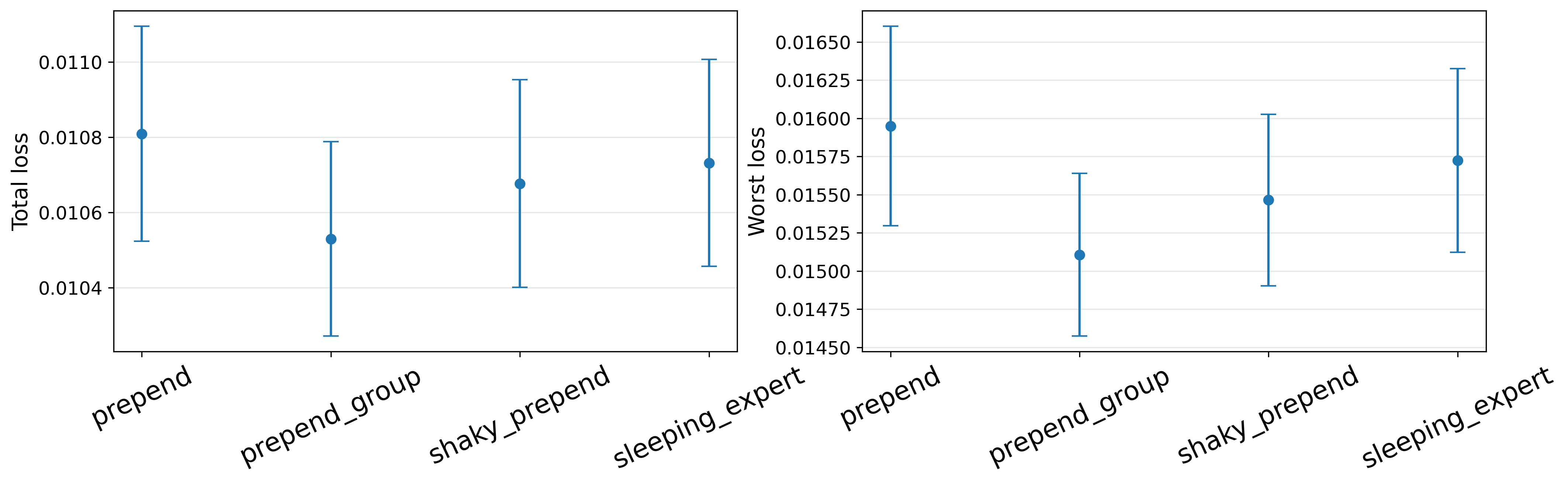}
\caption{\textbf{Unbalanced-group simulation.}
Total loss (left) and worst-group loss (right) for \Prepend, \GroupPrepend, and \ShakyPrepend (20 runs; 120 training points per run).}
    \label{fig:unbalanced_experiment}
\end{figure}
\par Figure~\ref{fig:unbalanced_experiment} illustrates that the \GroupPrepend achieves the best performance on both the total loss and the worst-case loss, followed by \ShakyPrepend, then \SleepingExpert and \Prepend. This ranking is expected: \GroupPrepend and \ShakyPrepend incorporate group size into their stopping rules, which implicitly balances bias and variance, especially for small groups.

\subsection{Spatial Adaptivity}

\par Many real-world targets exhibit \emph{spatial inhomogeneity}, yet the locations and scales of the underlying regions are often unknown in advance. The experiment below demonstrates that, by constructing a rich collection of nested candidate groups, the multi-group learning algorithms tested can adapt to latent spatial features and produce accurate predictions.
\par Figure~\ref{fig:spatial} visualizes the ground-truth signal together with noisy observations, highlighting an underlying piecewise structure. When this spatial structure is not known \emph{a priori}, we consider the candidate group family to be intervals of varying lengths centered at grid points:
\[
\mathcal{G}
=\bigl\{[c-\tfrac{l}{2},\,c+\tfrac{l}{2}] : c\in[0,1]_{0.05},l\in(0,1]_{0.05}\},
\]
where $[0,1]_{0.05}$ denotes the $0.05$-spaced grid on $[0,1]$, i.e.,
\[
[0,1]_{0.05} := \{0, 0.05, 0.10, \ldots, 1\},
\]
and similarly $(0,1]_{0.05} := \{0.05, 0.10, \ldots, 1\}$.
 The goal is to test whether the algorithms can \textbf{automatically} recover the latent spatial structure by selecting suitable centers and interval lengths (i.e., groups). We let $\mathcal{H}$ be the class of groupwise-constant predictors.
\begin{figure}[ht]
    \centering
    \includegraphics[width=0.48\linewidth]{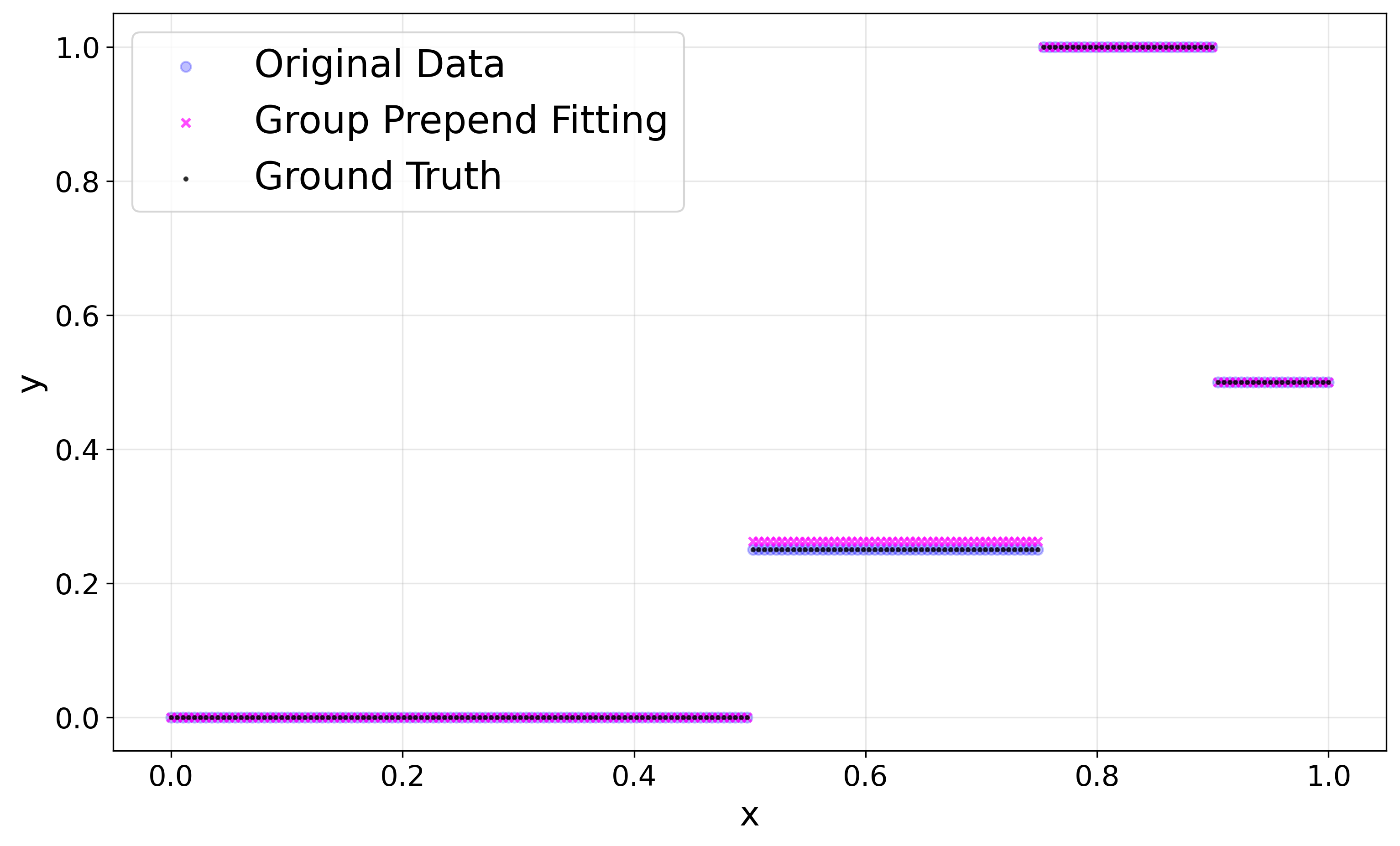}
    \includegraphics[width=0.48\linewidth]{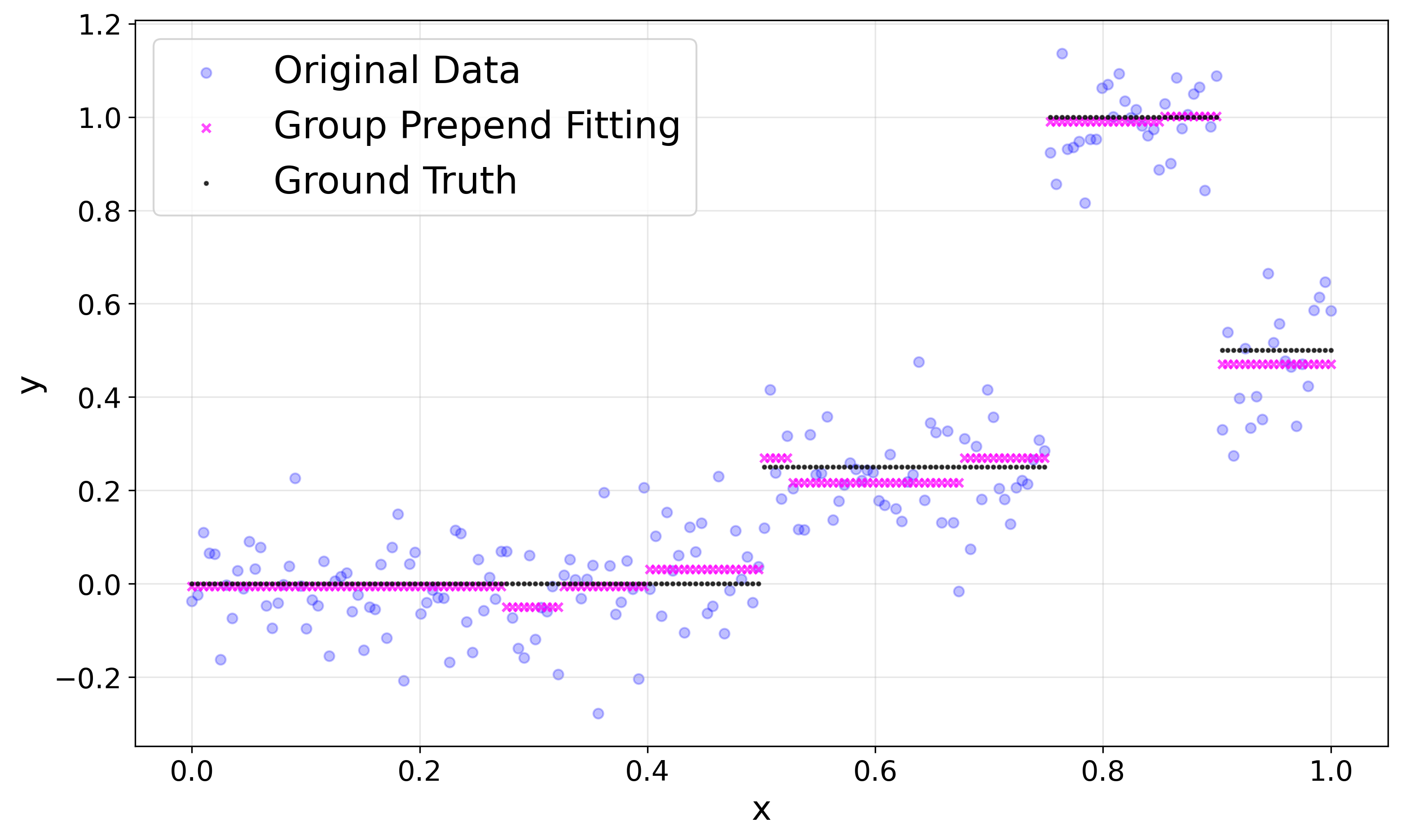}
    \caption{\textbf{Spatial adaptivity.}
    Ground truth, observations ($n=200$), and \GroupPrepend predictions under additive Normal noise with standard error in $\{0,0.1\}$ (rows).}
    \label{fig:spatial}
\end{figure}
\par Empirically, all methods adapt well to the target’s spatial structure and achieve reasonably accurate predictions. Due to space constraints, we report only \GroupPrepend in Figure\ref{fig:spatial} (with additional results deferred to Appendix~\ref{sec:experiment details}); across noise levels, it consistently captures the underlying piecewise spatial pattern, and while higher noise can occasionally lead to suboptimal choices of finer-grained groups, its overall performance remains strong.

\subsection{Fractional Variants of Prepend-Like Algorithms}
\par Although the fractional variant does not improve the theoretical bound, it can be viewed as exploring a richer function class by allowing fractional updates, which may yield practical gains. Our experiments support this intuition, especially for \ShakyPrepend. Specifically, we adopt the spatial-adaptivity setup and evaluate fractional variants of \Prepend, \GroupPrepend, and \ShakyPrepend. For each method, we compare the original algorithm to its fractional counterpart (full pseudocode is deferred to Appendix~\ref{sec:experiment details}), and we additionally tune the fractional step size over $\{0.5,1\}$. As shown in Figure~\ref{fig:errorplot_spatial}, the fractional variant consistently attains lower loss across methods, improving both total loss and worst-case group loss. These results suggest that fractional updates provide a simple and effective practical enhancement, even when they leave the worst-case theory unchanged.
\begin{figure}[ht]
    \centering
    \includegraphics[width=0.48\linewidth]{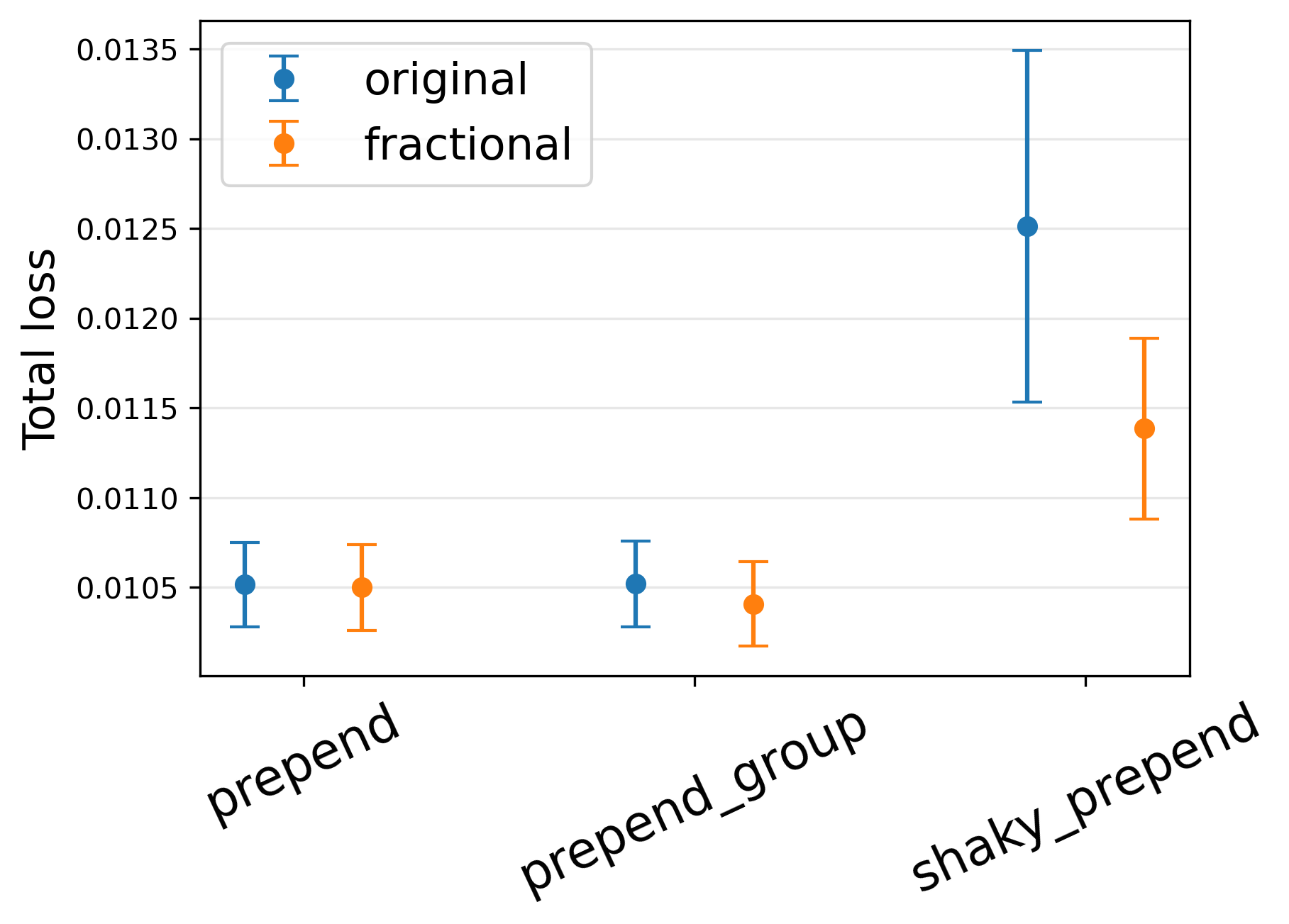}
    \includegraphics[width=0.48\linewidth]{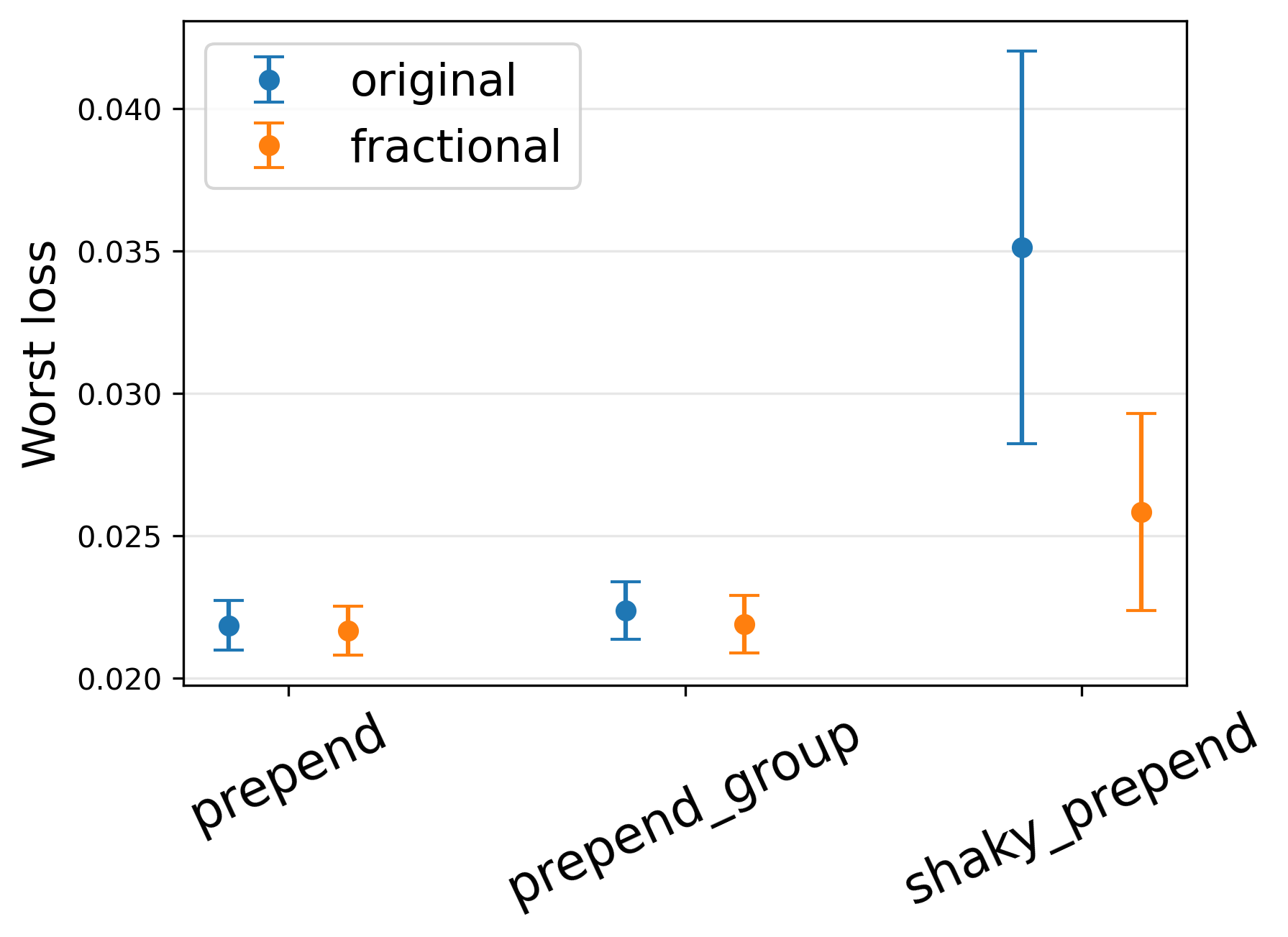}
    \caption{\textbf{Fractional ablation.}
    Total loss (left) and worst-group loss (right) comparing \ShakyPrepend and its fractional variant with tuned step size $\eta\in\{0.5,1\}$ (20 runs; 200 training points per run).}
    \label{fig:errorplot_spatial}
\end{figure}
\section{Conclusion}
\par We propose~\ShakyPrepend, a multi-group learning algorithm that leverages differential privacy to obtain improved sample complexity and group-size dependence. Several directions remain open:

\noindent \textbf{Extending to multicalibration and multiaccuracy.}
Multicalibration and multiaccuracy algorithms share a common iterative template: an auditor finds the most-violated subgroup (or constraint) and the predictor is updated accordingly. Several works (e.g.,~\citet{hebertjohnson2018multicalibration,gopalan2022lowdegreemulticalibration,haghtalab2023unifying}) have noted that differential privacy could reduce the sample complexity of such auditing-based procedures, but existing results are often high-level with loose bounds and limited empirical validation. It would be interesting to seek sharper guarantees via analyses similar to ours.

\par \noindent \textbf{Infinite (or non-enumerable) hypothesis/group classes.}
Our DP approach requires explicitly enumerating $\mathcal{H}$ and $\mathcal{G}$, which fails when either class is too large. It remains open whether DP techniques can be incorporated into oracle-efficient algorithms (e.g.,~\citet{deng2025groupwiseoracleefficientalgorithmsonline}) in this regime, while retaining computational efficiency and improving sample complexity.
\bibliography{reference}
\bibliographystyle{icml2026}
\newpage
\appendix
\onecolumn
\section{Experiment Details}\label{sec:experiment details}
\subsection{Hyperparameters} In \Prepend and \GroupPrepend, the error tolerance $\lambda$ is the only hyperparameter. \ShakyPrepend introduces an additional hyperparameter $\sigma$ that controls the magnitude of the injected noise. In the fractional variant, the step size $\eta$ is also treated as a hyperparameter. For \SleepingExpert, the learning rate is the hyperparameter.
\subsection{Explanation for the Construction of the Criterion Selection Case}
  \begin{figure}[ht]
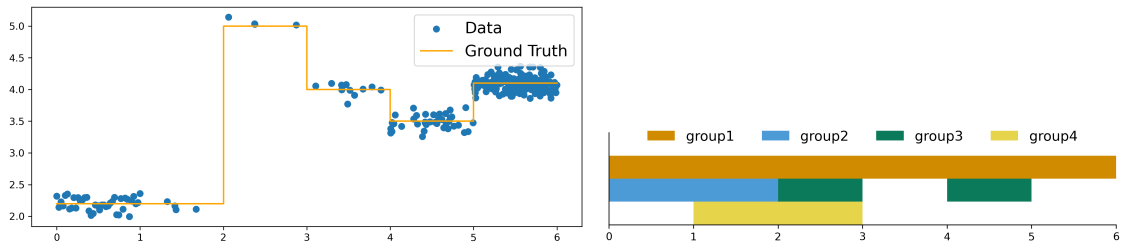

      \centering
      \includegraphics[width=0.5\linewidth]{figs/criteria_selection_data.png}
      \includegraphics[width=0.45\linewidth]{figs/criteria_selection_group.png}
      \caption{\textbf{Criterion-selection setup.} \emph{Left:} Ground truth (orange) and noisy training samples (blue). \emph{Right:} Groups are constructed so that the worst-group loss is not aligned with the total loss.}
  \end{figure}
 \par Group~4 is designed to be the likely worst group: it has the smallest sample size
 and its target values deviate the most from what a constant predictor would capture. Since most observations concentrate around $4.1$, the initial predictor is close to $4$. For Prepend-like algorithms, the first update will, with high probability, improve the fit on the interval $[0,2]$ while leaving the prediction on $[2,3]$ essentially unchanged. In contrast, updating Group~3 lowers the prediction on $[2,3]$, which can increase the loss on Group~4, even as it improves the fit on $[4,5]$ and thereby reduces the total loss.
As a result, hyperparameters tuned for total loss tend to encourage more aggressive updates (including updating Group~3), whereas hyperparameters tuned for worst-group loss prefer more conservative behavior and often avoid updating Group~3. This tension produces markedly different outcomes under the total-loss and worst-loss criteria, as illustrated in Figure~\ref{fig:criterion_selection}. For \SleepingExpert, the same trade-off persists—the best predictor on Group~3 benefits Group~3 but can hurt performance on its overlap with Group~4—though it manifests in a more complex, randomized way due to the algorithm’s stochastic expert weighting.
\newpage
\subsection{Additional Figures for the Spatial Adaptivity Experiment}
 \begin{figure}[ht]
     \centering
     \includegraphics[width=0.45\linewidth]{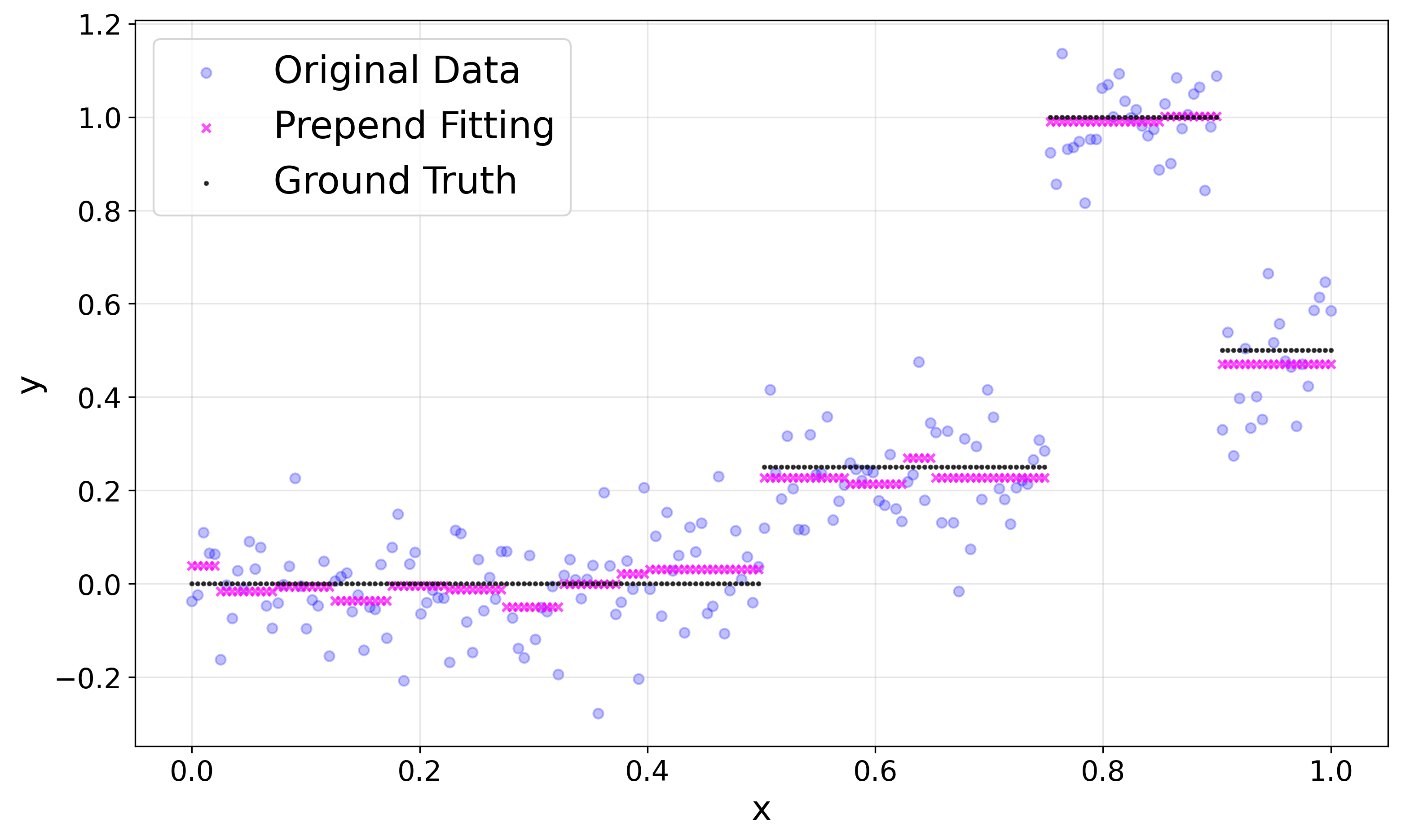}
      \includegraphics[width=0.45\linewidth]{figs/piecewise_experiment_group_prepend_only_0.1.png}
     \includegraphics[width=0.45\linewidth]{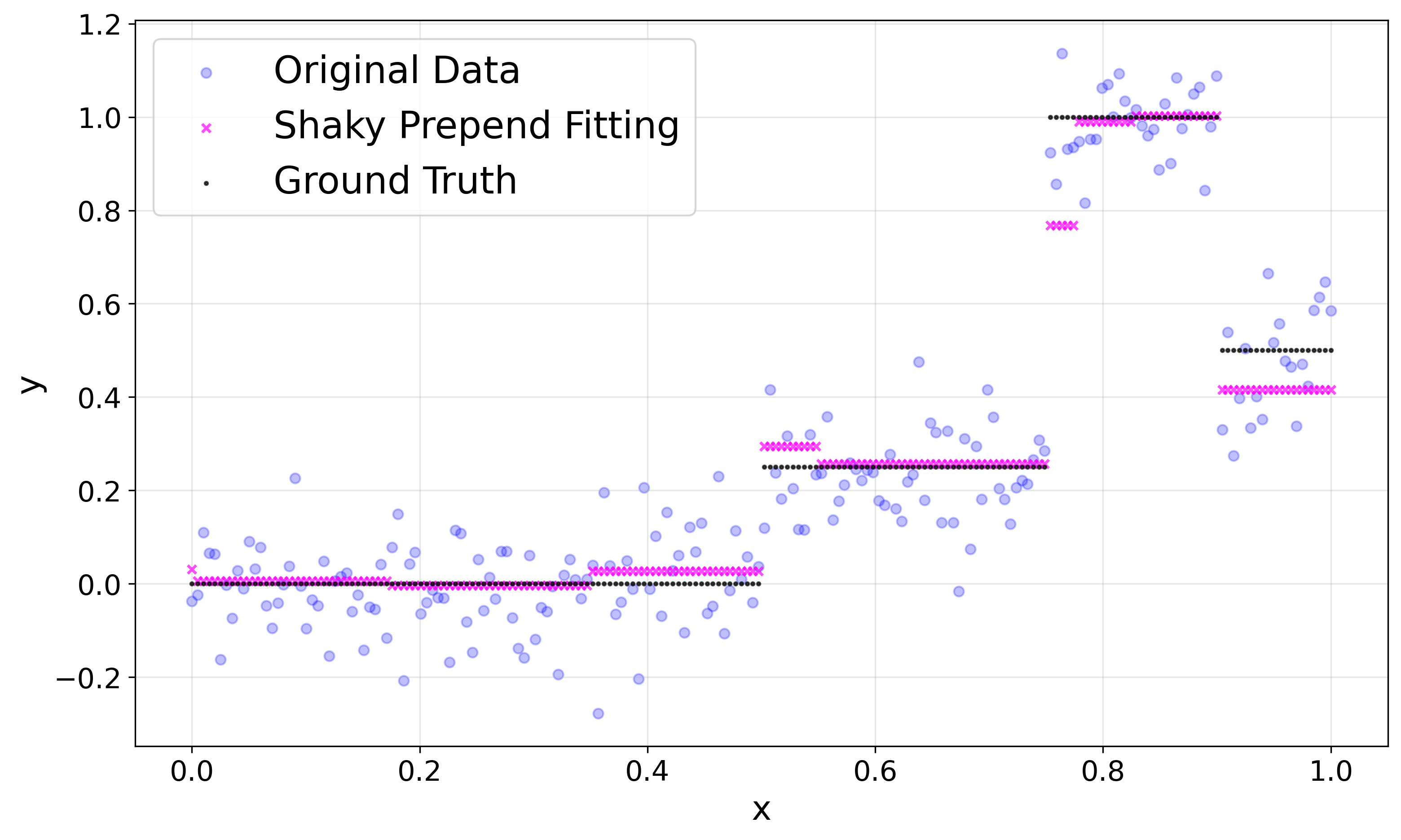}
  \includegraphics[width=0.45\linewidth]{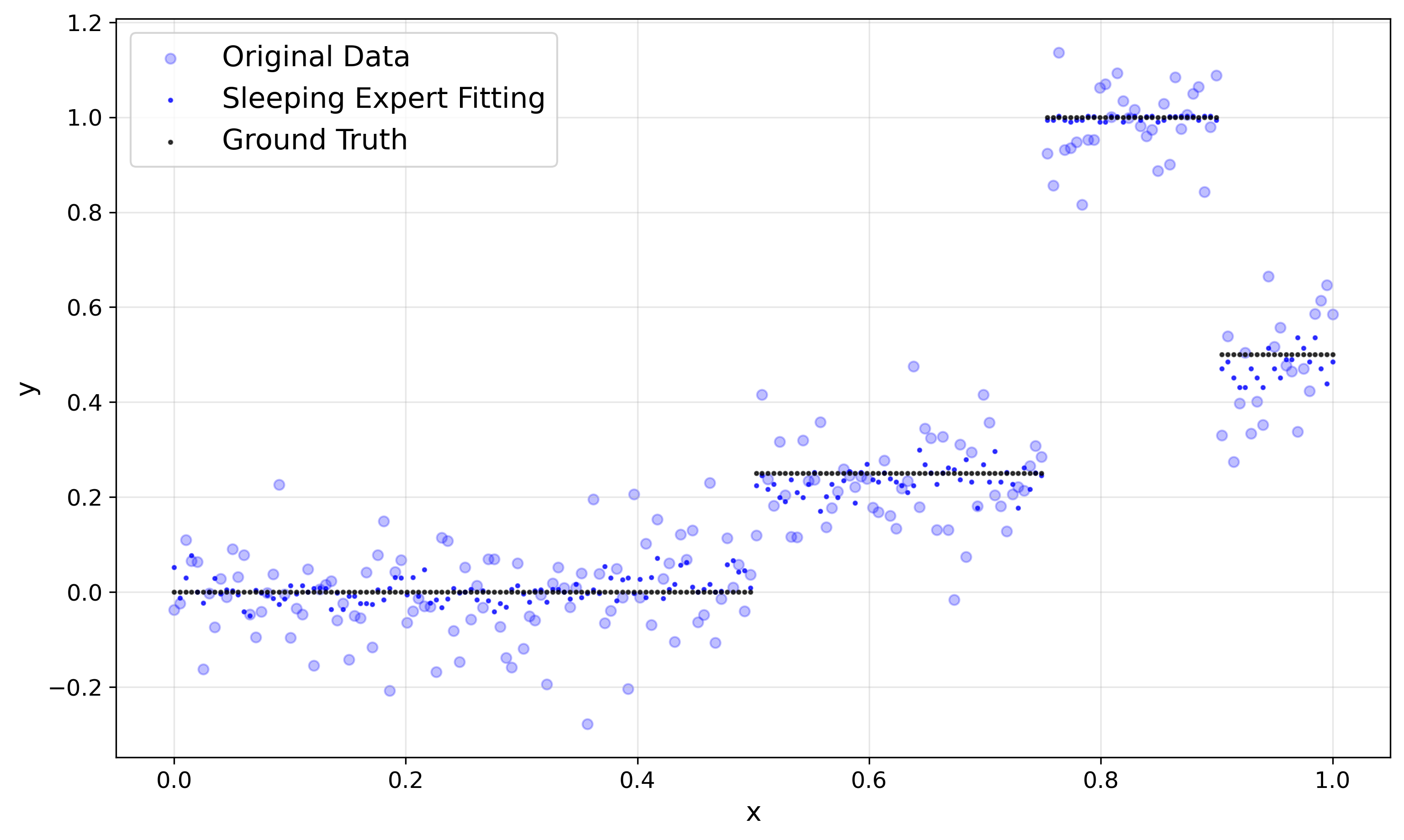}
     \caption{Predictions produced by \Prepend, \GroupPrepend, \ShakyPrepend, and \SleepingExpert (top-left to bottom-right) under additive noise with $\sigma=0.1$. All methods recover the target's spatial structure.}
     \label{fig:additional_plot}
 \end{figure}
  \begin{figure}[ht]
     \centering
     \includegraphics[width=0.45\linewidth]{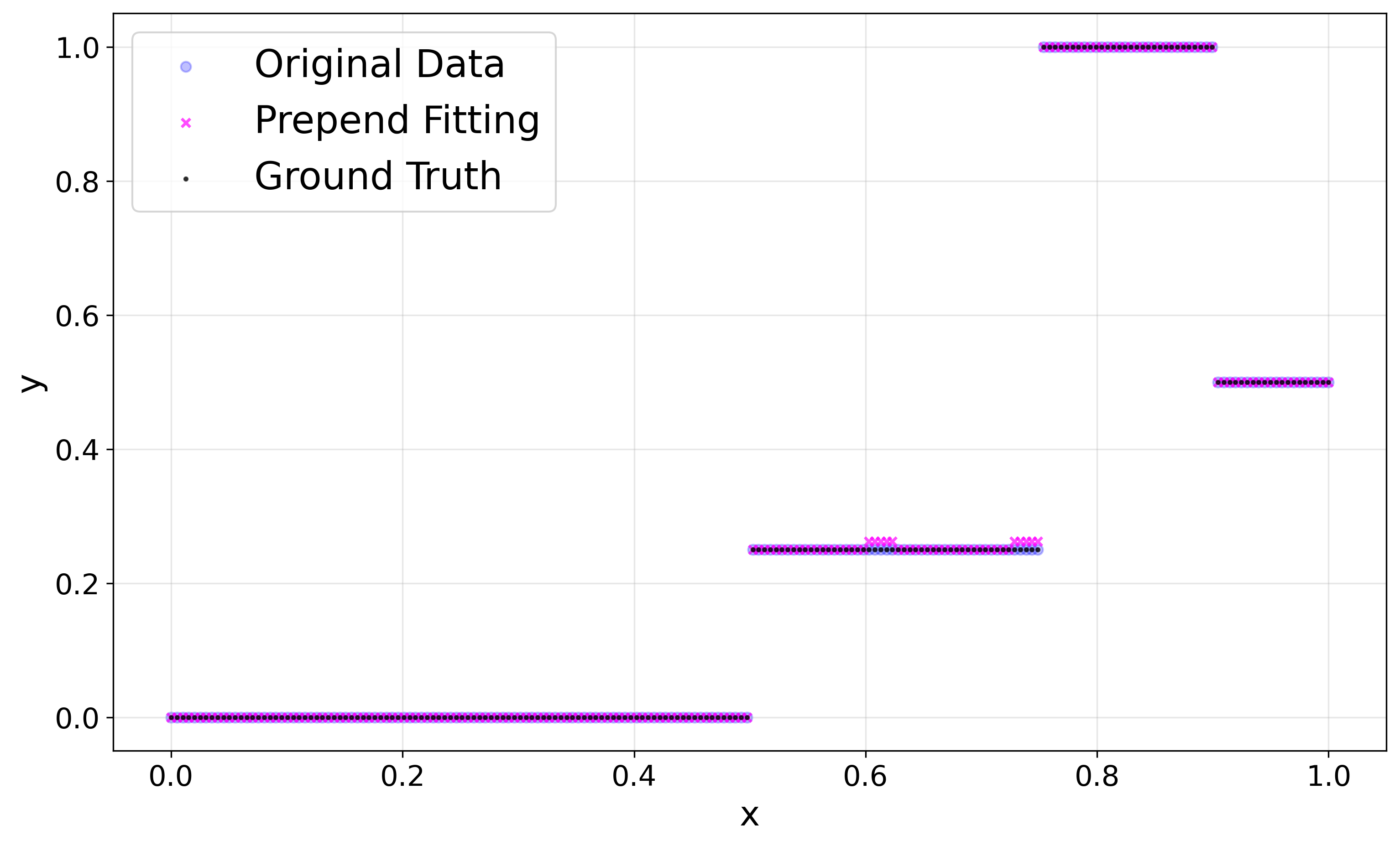}
      \includegraphics[width=0.45\linewidth]{figs/piecewise_experiment_group_prepend_only.png}
     \includegraphics[width=0.45\linewidth]{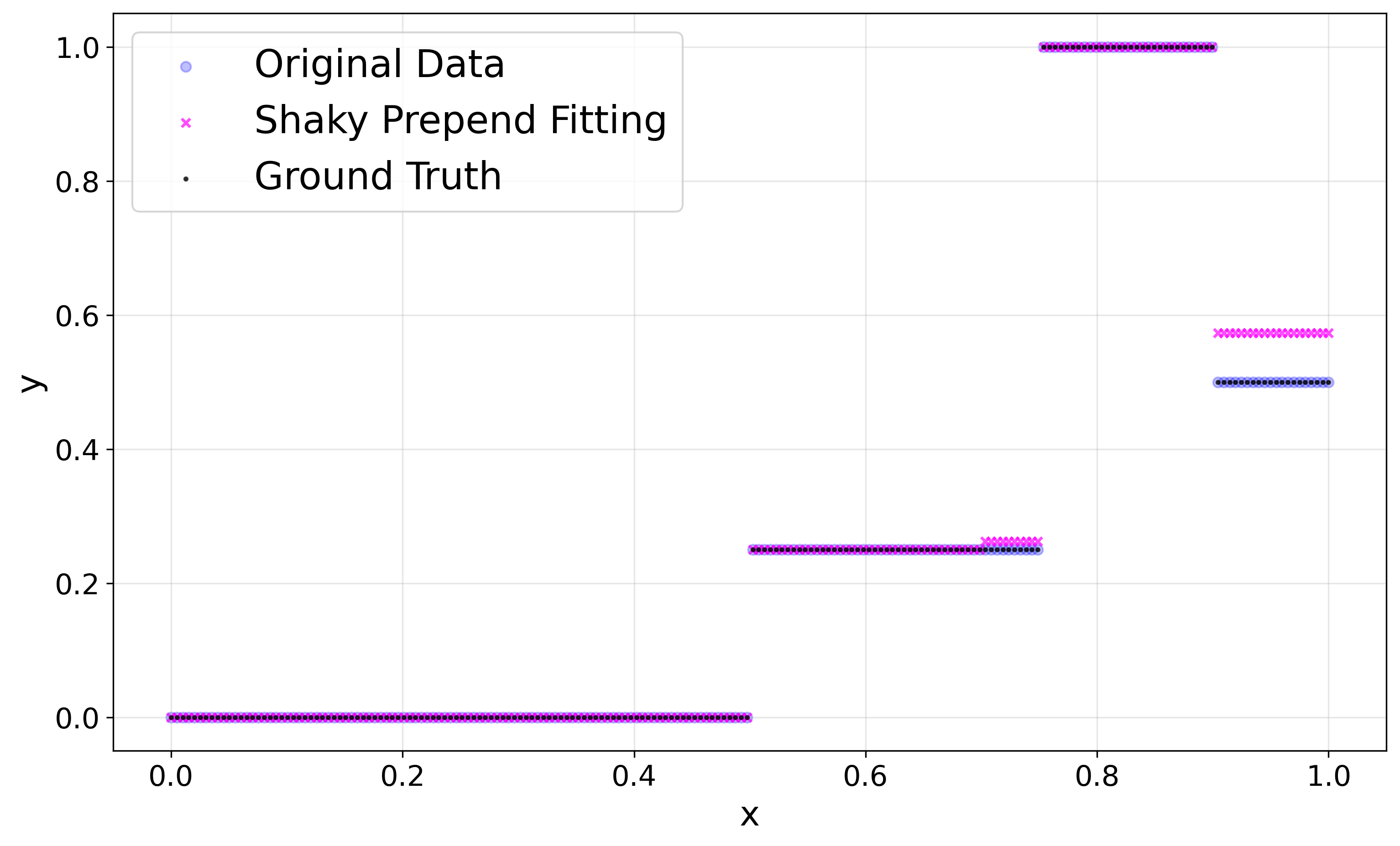}
  \includegraphics[width=0.45\linewidth]{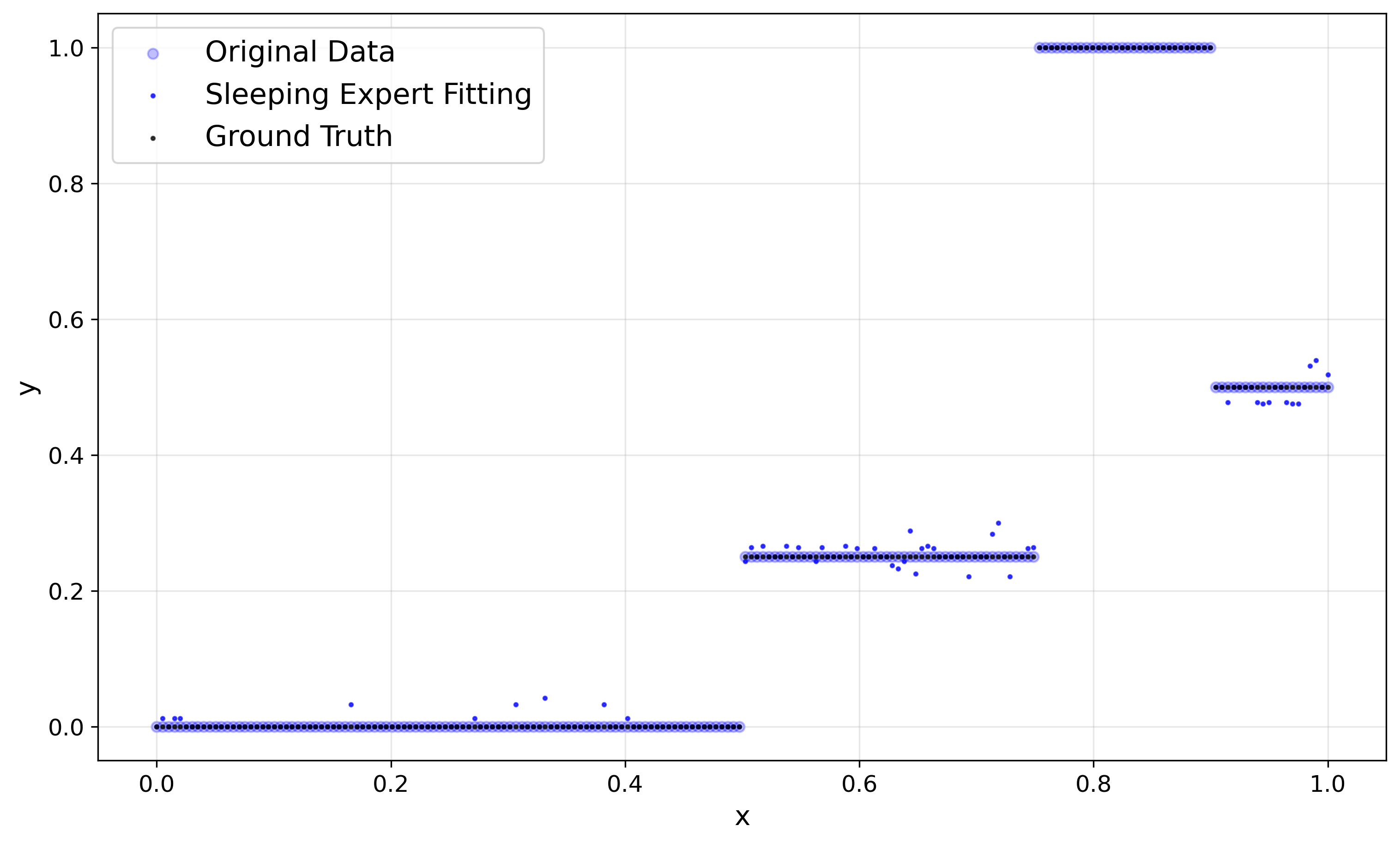}
     \caption{Predictions produced by \Prepend, \GroupPrepend, \ShakyPrepend, and \SleepingExpert (top-left to bottom-right) under no additive noise. All methods recover the target's spatial structure.}
     \label{fig:additional_plot2}
 \end{figure}
\section{Missing Proof}\label{sec:proof}

\subsection{Missing Proof in Section~\ref{sec:background}}
\par We first introduce the classical result of the Sparse algorithm:
\begin{theorem}[Theorem 3.25 in \citealp*{dwork2014algorithmic}]\label{DP_original}
    Sparse is $(\epsilon,\delta)$-differentially private.
\end{theorem}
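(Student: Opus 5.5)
The plan is to follow the standard route of \citet{dwork2014algorithmic}. Sparse with update budget $c$ is an adaptive sequential composition of $c$ runs of the single-crossing mechanism \emph{AboveThreshold}. Each run starts with a freshly noised threshold and ends at the first $\top$. I would first prove a pure-DP guarantee for one run, and then apply the advanced composition theorem over the $c$ runs. Throughout, queries are $1$-sensitive. The noise scales are the ones in Algorithm~\ref{alg:adjusted_sparse} with $\Delta=1$ (threshold noise $\mathrm{Lap}(\sigma)$, query noise $\mathrm{Lap}(2\sigma)$), with $\sigma=\sqrt{32c\ln(1/\delta)}/\epsilon$. Set $\epsilon' := 2/\sigma = \epsilon/\sqrt{8c\ln(1/\delta)}$.

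\textbf{Step 1: AboveThreshold is $(\epsilon',0)$-DP.} Fix adjacent databases $D,D'$ and an output transcript $(\bot,\ldots,\bot,\top)$ of length $k$. Because queries are adaptive but determined by the past transcript, fixing the transcript prefix fixes the queries $f_1,\dots,f_k$.
\begin{itemize}
\item Condition on the query noises $\mu_1,\dots,\mu_{k-1}$. Define $g(D)=\max_{i<k}(f_i(D)+\mu_i)$, so the event is $\{\hat\lambda>g(D)\}\cap\{f_k(D)+\mu_k\ge\hat\lambda\}$. Sensitivity gives $g(D')\ge g(D)-1$.
\item Change variables $\hat\lambda'=\hat\lambda+1$ and $\mu_k'=\mu_k+2$. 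Any pair $(\hat\lambda,\mu_k)$ producing the event on $D$ maps to a pair producing it on $D'$.
\item The Laplace density ratios are at most $e^{1/\sigma}=e^{\epsilon'/2}$ for the threshold shift and $e^{2/(2\sigma)}=e^{\epsilon'/2}$ for the query shift. Integrating over the conditioned noises gives $\Pr[\mathcal{M}(D)\in\cdot]\le e^{\epsilon'}\Pr[\mathcal{M}(D')\in\cdot]$.
\item The all-$\bot$ (halting) transcript is handled the same way using only the threshold shift.
\end{itemize}

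\textbf{Step 2: Composition.} Sparse's output is the concatenation of at most $c$ AboveThreshold transcripts. Each run uses independent fresh noise, and its queries are chosen adaptively from earlier outputs. Hence it is a $c$-fold adaptive composition of $(\epsilon',0)$-DP mechanisms. By the advanced composition theorem (Dwork--Rothblum--Vadhan), for any $\delta>0$ it is $(\epsilon_{\mathrm{tot}},\delta)$-DP with
\[
\epsilon_{\mathrm{tot}}=\epsilon'\sqrt{2c\ln(1/\delta)}+c\,\epsilon'(e^{\epsilon'}-1).
\]
With our choice of $\epsilon'$, the first term equals $\epsilon/2$. Using $e^{\epsilon'}-1\le 2\epsilon'$ for $\epsilon'\le 1$, the second term is at most $2c\epsilon'^2=\epsilon^2/(4\ln(1/\delta))\le\epsilon/2$ in the regime $\epsilon\le 1$ and $\delta\le e^{-1/2}$ implicit in the cited theorem. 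So $\epsilon_{\mathrm{tot}}\le\epsilon$, which is the claim.

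\textbf{Main obstacle.} The delicate step is the coupling in Step~1 under adaptivity. One must condition on the transcript prefix so that the queries, and hence $g$, are fixed functions of the database. One must also check that a single shift of the threshold by $1$ simultaneously preserves all $k-1$ ``below'' events, while the query-noise shift by $2$ alone secures the ``above'' event. Everything else is bookkeeping of constants.
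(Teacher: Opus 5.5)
Your proposal is correct and is essentially the argument behind the result the paper cites (Dwork and Roth, Theorem 3.25). Sparse is viewed as $c$ adaptively composed AboveThreshold runs with fresh noise, each $(\epsilon',0)$-DP with $\epsilon'=2/\sigma=\epsilon/\sqrt{8c\ln(1/\delta)}$ via the threshold/query-noise shift coupling, and advanced composition finishes the argument. Your explicit check of the second-order term, and of the small-parameter regime it requires, is a welcome addition.

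Two minor touch-ups. First, the inequality your coupling actually uses is $g(D')\le g(D)+1$; you wrote $g(D')\ge g(D)-1$, though both hold by sensitivity. Second, the $\delta=0$ branch of Sparse, where $\sigma=2c/\epsilon$ gives $\epsilon'=\epsilon/c$, should be covered explicitly: it follows from the same Step 1 plus basic composition.
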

\begin{algorithm}[ht]
\caption{Sparse}
\label{alg:sparse}
\begin{algorithmic}
    \REQUIRE{Private database $D$, an adaptively chosen stream of 1-sensitive queries $f_1, \ldots$, 
a threshold $\lambda$, and a cutoff $c$.}
\STATE{
    $\sigma \gets \frac{2c}{\epsilon}$ if $\delta=0$ else $\frac{\sqrt{32 c \ln \frac{1}{\delta}}}{\epsilon}$}
\STATE{$\widehat{\lambda}_0 \gets \lambda + \mathrm{Lap}(\sigma)$\;
$count \gets 0$\;}
\FOR{each query $i$}
\STATE{$\nu_i=\text{Lap}(2\sigma)$}
\IF{$f_i(D)+\nu_i\geq\widehat{\lambda}_{\text{count}}$}
\STATE{Output $a_i=\top$}
\STATE{$\text{count}=\text{count}+1$}
\STATE{$\widehat{\lambda}_{\text{count}}=\lambda+\text{Lap}(\sigma)$}
\ELSE\STATE{Output $a_i=\bot$}
\ENDIF
\IF{$\text{count}\geq c$}
\STATE{Halt.}
\ENDIF
\ENDFOR
\end{algorithmic}
\end{algorithm}
\begin{proof}[Proof of Theorem~\ref{thm:DP_adjusted}]
If we set $c=k$, $\sigma=\frac{\Delta\sqrt{32\ln(1/\delta)}}{\epsilon}$, and assume each $f_i$ in Algorithm~\ref{alg:sparse} is a $\Delta$-sensitive query, then the threshold test
\[
f_i(D)+\nu_i \;\ge\; \widehat{\lambda}_{\text{count}}
\]
is equivalent, after rescaling by $\Delta$, to
\[
\frac{1}{\Delta}f_i(D) + \nu_i' \;\ge\; \widehat{\lambda}_{\text{count}}',
\]
where $\frac{1}{\Delta}f_i(D)$ has sensitivity $1$ and $\nu_i' \sim \mathrm{Lap}\!\left(2\frac{\sqrt{32\ln(1/\delta)}}{\epsilon}\right)$.
Thus, the rescaled procedure is precisely Algorithm~\ref{alg:sparse} applied to $1$-sensitive queries with an effective privacy parameter $\epsilon'=\epsilon\sqrt{k}$, and therefore the resulting mechanism is $(\epsilon\sqrt{k},\delta)$-DP by Theorem~\ref{DP_original}.

The only difference between Algorithm~\ref{alg:adjusted_sparse} and the above procedure lies in the stopping rule.
Let $\mathcal{Y}$ denote the output space, i.e., the set of all possible sequences $\{a_i:i=1,2,\ldots\}$ produced before termination. For any $y\in\mathcal{Y}$, let $l(y)$ be the number of $\top$ symbols in $y$, which equals the total number of updates. For an integer $c\ge 1$, let $y_{:c}$ denote the prefix of $y$ up to (and including) the $c$-th $\top$; if $l(y)<c$, define $y_{:c}\triangleq y$.

Let $\mathcal{M}$ denote Algorithm~\ref{alg:adjusted_sparse}. Define the truncated mechanism $\mathcal{M}'$ by
\[
\mathcal{M}'(D)\;\triangleq\; \mathcal{M}(D)_{:k},
\]
where $D$ is the input dataset. Equivalently, $\mathcal{M}'$ runs $\mathcal{M}$ and truncates its output (i.e., forces termination) once the number of $\top$ symbols reaches $k$, exactly the procedure we described at first. By the argument above, $\mathcal{M}'$ is $(\epsilon\sqrt{k},\delta)$-DP.

Now consider $\mathcal{M}$. For any measurable subset of outputs $S\subseteq \mathcal{Y}$, decompose
\[
S \;=\; S_{:k}\ \cup\ S_{k+1:},
\]
where $S_{:k}\triangleq \{y\in S:\ y=y_{:k}\}$ and $S_{k+1:}\triangleq S\setminus S_{:k}$. By assumption,
\[
\Pr[\mathcal{M}(D)\in S_{k+1:}] \;\le\; \delta'.
\]
Therefore,
\begin{align*}
\Pr[\mathcal{M}(D)\in S]
&\le \Pr[\mathcal{M}(D)\in S_{:k}] + \delta' \\
&= \Pr[\mathcal{M}'(D)\in S_{:k}] + \delta' \\
&\le e^{\epsilon\sqrt{k}}\Pr[\mathcal{M}'(D')\in S_{:k}] + \delta + \delta' \\
&\le e^{\epsilon\sqrt{k}}\Pr[\mathcal{M}(D')\in S_{:k}] + \delta + \delta',
\end{align*}
for any neighboring datasets $D,D'$. By definition, $M$ is $(\epsilon\sqrt{k}, \delta+\delta')$-differentially private. This proves the lemma.
\end{proof}
\subsection{Missing Proof in Section~\ref{sec:Shaky Prepend}}
To derive an upper bound on the number of updates, we need to control the added Laplace noise; this is established in Lemma~\ref{lemma:laplace_bound}.
\begin{lemma}[Laplace Tail Bound]\label{lemma:laplace_bound}
Let $X_1,\dots,X_k \stackrel{\mathrm{i.i.d.}}{\sim} \mathrm{Lap}(\sigma)$, and define
\(
a \coloneqq \max_{i\in[k]} |X_i|.
\)
Then for any $\beta\in(0,1)$,
\(
\Pr\!\left[a \ge \sigma \ln\!\left(\frac{k}{\beta}\right)\right] \le \beta.
\)
Equivalently, for any $\epsilon \ge 0$,
\(
\Pr[a \ge \epsilon] \le k \exp\!\left(-\frac{\epsilon}{\sigma}\right).
\)
\end{lemma}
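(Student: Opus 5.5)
We prove the Laplace tail bound with a union bound over the $k$ variables, using the exact tail of a single Laplace variable. Recall that if $X\sim\mathrm{Lap}(\sigma)$ has density $\frac{1}{2\sigma}e^{-|x|/\sigma}$, then $|X|$ is exponentially distributed with mean $\sigma$. Hence for every $t\ge 0$,
\[
\Pr[|X|\ge t]=\int_{|x|\ge t}\frac{1}{2\sigma}e^{-|x|/\sigma}\,dx = 2\int_t^\infty \frac{1}{2\sigma}e^{-x/\sigma}\,dx = e^{-t/\sigma}.
\]
This one-line computation is the only analytic input.

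Next, for any $\epsilon\ge 0$, the event $\{a\ge\epsilon\}$ equals $\bigcup_{i\in[k]}\{|X_i|\ge\epsilon\}$. A union bound, which needs no independence, then gives
\[
\Pr[a\ge\epsilon]\le\sum_{i=1}^k\Pr[|X_i|\ge\epsilon]=k\,e^{-\epsilon/\sigma}.
\]
This is the second form of the statement.

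For the first form, substitute $\epsilon=\sigma\ln(k/\beta)$. This is nonnegative because $\beta<1\le k$. We get $k\exp(-\ln(k/\beta))=k\cdot\beta/k=\beta$. The two formulations are therefore equivalent under this reparametrization.

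None of these steps is a real obstacle. The only points to check are the normalization convention for $\mathrm{Lap}(\sigma)$, taken here as the density with scale $\sigma$ used in Algorithm~\ref{alg:shakyprepend}, and the nonnegativity of the substituted threshold. With the union bound, the inequality holds even without the i.i.d.\ assumption.
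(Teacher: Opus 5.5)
Your proof is correct and follows the paper's own argument: the exact tail $\Pr[|X|\ge t]=e^{-t/\sigma}$ of a single Laplace variable, then a union bound over the $k$ variables, then the substitution $\epsilon=\sigma\ln(k/\beta)$. Your side remarks are also correct: independence is not needed, and the threshold is nonnegative because $k/\beta>1$.
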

\begin{proof}
For a single Laplace random variable $X \sim \mathrm{Lap}(\sigma)$, we have for any $t \ge 0$,
\[
\Pr\!\left[|X| \ge t\sigma\right]
= 2 \int_{t\sigma}^{\infty} \frac{1}{2\sigma} e^{-r/\sigma}\, \mathrm{d}r
= \int_{t\sigma}^{\infty} \frac{1}{\sigma} e^{-r/\sigma}\, \mathrm{d}r
= \int_{t}^{\infty} e^{-u}\, \mathrm{d}u
= e^{-t}.
\]
By the union bound over $X_1,...,X_k$,
\[
\Pr\!\left[\max_{i\in[k]} |X_i| \ge t\sigma\right]
\le \sum_{i=1}^k \Pr\!\left[|X_i| \ge t\sigma\right]
= k e^{-t}.
\]
Setting $t=\ln(k/\beta)$ yields
\[
\Pr\!\left[\max_{i\in[k]} |X_i| \ge \sigma \ln\!\left(\frac{k}{\beta}\right)\right]
\le \beta,
\]
which proves the lemma.

\end{proof}
\begin{proof}[Proof of Lemma~\ref{lemma:update}]
    Denote $a'=\max_{0\leq t\leq \frac{2\alpha}{\lambda}}\{\xi_t,\xi_t'\}=-\min_{0\leq t\leq \frac{2\alpha}{\lambda}}\{\xi_t,\xi_t'\}$ (by symmetry of the Laplace distribution). By the update rule, for $1\leq t\leq\frac{2\alpha}{\lambda}$,
    \[P_n(g)(L_n(f_{t-1}|g_t)-L_n(h_t|g_t))\geq \lambda -\xi_t-\xi_t'\geq\lambda-2a'.\]
    Since $f_t$ only differs from $f_{t-1}$ on $g_t$, 
    \[L_n(f_t)-L_n(f_{t-1})=P_n(g)[L_n(h_t|g_t)-L_n(f_{t-1}|g_t)]\leq -(\lambda-2a').\]
    In other words, every update decreases $L_n(f)$ by at least $\lambda-2a'$.

    If $a'<\frac{\lambda}{4}$ and $B>\frac{2\alpha}{\lambda}$,
    \[L_n(f_{\lceil\frac{2\alpha}{\lambda}\rceil})<\alpha - \frac{2\alpha}{\lambda}(\lambda-2a')<0,\quad \text{contradicting }L_n(\cdot)\geq0.\]

    $\Rightarrow\Pr[B\leq\frac{2\alpha}{\lambda}|\alpha'<\frac{\lambda}{4}]=1$
    \begin{align*}
    \Pr[B\leq\frac{2\alpha}{\lambda}]&\geq\Pr[B\leq\frac{2\alpha}{\lambda}, a'<\frac{\lambda}{4}]\\
    &=\Pr[a'\leq\frac{\lambda}{4}]\Pr[B\leq\frac{2\alpha}{\lambda}|a'<\frac{\lambda}{4}]\\
    &=\Pr[a'\leq\frac{\lambda}{4}]\geq1-e^{-\frac{\lambda}{4\sigma}}\frac{4\alpha}{\lambda}.
    \end{align*}
    The last inequality results from Lemma \ref{lemma:laplace_bound}.
\end{proof}
To derive the DP property of Algorithm~\ref{alg:shakyprepend}, we also need to analyze the sensitivity of each query:
\begin{lemma}\label{lemma:tech}
    For any $g$ and $h$, $P_n(g)(L_n(f|g)-L_n(h|g))$ is $\frac{4}{n}$-sensitive.
\end{lemma}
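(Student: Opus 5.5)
Rewrite the query as an unnormalized sum. Since $P_n(g)L_n(f\mid g)=\frac{1}{n}\sum_{i=1}^n g(x_i)\ell(f(x_i),y_i)$, we have
\[
q(S_n)\;=\;P_n(g)\bigl(L_n(f\mid g)-L_n(h\mid g)\bigr)\;=\;\frac{1}{n}\sum_{i=1}^n g(x_i)\bigl(\ell(f(x_i),y_i)-\ell(h(x_i),y_i)\bigr).
\]
This removes the data-dependent normalization $\#_n(g)$, which would otherwise make the sensitivity blow up for small groups. The group-size weighting $P_n(g)$ in the update rule is exactly what makes the query a plain empirical average.

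Next, account for the fact that $f$ depends on the data. In Algorithm~\ref{alg:shakyprepend}, $f=f_t$ is the current decision list, and its pairs $(h_0,g_1,h_1,\dots)$ are determined partly by the dataset: $h_0\in\arg\min_h L_n(h)$ is a data-dependent ERM. In the interactive view, however, the analyst fixes $(g_i,h_i,f_i)$ from the transcript before each query is answered. So sensitivity should be computed with $f$ held fixed as a function, which is the standard treatment of adaptively chosen queries in Theorem~\ref{thm:DP_adjusted}. I would state this convention explicitly. If one wants to be careful about $h_0$, treat $h_0$ as part of the analyst's (public) choice. Alternatively, a crude bound still works: the summand always lies in $[-1,1]$, so even if changing one record altered $f$ on every point the bound would fail. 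For that reason I would rely on the fixed-$f$ convention, and the constant $4$ gives slack for the variant where $n$ also changes.

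Now bound the change under neighboring datasets. Take $S$ and $S'$ differing in one record, say the $j$-th (replacement adjacency). Every summand with $i\neq j$ is identical. The $j$-th summand lies in $[-1,1]$ because $\ell\in[0,1]$ and $g\in\{0,1\}$, so its change is at most $2$. Hence
\[
|q(S)-q(S')|\le \frac{2}{n}.
\]
To match Definition~\ref{def:l1-sensitivity}, where $|S\triangle S'|=1$ means add/remove adjacency, the sample size changes from $n$ to $n\pm1$. Write $q(S')=\frac{1}{n+1}\bigl(nq(S)+z\bigr)$ with $|z|\le1$. Then
\[
|q(S')-q(S)|=\frac{|z-q(S)|}{n+1}\le\frac{2}{n}.
\]
Combining the two possible effects (the changed summand and the renormalization), each contributing at most $2/n$, yields the stated $4/n$.

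The only real obstacle is the conceptual one in the second step: justifying that $f$ and $h$ are treated as fixed, analyst-chosen functions rather than as functions of the dataset. The arithmetic in the third step is routine.
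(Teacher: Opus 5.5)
Your argument is correct, and it takes a cleaner route than the paper's proof.

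\textbf{The paper's route.} The paper keeps the query in the factored form $P_{S_n}(g)\bigl(L_{S_n}(f\mid g)-L_{S_n}(h\mid g)\bigr)$ and considers replacing $x_1$ by $x_1'$. It then splits into cases. If $g(x_1)=g(x_1')$, the normalizers coincide and the change is at most $2/n$. If $g(x_1)\neq g(x_1')$, it uses a product-rule split $A+B$. It bounds each conditional-loss change by $1/(nP_{S_n}(g))$, giving $A\le 2/n$. It bounds the change in $P_n(g)$ by $1/n$, times $|L(f\mid g)|+|L(h\mid g)|\le 2$, giving $B\le 2/n$. Together these give $4/n$.

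\textbf{Your route.} Your identity $P_n(g)L_n(f\mid g)=\frac1n\sum_i g(x_i)\ell(f(x_i),y_i)$ turns the query into an average of per-record terms in $[-1,1]$. This buys three things:
\begin{itemize}
\item The case analysis disappears, and you get the sharper bound $2/n$. In the group-switching case the true change is at most $1/n$; the paper's constant $4$ comes from applying the triangle inequality to the $A+B$ split.
\item You avoid the degenerate case where $g$ contains a single sample point that gets replaced. There the paper divides by $\#_{S_n'}(g)=0$.
\item You cover the add/remove adjacency that Definition~\ref{def:l1-sensitivity} literally specifies. The paper only treats replacement.
\end{itemize}

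\textbf{Clean-ups.}
\begin{itemize}
\item Drop the closing ``combining the two effects'' sentence. Your add/remove computation already accounts for both the new summand and the renormalization. Removal gives $2/(n-1)$ rather than $2/n$, which is still at most $4/n$ for $n\ge 2$.
\item The sentence about a ``crude bound'' contradicts itself. Remove it and simply state the fixed-$f$ convention, which is exactly what the paper's proof uses implicitly.
\item Declaring $h_0$ public is a convention, not a justification, since $h_0$ is an un-noised ERM on the sample. That matters for Lemma~\ref{lemma:DP_of_shaky}, not for this lemma, where $f$ is a fixed function.
\end{itemize}
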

\begin{proof}
    Denote $P_{S_n}(g)$ to be the sample estimate of $P(g)$ on the dataset $S_n$, $L_{S_n}(f|g)$ to be the sample estimate of $L(f|g)$ on the dataset $S_n$. WLOG, suppose the adjacent dataset of $S_n$ to be $S_n'=\{(x_1',y_1'),(x_2,y_2),...,(x_n,y_n)\}$.

    If $g(x_1)=g(x_1')$, 
    \begin{align*}
        &|P_{S_n}(g)(L_{S_n}(f|g)-L_{S_n}(h|g))-P_{S_n'}(g)(L_{S_n'}(f|g)-L_{S_n'}(h|g))|\\
        &=P_{S_n}(g)\frac{1}{nP_{S_n}(g)}|l(f(x_1),y_1)-l(f(x_1'),y_1')-l(h(x_1),y_1)+l(h(x_1'),y_1')|\leq\frac{2}{n},
    \end{align*}
    If $g(x_1)>g(x_1')$,
    \begin{align*}
        &|P_{S_n}(g)(L_{S_n}(f|g)-L_{S_n}(h|g))-P_{S_n'}(g)(L_{S_n'}(f|g)-L_{S_n'}(h|g))|\\
        &=|P_{S_n}(g)(L_{S_n}(f|g)-L_{S_n}(h|g)-L_{S_n'}(f|g)+L_{S_n'}(h|g))|\\
        &+|(L_{S_n'}(f|g)-L_{S_n'}(h|g))(P_{S_n}(g)-P_{S_n'}(g))|\triangleq A+B.\\
    \end{align*}
    \[\left| L_{S_n}(f \mid g) - L_{S_n'}(f \mid g) \right|=|\frac{\ell(x_1,y_1)+\sum_{i=2}^n\ell(x_i,y_i)}{nP_{S_n}(g)}-\frac{\sum_{i=2}^n\ell(x_i,y_i)}{nP_{S_n}(g)-1}{}|\leq\frac{1}{nP_{S_n}(g)}.\]
    Thus,
    \begin{align*}
        &A\leq P_{S_n}(g)(\frac{1}{nP_{S_n}(g)}+\frac{1}{nP_{S_n}(g)})=\frac{2}{n},\\
        &B\leq (|L_{S_n'}(f|g)|+|L_{S_n'}(h|g)|)|P_{S_n}(g)-P_{S_n'}(g)|\leq 2*\frac{1}{n}=\frac{2}{n},\\
        &|P_{S_n}(g)(L_{S_n}(f|g)-L_{S_n}(h|g))-P_{S_n'}(g)(L_{S_n'}(f|g)-L_{S_n'}(h|g))|\leq\frac{4}{n}.
    \end{align*}
    When $g(x_1)<g(x_1')$, the same conclusion follows as when $g(x_1)>g(x_1').$
\end{proof}
We are now ready to prove the DP property of Algorithm~\ref{alg:shakyprepend}.
\begin{proof}[Proof of Lemma~\ref{lemma:DP_of_shaky}]
\par The algorithm can be viewed as a game:
\begin{itemize}
    \item For initialization, sample $\xi_0\sim\text{Lap}(\sigma)$,$\lambda_0=\lambda+\xi_0, \text{count}=0$.
    \item Every time the adversary query $q_i:\mathcal{X}^n\to\mathbb{R}, q_i(S_n)=P_n(g_i)[L_n(f_i|g_i)-L_n(h_i|g_i)]$ according to the rule 
    \begin{itemize}
        \item If last response is yes, $f_i=[g_i,h_i,f_{i-1}]$ and $(g_i,h_i)$ starts over to enumerate $\mathcal{G}\times\mathcal{H}$
        \item If last response is no, $f_i=f_{i-1}$, $(g_i,h_i)$ continues to enumerate $\mathcal{G}\times\mathcal{H}$
        \item If last response is no and $(g_i,h_i)$ has finished enumerating, output $f_i$ and end the game.
    \end{itemize}
    \item For the mechanism, for each query, sample $\mu_i\sim\text{Lap}(2\sigma)$, if $q_i(S_n)+\mu_i=L_n(f_i|g_i)-L_n(h_i|g_i)+\mu_i\geq \lambda_{\text{count}}$, respond yes, else respond no, $\text{count} =\text{count}+1$, sample $\xi_{\text{count}}\sim\text{Lap}(\sigma)$, and let $\lambda_{\text{count}}=\lambda+\xi_{\text{count}}$.
\end{itemize}
\par The mechanism can be fit into Algorithm~\ref{alg:adjusted_sparse}. By Lemma ~\ref{lemma:tech} and according to Theorem~\ref{thm:DP_adjusted}, the mechanism is $(\epsilon\sqrt{\frac{2\alpha}{\lambda}},\delta+e^{-\frac{\lambda}{4\sigma}}\frac{4\alpha}{\lambda})$-differentially private.
\end{proof}

\begin{proof}[Proof of Lemma~\ref{lemma:gen_bound}]
\par We cite a related theorem in \citet{BNS+} here: (Note $(\epsilon,\delta)-max-KL-stable$ is the same as $(\epsilon,\delta)$-DP).
\begin{theorem}[Theorem 7.2 in \citealp*{BNS+}]\label{thm:generalization}
Let $\varepsilon \in (0,1/3)$, $\delta \in (0,\varepsilon/4)$, and suppose $n\geq\frac{1}{\varepsilon^{2}}\ln\!\Bigl(\tfrac{4\varepsilon}{\delta}\Bigr).$
Let $\mathcal{M}\colon \mathcal{X}^{n} \to \mathcal{Q}_{\Delta}$ be an
$(\varepsilon,\delta)$‑max‑

\noindent KL–stable mechanism, where
$\mathcal{Q}_{\Delta}$ is the class of $\Delta$‑sensitive queries
$q\colon \mathcal{X}^{n} \to \mathbb{R}$.
Let $P$ be a distribution over $\mathcal{X}$,
draw $\mathbf{x} \leftarrow_{\!R} P^{n}$,
and sample $q \leftarrow_{\!R} \mathcal{M}(\mathbf{x})$.
Then
\[
  \Pr_{\mathbf{x},\,\mathcal{M}}\!\left[
    \bigl| q(P) - q(\mathbf{x}) \bigr|
    \;\ge\; 18\,\varepsilon\,\Delta\,n
  \right]
  \;<\; \frac{\delta}{\varepsilon}.
\]
\end{theorem}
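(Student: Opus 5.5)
The plan follows the ``monitor'' argument of \citet{BNS+}: first bound the \emph{expected} gap between $q(\mathbf{x})$ and $q(P)$ for a stable mechanism, then amplify this into a high-probability bound. The amplification runs many independent copies of $\mathcal{M}$ and privately selects the worst one.

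\textbf{Step 1 (expectation bound).} Write $q(P)=\mathbb{E}_{\mathbf{z}\sim P^n}[q(\mathbf{z})]$. Since $q$ is $\Delta$-sensitive, $q(\mathbf{x})-q(\mathbf{z})$ telescopes over the $n$ hybrids that replace $x_i$ by $z_i$ one coordinate at a time. Each hybrid difference is at most $\Delta$ in absolute value. For each coordinate $i$, I would compare the joint law of $(x_i,\mathcal{M}(\mathbf{x}))$ with that of $(x_i,\mathcal{M}(\mathbf{x}^{(i)}))$, where $\mathbf{x}^{(i)}$ has $x_i$ resampled. These two laws are $(\varepsilon,\delta)$-close by max-KL stability, and under the second law $q$ is independent of $x_i$. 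The $i$-th hybrid term therefore has expectation at most $(e^\varepsilon-1)\Delta+\delta\Delta$. Summing over $i$ gives
\[
\bigl|\mathbb{E}[q(\mathbf{x})-q(P)]\bigr|\le (e^{\varepsilon}-1)\Delta n+\delta\Delta n\lesssim \varepsilon\Delta n+\delta\Delta n .
\]
To make the boundedness argument clean, I would first recentre $q$ so that it takes values in an interval of length $\Delta n$.

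\textbf{Step 2 (monitor).} Suppose for contradiction that the failure probability is at least $\delta/\varepsilon$. Set $T=\lfloor\varepsilon/\delta\rfloor$ and draw $T$ independent datasets $\mathbf{x}_1,\dots,\mathbf{x}_T\sim P^n$. Run $\mathcal{M}$ on each to obtain $q_1,\dots,q_T$. A monitor, which knows $P$, outputs the pair $(t,\pm q_t)$ that maximises the signed gap $\pm\bigl(q_t(\mathbf{x}_t)-q_t(P)\bigr)$, with the sign chosen so that the gap is nonnegative. Changing one element of the combined dataset $(\mathbf{x}_1,\dots,\mathbf{x}_T)$ affects only one run, and the monitor is post-processing. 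Hence the composite mechanism is $(\varepsilon,\delta)$-stable, viewed as a mechanism on the $t$-th block with its query evaluated there.

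\textbf{Step 3 (contradiction).} Under the assumption, some copy has gap at least $18\varepsilon\Delta n$ with probability at least $1-(1-\delta/\varepsilon)^T\ge 1-1/e$. The expected selected gap is therefore at least about $(1-1/e)\cdot 18\varepsilon\Delta n$, since gaps are nonnegative by the sign choice. Step 1 applied to the monitor caps this expectation at $O(\varepsilon\Delta n)+T\delta\Delta n\le O(\varepsilon\Delta n)$, where $T\delta\le\varepsilon$ absorbs the additive $\delta$ loss. This is a contradiction once the constants are tracked.

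The hypothesis $n\ge\varepsilon^{-2}\ln(4\varepsilon/\delta)$, together with $\varepsilon<1/3$ and $\delta<\varepsilon/4$, enters here. The monitor's index choice must be handled as a pure post-processing step, and in the sharpened version of Step 1 (via a max-KL-to-expectation conversion) the hypothesis controls the tail of the privacy-loss variable. This keeps the $e^\varepsilon-1\le 2\varepsilon$ linearization and the residual terms below the constant $18$.

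The main obstacle is Step 1 with the additive $\delta$ and the exact constants. The naive hybrid argument yields the right order, but matching $18$ requires the careful max-KL coupling from \citet{BNS+}. I would follow their proof, which reduces to their lemma for expected generalization of max-KL-stable mechanisms, rather than re-deriving the constants.
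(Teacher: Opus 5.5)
The paper does not prove this statement; it imports it from \citet{BNS+}. The proof there is the monitor argument you outline, but your Step~2 omits an ingredient, and the omission is structural rather than a matter of constants. Your monitor selects $(t,\pm)$ by exactly maximizing $\pm\bigl(q_t(\mathbf{x}_t)-q_t(P)\bigr)$. That selection reads the data $\mathbf{x}_t$, not merely the outputs $q_1,\dots,q_T$, so it is not post-processing. An exact argmax is also not stable: changing one element of $\mathbf{x}_t$ moves $q_t(\mathbf{x}_t)$ by up to $\Delta$ and can flip both the selected index and the sign. So Step~1 cannot be applied to the composite. A sanity check shows the argument proves too much. Nothing in your Steps~1--3 needs $n\ge\varepsilon^{-2}\ln(4\varepsilon/\delta)$, since the hybrid bound holds for every $n$. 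Yet take a constant mechanism, which is stable for every $\varepsilon,\delta$, outputting a fixed sum-type query such as $\Delta\sum_i x_i$. There the conclusion is a plain tail bound, and for small $\varepsilon$ it fails when $n\ll\varepsilon^{-2}\ln(\varepsilon/\delta)$.

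The missing idea is to make the selection itself private. Choose $(t,\pm)$ by the exponential mechanism at level $\varepsilon$ with score $\pm\bigl(q_t(\mathbf{x}_t)-q_t(P)\bigr)$, which has sensitivity $\Delta$. Since the blocks are disjoint, composition makes the monitor $(2\varepsilon,\delta)$-stable. The price is that the selected gap falls short of the maximal gap by $O\bigl(\tfrac{\Delta}{\varepsilon}\ln T\bigr)$ in expectation. With $T\approx\varepsilon/\delta$, this loss is $O(\varepsilon\Delta n)$ exactly when $n\ge\varepsilon^{-2}\ln(4\varepsilon/\delta)$. That is where the sample-size hypothesis enters, not through tails of the privacy-loss variable.

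Separately, Step~1 must be proved in multi-sample form with a data-dependent index $t$; it does not follow from the single-dataset version. If you shift each hybrid term by $\Delta$, you pick up $T(e^{2\varepsilon}-1)\Delta n$. Instead, shift the term for block $j$ by $\Delta\mathbbm{1}\{t=j\}$, and use that $\Pr[t=j]$ is the same under the original and the resampled inputs. The $(e^{2\varepsilon}-1)$ contributions then sum to $(e^{2\varepsilon}-1)\Delta n$, and only the $\delta$ terms accumulate, to $O(T\delta\Delta n)$.
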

After processing algorithm \ref{alg:shakyprepend}, the adversary can post-process its final function $f_B$ to get a statistical query. Altogether, the process can be seen as a mapping from the dataset to a query, thus fitting into the above cited theorem. We set $\mathcal{M}_1(S_n)=\mathcal{X}^n\to\frac{1}{n}\sum_{i=1}^n g(x_i)f_{S_n}(x_i)$, $\mathcal{M}_2(S_n)=\mathcal{X}^n\to\frac{1}{n}\sum_{i=1}^n g(x_i)$. Obviously $\frac{1}{n}\sum_{i=1}^ng(x_i)$ and $\frac{1}{n}\sum_{i=1}^ng(x_i)f_{S_n}(x_i)$ are $\frac{1}{n}$-sensitive. Since the post-processing maintains the differential-privacy property and the Algorithm \ref{alg:shakyprepend} is $(\epsilon\sqrt{\frac{2\alpha}{\lambda}},\delta+e^{-\frac{\lambda}{4\sigma}}\frac{4\alpha}{\lambda})$-differentially private according to Lemma~\ref{lemma:DP_of_shaky}, the mechanism $\mathcal{M}_1$ and $\mathcal{M}_2$ should also be $(\epsilon\sqrt{\frac{2\alpha}{\lambda}}, \delta+e^{-\frac{\lambda}{4\sigma}}\frac{4\alpha}{\lambda})$-DP to $\frac{1}{n}$-sensitive queries. By Theorem~\ref{thm:generalization}, set $\epsilon'=\epsilon\sqrt{\frac{2\alpha}{\lambda}}$ and $\delta'=\delta+e^{-\frac{\lambda}{4\sigma}}\frac{4\alpha}{\lambda}$, when $n\geq\frac{1}{\epsilon^2\frac{2\alpha}{\lambda}}\ln(\frac{4\epsilon\sqrt{\frac{2\alpha}{\lambda}}}{\delta+e^{-\frac{\lambda}{4\sigma}}\frac{4\alpha}{\lambda}})$,
\[\Pr_{\mathbf{x}}[|P_n(g)-P(g)|\geq18\epsilon\sqrt{\frac{2\alpha}{\lambda}}]<\frac{\delta+e^{-\frac{\lambda}{4\sigma}}\frac{4\alpha}{\lambda}}{\epsilon\sqrt{\frac{2\alpha}{\lambda}}}\leq\frac{\delta+e^{-\frac{\lambda}{4\sigma}}\frac{4\alpha}{\lambda}}{\epsilon},\]
\[\Pr_{\mathbf{x}}[|P_n(g)L_n(f_{S_n}|g)-P(g)L(f_{S_n}|g)|\geq18\epsilon\sqrt{\frac{2\alpha}{\lambda}}]<\frac{\delta}{\epsilon\sqrt{\frac{2\alpha}{\lambda}}}\leq\frac{\delta}{\epsilon}.\]

At the same time, 
\begin{align*}
    &|L_n(f_{S_n}|g)-L(f_{S_n}|g)|\\
    &\leq\frac{1}{P_n(g)}[|P_n(g)L_n(f_{S_n}|g)-P(g)L(f_{S_n}|g)|+|P_n(g)-P(g)|L(f_{S_n}|g)]\\
    &\leq\frac{1}{P_n(g)}[|P_n(g)L_n(f_{S_n}|g)-P(g)L(f_{S_n}|g)|+|P_n(g)-P(g)|].
\end{align*}

Thus,
\begin{align*}
  &\Pr[|L_n(f_{S_n}|g)-L(f_{S_n}|g)|\geq\frac{36\epsilon\sqrt{\frac{2\alpha}{\lambda}}}{P_n(g)}]\\
  &\leq\Pr[|P_n(g)-P(g)|\geq9\epsilon\sqrt{\frac{2\alpha}{\lambda}}]+\Pr[|P_n(g)L_n(f_{S_n}|g)-P(g)L(f_{S_n}|g)|\geq9\epsilon\sqrt{\frac{2\alpha}{\lambda}}]\\
  &<\frac{2(\delta+e^{-\frac{\lambda}{4\sigma}}\frac{4\alpha}{\lambda})}{\epsilon}.
\end{align*}
Construct similar $\mathcal{M}_1$ and $\mathcal{M}_2$ for every $g\in\mathcal{G}$. Taking a union bound over $g\in\mathcal{G}$,
\[\Pr[\max_{g\in\mathcal{G}}|(L_n(f_{S_n}|g)-(L(f_{S_n}|g)|>\frac{36\epsilon\sqrt{\frac{2\alpha}{\lambda}}}{P_n(g)}]<\frac{2|\mathcal{G}|(\delta+e^{-\frac{\lambda}{4\sigma}}\frac{4\alpha}{\lambda})}{\epsilon}.\]
\end{proof}
We first state a simplified finite-class version of Theorem~\ref{thm:Cond_Emp_Risk}, and then prove Lemma~\ref{lemma:err_bound}.
\begin{corollary}\label{cor:Cond_Emp_Risk}
If $\max\{|\mathcal{H}|,|\mathcal{G}|\}<\infty$, $\forall (h,g)\in\mathcal{H}\times\mathcal{G}$, with probability at least $1-\delta$,
\[
\left| L(h \mid g) - L_n(h \mid g) \right| \le 9 \sqrt{\frac{2\ln(|\mathcal{G}||\mathcal{H}|)+\ln(8/\delta)}{nP_n(g)}}.
\]
\end{corollary}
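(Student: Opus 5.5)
The plan is to specialize Theorem~\ref{thm:Cond_Emp_Risk} to finite classes by bounding the two shattering coefficients that enter $D$ by cardinalities, and then to rewrite $\#_n(g)$ as $nP_n(g)$.

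\textbf{Bounding $\Pi_{2n}(\mathcal{G})$.} On any $2n$ points, each $g\in\mathcal{G}$ induces exactly one labeling. Hence the number of distinct patterns is at most $|\mathcal{G}|$, and
\[
\Pi_{2n}(\mathcal{G})\le|\mathcal{G}|.
\]

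\textbf{Bounding $\Pi_{2n}\big((\ell\circ\mathcal{H})_{\mathrm{thresh}}\big)$.} This coefficient is more delicate because the thresholded class carries the extra parameter $\tau\in\mathbb{R}$.
\begin{itemize}
\item Fix $2n$ points $(x_j,y_j)$ and fix $h\in\mathcal{H}$. The labeling $j\mapsto\mathbbm{1}[\ell(h(x_j),y_j)>\tau]$ depends on $\tau$ only through how $\tau$ sits relative to the at most $2n$ values $\ell(h(x_j),y_j)$.
\item So as $\tau$ ranges over $\mathbb{R}$, a single $h$ yields at most $2n+1$ patterns.
\item Consequently $\Pi_{2n}\big((\ell\circ\mathcal{H})_{\mathrm{thresh}}\big)\le(2n+1)|\mathcal{H}|$.
\end{itemize}

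\textbf{Substituting into Theorem~\ref{thm:Cond_Emp_Risk}.} This gives
\[
D\le 2\ln\big((2n+1)|\mathcal{H}||\mathcal{G}|\big)+\ln(8/\delta).
\]
Since $\#_n(g)=nP_n(g)$, we obtain the stated form up to the extra $2\ln(2n+1)$ term.

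\textbf{The main obstacle.} The corollary as written omits the $2\ln(2n+1)$ term. To reproduce it exactly, I would avoid the threshold-class route and argue directly for finite classes. For each fixed pair $(h,g)$, condition on the set of indices with $g(x_i)=1$. Given that set, the losses $\ell(h(x_i),y_i)$ for those indices are i.i.d.\ with mean $L(h\mid g)$. Hoeffding's inequality then gives deviation $\sqrt{\ln(2/\delta')/(2\#_n(g))}$. A union bound over $|\mathcal{H}||\mathcal{G}|$ pairs with $\delta'=\delta/(|\mathcal{H}||\mathcal{G}|)$ yields a bound of order $\sqrt{\ln(2|\mathcal{H}||\mathcal{G}|/\delta)/(nP_n(g))}$. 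This is dominated by the stated $9\sqrt{(2\ln(|\mathcal{G}||\mathcal{H}|)+\ln(8/\delta))/(nP_n(g))}$, so the claim follows with room to spare.

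The one point to check is that conditioning on group membership preserves the i.i.d.\ structure. It does, because the samples are independent, so given which $x_i$ fall in $g$, the pairs $(x_i,y_i)$ with $g(x_i)=1$ are i.i.d.\ from $\mathcal{D}\mid g$. The case $\#_n(g)=0$ is vacuous, since the bound is then infinite.
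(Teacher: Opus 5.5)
Your proposal is correct, and the argument that actually proves the statement differs from the paper's. The paper gives no separate proof: it presents the corollary as an immediate finite-class specialization of Theorem~\ref{thm:Cond_Emp_Risk}, implicitly replacing both shattering coefficients in $D$ by the cardinalities $|\mathcal{G}|$ and $|\mathcal{H}|$.

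As you observe, that replacement is not valid for the thresholded loss class. Because of the free parameter $\tau$, $\Pi_{2n}\big((\ell\circ\mathcal{H})_{\mathrm{thresh}}\big)$ is bounded only by $(2n+1)|\mathcal{H}|$; already a single $h$ whose losses on the $2n$ points are distinct produces $2n+1$ patterns. So the specialization honestly gives the stated bound with an extra $2\ln(2n+1)$ under the square root.

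Your direct route closes this gap. You condition on the membership set $\{i: g(x_i)=1\}$, so the corresponding losses are i.i.d.\ in $[0,1]$ with mean $L(h\mid g)$. You apply Hoeffding with the now-fixed sample size $\#_n(g)$, integrate out the conditioning, and take a union bound over the $|\mathcal{G}||\mathcal{H}|$ pairs. This gives, simultaneously for all pairs, $|L(h\mid g)-L_n(h\mid g)|\le\sqrt{\ln(2N/\delta)/(2nP_n(g))}$ with $N:=|\mathcal{G}||\mathcal{H}|$. That is dominated by the stated right-hand side, since $\tfrac12\ln(2N/\delta)\le 81\,(2\ln N+\ln(8/\delta))$ for $N\ge 1$ and $\delta\in(0,1)$.

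Your argument therefore proves the corollary exactly as written, with much better constants and no $\ln n$ term. The paper's citation route buys brevity and extends to infinite classes with controlled growth functions, but as written it loses the $\ln(2n+1)$ term. This matters downstream: the choice $\delta=8(|\mathcal{H}||\mathcal{G}|)^2e^{-2\epsilon^2 n/(81\lambda)}$ in the proof of Lemma~\ref{lemma:err_bound} is calibrated to the stated form, so your proof is what actually justifies that step.
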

\begin{proof}[Proof of Lemma~\ref{lemma:err_bound}]
    \par We decompose the object into three parts and bound it above by the sum of their absolute values:
    \begin{align*}       
    |L(f_B|g)-\min_{h\in\mathcal{H}}L(h|g)|
    \leq |L_n(f_B|g)-L(f_B|g)|+|L_n(f_B|g)-\min_{h\in\mathcal{H}}L_n(h|g)|+|L_n(h|g)-L(h|g)|.
    \end{align*}
    Since there is no update during the last iteration, we have for any $g\in\mathcal{G}$,
    \[P_n(g)[L_n(f_B|g)-\min_hL_n(h|g)]\leq\lambda+\max_h-\mu_{B,g,h}+\xi_B\leq\lambda+a+\xi_B,\]
    
    By corollary \ref{cor:Cond_Emp_Risk} and set $\delta=8(|\mathcal{H}||\mathcal{G}|)^2e^{-\frac{2\epsilon^2 n}{81\lambda}}$, we have
    \[\Pr[\max_{(h,g)}|L_n(h|g)-L(h|g)|\geq \epsilon\sqrt{\frac{2}{\lambda P_n(g)}}]\leq 8(|\mathcal{H}||\mathcal{G}|)^2e^{-\frac{2\epsilon^2 n}{81\lambda}}.\]
    Combining with lemma \ref{lemma:gen_bound}, we have that
    \begin{small}
        \[\Pr[\max_{g\in\mathcal{G}}|L(f_B|g)-\min_{h\in\mathcal{H}}L(h|g)|>\frac{36\epsilon\sqrt{\frac{2\alpha}{\lambda}}+\lambda+a+\xi_B}{P_n(g)}+\epsilon\sqrt{\frac{2}{\lambda P_n(g)}}]\leq\frac{2|\mathcal{G}|(\delta+e^{-\frac{\lambda}{4\sigma}}\frac{4\alpha}{\lambda})}{\epsilon}+8e^{-\frac{2\epsilon^2 n}{81\lambda}}|\mathcal{H}|^2|\mathcal{G}|^2.\]
    \end{small}
    Also, $\frac{1}{P_n(g)}\geq\frac{1}{\sqrt{P_n(g)}}$ when $P_n(g)\leq1$, so 
    \begin{small}
      \[\Pr[\max_{g\in\mathcal{G}}|L(f_B|g)-\min_{h\in\mathcal{H}}L(h|g)|>\frac{36\epsilon\sqrt{\frac{2\alpha}{\lambda}}+\lambda+a+\xi_B+\epsilon\sqrt{\frac{2}{\lambda}}}{P_n(g)}]\leq\frac{|\mathcal{G}|(\delta+e^{-\frac{\lambda}{4\sigma}}\frac{4\alpha}{\lambda})}{\epsilon}+8|\mathcal{H}|^2|\mathcal{G}|^2e^{-\frac{2\epsilon^2 n}{81\lambda}}.\]      
    \end{small}
\end{proof}
Now we can prove the main theorem.
\begin{proof}[Proof of Theorem~\ref{thm:shakyprepend}]
    Since $\ell(x,y)\leq1$ for all $x\in\mathcal{X},y\in\mathcal{Y}$, we can set $\alpha=1$ in Lemma~\ref{lemma:update}.
    Recall the definition in the lemmas: $B$ is the number of updates before stopping, $a$ is the maximum of the opposite Laplace random variable sampled during the last round, say $a=\max_{g,h}-\mu_{B,g,h}$. We first assume that $n\geq\frac{\lambda}{2\epsilon^2}\ln(\frac{4\epsilon\sqrt{\frac{2}{\lambda}}}{\delta})$ so that we can apply Lemma~\ref{lemma:err_bound}. 
     \begin{align*}    
     &\Pr[\{L(f|g)-\min_hL(h|g)\}\geq \frac{1}{P_n(g)}(37\epsilon\sqrt{\frac{2}{\lambda}}+\lambda + 2\ln(\frac{4|\mathcal{G}||\mathcal{H}|}{\beta})\sigma)]\\
        \leq& \Pr[\{L(f|g)-\min_hL(h|g)\}\geq\frac{36\epsilon\sqrt{\frac{2}{\lambda}}+\lambda+a+\xi_B+\epsilon\sqrt{\frac{2}{\lambda}}}{P_n(g)}]+\Pr[a\geq\ln(\frac{4|\mathcal{G}||\mathcal{H}|}{\beta})\sigma] \\
        &+ \Pr[\xi_B\geq\ln(\frac{4|\mathcal{G}||\mathcal{H}|}{\beta})\sigma]\\
        \leq& \frac{2|\mathcal{G}|(\delta+e^{-\frac{\lambda}{4\sigma}}\frac{4}{\lambda})}{\epsilon}+8|\mathcal{H}|^2|\mathcal{G}|^2e^{-\frac{2\epsilon^2 n}{81\lambda}}+2\beta. \quad \text{(Lemma~\ref{lemma:err_bound}, Lemma~\ref{lemma:laplace_bound})}
    \end{align*}
    Recall $\sigma=4\sqrt{32\ln(1/\delta)}/(n\epsilon)$, set
    \[\delta=\frac{\beta}{2n|\mathcal{G}||\mathcal{H}|},\quad \epsilon = n^{-\frac{3}{5}}(\ln\frac{4|\mathcal{G}||\mathcal{H}|}{\beta}\sqrt{\ln\frac{1}{\delta}})^\frac{3}{5}=n^{-\frac{3}{5}}(\ln\frac{4|\mathcal{G}||\mathcal{H}|}{\beta}\sqrt{\ln\frac{2n|\mathcal{G}||\mathcal{H}|}{\beta}})^\frac{3}{5},\]
    \[\lambda=16\sqrt{32}\epsilon^{2/3}=16\sqrt{32}n^{-\frac{2}{5}}(\ln\frac{4|\mathcal{G}||\mathcal{H}|}{\beta}\sqrt{\ln\frac{2n|\mathcal{G}||\mathcal{H}|}{\beta}})^{\frac{2}{5}}(\ln n)^{\frac{2}{5}}.\]
    We can first check that $n\geq\frac{\lambda}{2\epsilon^2}\ln(\frac{4\epsilon\sqrt{\frac{2\alpha}{\lambda}}}{\delta})$, because $\frac{\lambda}{2\epsilon^2}\ln(\frac{4\epsilon\sqrt{\frac{2}{\lambda}}}{\delta})\sim n^{\frac{4}{5}}$. 
    At the same time,
    \[36\epsilon\sqrt{\frac{2}{\lambda}}\sim n^{-\frac{2}{5}}(\ln\frac{4|\mathcal{G}||\mathcal{H}|}{\beta}\sqrt{\ln\frac{2n|\mathcal{G}||\mathcal{H}|}{\beta}})^{\frac{2}{5}}(\ln n)^{\frac{2}{5}},\]
    \[\ln(\frac{4|\mathcal{G}||\mathcal{H}|}{\beta})\sigma\sim n^{\frac{2}{5}}(\ln\frac{4|\mathcal{G}||\mathcal{H}|}{\beta}\sqrt{\ln\frac{2n|\mathcal{G}||\mathcal{H}|}{\beta}})^{\frac{2}{5}}(\ln n)^{-\frac{3}{5}}.\]
    Thus,
    \[\lambda + 36\epsilon\sqrt{\frac{2}{\lambda}}+\ln(\frac{4|\mathcal{G}||\mathcal{H}|}{\beta})\sigma \sim n^{-\frac{2}{5}}(\ln\frac{4|\mathcal{G}||\mathcal{H}|}{\beta}\sqrt{\ln\frac{2n|\mathcal{G}||\mathcal{H}|}{\beta}})^{\frac{2}{5}}(\ln n)^{\frac{2}{5}}.\]

    By technical lemma \ref{lemma:tech_2}, $\epsilon\geq\frac{1}{n}, \lambda\epsilon\geq\frac{1}{n}.$
    \par On the other hand,
    \(\frac{2|\mathcal{G}|\delta}{\epsilon}=\frac{n\beta}{\epsilon|\mathcal{H}|}\leq\beta,\quad(\text{using }\epsilon\geq\frac{1}{n})\)
    \begin{align*}8|\mathcal{H}|^2|\mathcal{G}|^2e^{-\frac{2\epsilon^2n}{81\lambda}}&=8|\mathcal{H}|^2|\mathcal{G}|^2e^{-2n\lambda^2}\\
    &\leq8|\mathcal{H}|^2|\mathcal{G}|^2e^{-2n^{1/5}(\ln\frac{4|\mathcal{G}||\mathcal{H}|}{\beta}\sqrt{\ln\frac{2n|\mathcal{G}||\mathcal{H}|}{\beta}})^{\frac{4}{5}}(\ln n)^{\frac{4}{5}}}\\
    &\leq8|\mathcal{H}|^2|\mathcal{G}|^2e^{-2(\ln(\frac{2|\mathcal{G}||\mathcal{H}|}{\beta})\sqrt{\ln(\frac{2|\mathcal{G}||\mathcal{H}|}{\beta}})^{\frac{4}{5}}}\\
    &\leq 8|\mathcal{H}|^2|\mathcal{G}|^2e^{-2\ln(\frac{2|\mathcal{G}||\mathcal{H}|}{\beta})}=2\beta^2\leq 2\beta,
    \end{align*}
    \[\frac{\lambda}{4\sigma}=\frac{\epsilon^{2/3}n\epsilon}{\sqrt{\ln(1/\delta)}}=\frac{\epsilon^{5/3}n}{\sqrt{\ln(1/\delta)}}=\ln n\ln(\frac{4|\mathcal{G}||\mathcal{H}|}{\beta})\geq\ln n+\ln(\frac{4|\mathcal{G}||\mathcal{H}|}{\beta})\]
    \[\frac{2|\mathcal{G}|}{\epsilon}e^{-\frac{\lambda}{4\sigma}}\frac{4}{\lambda}\leq\frac{2\beta}{n\epsilon\lambda|\mathcal{H}|}\leq \beta,\quad(\text{using }\lambda\epsilon\geq \frac{1}{n} )\]
    Thus, 
    \[\frac{2|\mathcal{G}|(\delta+e^{-\frac{\lambda}{4\sigma}}\frac{4}{\lambda})}{\epsilon}+8|\mathcal{H}|^2|\mathcal{G}|^2e^{-\frac{2\epsilon^2 n}{81\lambda}}+2\beta\leq 6\beta.\]
    Put them together and rescale $\beta$, for any $g\in\mathcal{G}$,
    \[\Pr[\{L(f|g)-\min_{h\in\mathcal{H}}L(h|g)\}\gtrsim \frac{1}{P_n(g)}n^{-\frac{2}{5}}\ln(\frac{24|\mathcal{G}||\mathcal{H}|}{\beta})^{\frac{2}{5}}\ln(\frac{1}{\delta})^{\frac{1}{5}}]\leq\beta.\]
\end{proof}
\begin{lemma}\label{lemma:tech_2} When
\[\epsilon = n^{-\frac{3}{5}}(\ln\frac{4|\mathcal{G}||\mathcal{H}|}{\beta}\sqrt{\ln\frac{1}{\delta}})^\frac{3}{5}=n^{-\frac{3}{5}}(\ln\frac{4|\mathcal{G}||\mathcal{H}|}{\beta}\sqrt{\ln\frac{2n|\mathcal{G}||\mathcal{H}|}{\beta}})^\frac{3}{5},\]
    \[\lambda=16\sqrt{32}\epsilon^{2/3}=16\sqrt{32}n^{-\frac{2}{5}}(\ln\frac{4|\mathcal{G}||\mathcal{H}|}{\beta}\sqrt{\ln\frac{2n|\mathcal{G}||\mathcal{H}|}{\beta}})^{\frac{2}{5}}(\ln n)^{\frac{2}{5}},\]
    we have $\epsilon\geq\frac{1}{n},\lambda\epsilon\geq\frac{1}{n}.$
\end{lemma}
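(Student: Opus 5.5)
This is a direct algebraic verification, so I would simply unpack the two parameter definitions. Write
\[
K \coloneqq \ln\frac{4|\mathcal{G}||\mathcal{H}|}{\beta}\sqrt{\ln\frac{2n|\mathcal{G}||\mathcal{H}|}{\beta}},
\qquad \epsilon = n^{-3/5}K^{3/5}.
\]

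\textbf{Step 1: a lower bound on $K$.} I would first show that $K\ge 1$ in the regime of interest. Since $\beta\in(0,1)$ and $|\mathcal{G}||\mathcal{H}|\ge 1$, we have $\ln(4|\mathcal{G}||\mathcal{H}|/\beta)\ge \ln 4>1$. Likewise $\ln(2n|\mathcal{G}||\mathcal{H}|/\beta)\ge \ln 2$, and this is at least $1$ once $n\ge 2$.

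If one wants to avoid any restriction on $n$, I would use the cruder bound $K\ge \ln 4\cdot\sqrt{\ln 2}\approx 1.15\ge 1$. This covers every $n\ge1$.

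\textbf{Step 2: $\epsilon\ge 1/n$.} From Step 1, $\epsilon\ge n^{-3/5}\ge n^{-1}$, because $n\ge 1$.

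\textbf{Step 3: $\lambda\epsilon\ge 1/n$.} I would use the stated form $\lambda = 16\sqrt{32}\,\epsilon^{2/3}$. Then
\[
\lambda\epsilon = 16\sqrt{32}\,\epsilon^{5/3} = 16\sqrt{32}\,n^{-1}K.
\]
The second expression given for $\lambda$ carries an additional factor $(\ln n)^{2/5}$. In that case
\[
\lambda\epsilon = 16\sqrt{32}\,n^{-1}K(\ln n)^{2/5},
\]
so I would require $n\ge 3$ so that $\ln n\ge 1$. For $n\le 2$ I would bound the factor separately, or simply adopt the form $\lambda=16\sqrt{32}\epsilon^{2/3}$ that the lemma states first. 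Either way, since $16\sqrt{32}>1$ and $K\ge1$, we get $\lambda\epsilon\ge 1/n$.

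The only potential obstacle is this mismatch between the two displayed expressions for $\lambda$, namely the stray $(\ln n)^{2/5}$ factor. I would handle it by noting that the factor is at least $1$ for $n\ge3$, and by treating small $n$ as trivial, since the main theorem is only meaningful for large $n$ anyway.
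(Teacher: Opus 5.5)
Your proposal is correct and follows the same route as the paper. Both lower-bound the logarithmic factor by $\ln 4\cdot\sqrt{\ln 2}>1$ using $|\mathcal{G}|,|\mathcal{H}|\ge 1$ and $\beta\le 1$, then read off $\epsilon> n^{-3/5}\ge 1/n$ and $\lambda\epsilon=16\sqrt{32}\,n^{-1}K(\ln n)^{2/5}$.

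The only difference is how the $(\ln n)^{2/5}$ factor is handled. The paper assumes $n\ge 2$ and absorbs $(\ln 2)^{2/5}\approx 0.86$ into the constant $16\sqrt{32}\approx 90$, so requiring $n\ge 3$ is unnecessary. At $n=1$, however, this form gives $\lambda=0$ and the claim fails, so that case must be excluded by hypothesis (as the paper does) rather than called trivial.
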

\begin{proof}
    Since $|\mathcal{G}|\geq1$, $|\mathcal{H}|\geq 1$, $\beta\leq 1$, $n\geq 2$,
    \[\epsilon \geq n^{-\frac{3}{5}}(\ln 4\times \sqrt{\ln 2})^{\frac{3}{5}}>n^{-\frac{3}{5}}\geq \frac{1}{n},\]
    \[\epsilon\lambda =16\sqrt{32}n^{-1}(\ln 4\times\sqrt{\ln 2})(\ln 2)^{\frac{2}{5}}\geq\frac{1}{n}.\]
\end{proof}
\subsection{Fractional-version of Prepend-like Algorithms}\label{sec:fraction}
\begin{algorithm}[ht]
\caption{\FractionalShakyPrepend}
\label{alg:smoothedshakyprepend}
\begin{algorithmic}
\REQUIRE Groups $\mathcal{G}$, hypothesis class $\mathcal{H}$, i.i.d.\ examples $(x_1, y_1), \ldots, (x_n, y_n)$ from $\mathcal{D}$, error bound $\lambda$, step size $\eta$, privacy parameters $(\epsilon,\delta)$.
\STATE Compute $h_0 \in \arg\min_{h \in \mathcal{H}} L_n(h)$
\STATE Set $\sigma \gets 4\sqrt{32\ln(1/\delta)}/(n\epsilon)$ and $f_0 \gets [1,h_0]$
\STATE Sample $\xi_0\sim\text{Lap}(\sigma)$ and set $\lambda_0 \gets \lambda + \xi_0$
\FOR{$t = 0, 1, \ldots$}
    \STATE update $\gets$ False
    \FOR{$(g,h)\in\mathcal{G}\times\mathcal{H}$}
        \STATE Sample $\mu_{t,g,h}\sim \text{Lap}(2\sigma)$
        \STATE Define $f'(x) \gets f_t(x)+ \eta\, g(x)\bigl(h(x)-f_t(x)\bigr)$
        \IF{$P_n(g)\bigl(L_n(f_t \mid g) - L_n(f' \mid g)\bigr)+\mu_{t,g,h} \geq \lambda_t$}
            \STATE $(g_{t+1},h_{t+1}) \gets (g,h)$;\quad $\xi_t' \gets \mu_{t,g,h}$
            \STATE $f_{t+1}(x) \gets f'(x)$
            \STATE update $\gets$ True
            \STATE \textbf{break}
        \ENDIF
    \ENDFOR
    \IF{update}
        \STATE Sample $\xi_{t+1}\sim\text{Lap}(\sigma)$ and set $\lambda_{t+1} \gets \lambda+\xi_{t+1}$
    \ELSE
        \STATE Return $f_t$
    \ENDIF
\ENDFOR
\end{algorithmic}
\end{algorithm}
\begin{proof}[Proof of Theorem~\ref{alg:smoothedshakyprepend}]
According to the update rules we have:
    \[P_n(g)(L_n(f_B|g)-L_n(f_B+\eta(h-f_B)|g))\geq \lambda_B - \mu_{B,g,h}.\]
By Jensen's inequality,
\[L_n(f_B+\eta(h-f_B)|g)\leq (1-\eta)L_n(f_B|g)+\eta L+n(h|g).\]
Thus,
\[P_n(g)(L_n(f_B|g)-L_n(h|g))\geq\frac{\lambda-\mu_{B,g,h}}{\eta}.\]

The differential-privacy lemma and the generalization lemma still hold. Thus, the error bound lemma becomes:
\begin{small}
    \[\Pr[\max_{g\in\mathcal{G}}|L(f_B|g)-\min_{h\in\mathcal{H}}L(h|g)|>\frac{36\epsilon\sqrt{\frac{2\alpha}{\lambda}}+\frac{\lambda}{\eta}+\frac{a+\xi_B}{\eta}+\epsilon\sqrt{\frac{2}{\lambda}}}{P_n(g)}]\leq\frac{2|\mathcal{G}|(\delta+e^{-\frac{\lambda}{4\sigma}}\frac{4}{\lambda})}{\epsilon}+8|\mathcal{H}|^2|\mathcal{G}|^2e^{-\frac{2\epsilon^2 n}{81\lambda}}.\]
\end{small}
By picking appropriate $\lambda$, $\epsilon$, we have for all $g\in\mathcal{G}$, with probability of at least $1-\delta$,
\[
L(f\mid g)-\min_h L(h\mid g)
\lesssim
\frac{n^{-\frac{2}{5}}\ln n}{\eta P_n(g)}\,
\ln\!\left(\frac{4|\mathcal{G}||\mathcal{H}|}{\beta}\right)^{\frac{2}{5}}
\ln\!\left(\frac{1}{\delta}\right)^{\frac{1}{5}}.
\]
\end{proof}
\subsection{\GroupPrepend}
\begin{algorithm}[ht]
\caption{\GroupPrepend}
\label{alg:groupprepend}
\begin{algorithmic}
\REQUIRE Groups $\mathcal{G}$, hypothesis class $\mathcal{H}$, i.i.d.\ examples from $\mathcal{D}$, error bound $\lambda$.
\STATE Compute $h_0 \in \arg\min_{h \in \mathcal{H}} L_n(h)$. Set $f_0 = [h_0] \in \mathrm{DL}_0[\mathcal{G}; \mathcal{H}]$
\FOR{$t = 0, 1, \ldots$}
    \STATE Compute
    \[
    (g_{t+1}, h_{t+1}) \in \arg\max_{(g, h) \in \mathcal{G} \times \mathcal{H}} P_n(g)(L_n(f_t \mid g) - L_n(h \mid g))
    \]
    \IF{ $P_n(g)(L_n(f_t \mid g_{t+1}) - L_n(h_{t+1} \mid g_{t+1})) \geq \lambda$ }
        \STATE Prepend $(g_{t+1}, h_{t+1})$ to $f_t$ to obtain
        \[
        f_{t+1} := [g_{t+1}, h_{t+1}, g_t, h_t, \ldots, g_1, h_1, h_0]
        \]
    \ELSE
        \STATE Return $f_t$.
    \ENDIF
\ENDFOR
\end{algorithmic}
\end{algorithm}
The theoretical guarantee of the \GroupPrepend algorithm is stated and proved below.

\begin{theorem}
    Set $\lambda = n^{-\frac{1}{3}}\ln(|\mathcal{G}||\mathcal{H}|)^{\frac{1}{3}}$. With probability of at least $1-\delta$, the output predictor $f_B$ of Algorithm~\ref{alg:groupprepend} satisfies
    \[
    L(f_B\mid g)\leq \min_{h\in\mathcal{H}} L(h\mid g) + \mathcal{O}\!\left(\frac{\ln(|\mathcal{G}||\mathcal{H}|)^{\frac{1}{3}}\,n^{-\frac{1}{3}}}{\sqrt{P_n(g)}}\right),
    \quad \forall g\in\mathcal{G}.
    \]
\end{theorem}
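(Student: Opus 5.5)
We would prove the \GroupPrepend guarantee along the lines of the \Prepend analysis of \citet{tosh2024simplenearoptimalalgorithmshidden}. The structure has three steps: bound the number of updates deterministically, note that the output lies in a finite class of short decision lists, and apply uniform convergence over that class together with the stopping rule.

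\textbf{Step 1 (number of updates).} At each update the selected pair satisfies $P_n(g_{t+1})\bigl(L_n(f_t\mid g_{t+1})-L_n(h_{t+1}\mid g_{t+1})\bigr)\ge\lambda$. Prepending changes $f_t$ only on $g_{t+1}$, so
\[
L_n(f_{t+1})=L_n(f_t)-P_n(g_{t+1})\bigl(L_n(f_t\mid g_{t+1})-L_n(h_{t+1}\mid g_{t+1})\bigr)\le L_n(f_t)-\lambda .
\]
The loss lies in $[0,1]$, so the number of updates is at most $B\le 1/\lambda$. This holds deterministically; unlike Lemma~\ref{lemma:update}, no noise needs to be controlled.

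\textbf{Step 2 (finite class and uniform convergence).} The output $f_B$ belongs to the class $\mathcal{F}_B$ of decision lists with at most $B+1$ pairs drawn from $\mathcal{G}\times\mathcal{H}$. Hence
\[
|\mathcal{F}_B|\le (|\mathcal{G}||\mathcal{H}|)^{1/\lambda+1},\qquad \ln|\mathcal{F}_B|\lesssim \frac{\ln(|\mathcal{G}||\mathcal{H}|)}{\lambda}.
\]
Apply Corollary~\ref{cor:Cond_Emp_Risk} with the hypothesis class $\mathcal{H}\cup\mathcal{F}_B$. With probability at least $1-\delta$, simultaneously for every $g$, every $h\in\mathcal{H}$, and every $f\in\mathcal{F}_B$,
\[
|L(f\mid g)-L_n(f\mid g)|\lesssim\sqrt{\frac{\ln(|\mathcal{G}||\mathcal{H}|)/\lambda+\ln(1/\delta)}{nP_n(g)}}.
\]
The guarantee is uniform over $\mathcal{F}_B$, so it covers the data-dependent $f_B$; no stability argument is needed here.

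\textbf{Step 3 (combine with the stopping rule).} At termination, every $g$ satisfies $L_n(f_B\mid g)-\min_hL_n(h\mid g)<\lambda/P_n(g)$. Use the same three-term decomposition as in Lemma~\ref{lemma:err_bound}:
\[
L(f_B\mid g)-L_g^*\le \bigl[L(f_B\mid g)-L_n(f_B\mid g)\bigr]+\bigl[L_n(f_B\mid g)-\min_h L_n(h\mid g)\bigr]+\bigl[\min_hL_n(h\mid g)-L_g^*\bigr].
\]
The middle term is below $\lambda/P_n(g)$. The two outer terms are bounded by Step 2, where the last term uses the empirical minimizer and the population minimizer $h^*_g$ in the usual way. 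This gives
\[
L(f_B\mid g)-L_g^*\lesssim\frac{\lambda}{P_n(g)}+\sqrt{\frac{\ln(|\mathcal{G}||\mathcal{H}|)}{\lambda nP_n(g)}}+\sqrt{\frac{\ln(1/\delta)}{nP_n(g)}}.
\]

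\textbf{Main obstacle: the group-size dependence.} The stated bound scales as $1/\sqrt{P_n(g)}$, but the stopping term gives $\lambda/P_n(g)$, which is larger when $P_n(g)$ is small. Choosing $\lambda=n^{-1/3}\ln(|\mathcal{G}||\mathcal{H}|)^{1/3}$ balances $\lambda$ against $\sqrt{\ln(|\mathcal{G}||\mathcal{H}|)/(\lambda n)}$, giving order $n^{-1/3}\ln(|\mathcal{G}||\mathcal{H}|)^{1/3}$ in the numerator. The plan to recover $1/\sqrt{P_n(g)}$ is a case split:
\begin{itemize}
\item If $P_n(g)\ge\lambda^{2}$, then $\lambda/P_n(g)\le 1/\sqrt{P_n(g)}$, so the stopping term is absorbed.
\item If $P_n(g)<\lambda^{2}$, the right-hand side already exceeds $\lambda^{-1}n^{-1/3}\ln(|\mathcal{G}||\mathcal{H}|)^{1/3}$, which is of order $1$. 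The bound then holds trivially because the loss is at most $1$.
\end{itemize}
If this case split does not close cleanly, we will accept a bound of the form $\lambda/P_n(g)+\sqrt{\cdot/P_n(g)}$ and state it with $1/P_n(g)$. Checking the constants in this case split is the delicate part of the proof; Steps 1 and 2 are routine.
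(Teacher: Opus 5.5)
Your Steps 1--3 are exactly the paper's argument: the deterministic bound $B\le 1/\lambda$, uniform convergence (Theorem~\ref{thm:Cond_Emp_Risk}) over the at most $(|\mathcal{G}||\mathcal{H}|)^{1/\lambda+1}$ possible outputs, and the three-term decomposition with the stopping rule. For the stated $\lambda$ this gives $L(f_B\mid g)-L_g^*\lesssim \lambda/P_n(g)+\lambda/\sqrt{P_n(g)}+\sqrt{\ln(1/\delta)/(nP_n(g))}$.

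\textbf{The gap is your case split.} It is meant to turn this into the $1/\sqrt{P_n(g)}$ form. But $\lambda=n^{-1/3}\ln(|\mathcal{G}||\mathcal{H}|)^{1/3}$ is exactly the numerator in the statement, so the target is $C\lambda/\sqrt{P_n(g)}$, not $C/\sqrt{P_n(g)}$. Your first case only shows $\lambda/P_n(g)\le 1/\sqrt{P_n(g)}$, which misses the target by a factor $1/\lambda$. Absorbing $\lambda/P_n(g)$ into $C\lambda/\sqrt{P_n(g)}$ requires $P_n(g)\ge C^{-2}$, i.e.\ a constant lower bound on the group mass. Your second case is correct, since for $P_n(g)<\lambda^2$ the target exceeds $1$. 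So the whole range $\lambda^2\le P_n(g)\ll 1$ is left uncovered.

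\textbf{This range cannot be covered, because the stated bound is false there for \GroupPrepend.} Consider the following instance.
\begin{itemize}
\item Take $\mathcal{X}=\{a,b\}$ with deterministic labels and $0$--$1$ loss.
\item Let $\mathcal{H}=\{h_a,h_b\}$, where $h_a$ is correct exactly on $a$ and $h_b$ exactly on $b$.
\item Let $\mathcal{G}=\{g_{\mathcal{X}},g_b\}$, the indicators of $\mathcal{X}$ and of $\{b\}$, and set $\Pr[x=b]=\lambda/2$.
\end{itemize}
Since $n\lambda\to\infty$, with probability tending to one we have $\lambda/4<P_n(g_b)<\lambda$. Then $h_0=h_a$. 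The best score is $P_n(g_b)\cdot(1-0)<\lambda$, and the pair $(g_{\mathcal{X}},h_b)$ scores negatively. So the algorithm returns $f_B=h_a$, with $L(f_B\mid g_b)-L_{g_b}^*=1$. The claimed bound, however, is below $C\lambda/\sqrt{\lambda/4}=2C\sqrt{\lambda}\to 0$.

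The paper's proof does not avoid this either. Its final display asserts $\frac{36\ln(\cdot)^{1/3}n^{-1/3}}{\sqrt{P_n(g)}}+\frac{n^{-1/3}}{P_n(g)}=\mathcal{O}\bigl(\ln(\cdot)^{1/3}n^{-1/3}/\sqrt{P_n(g)}\bigr)$. That is the same unjustified absorption, and it is valid only when $P_n(g)$ is bounded below by a constant.

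What your Steps 1--3 (and the paper's) actually establish is $\mathcal{O}\bigl(\ln(8|\mathcal{G}||\mathcal{H}|/\delta)^{1/3}n^{-1/3}/P_n(g)\bigr)$, once $\ln(1/\delta)$ is folded into $\lambda$ as the paper does. That is your fallback, and it is the version you should state.
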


\begin{proof}
For any $t=0,1,..,B-1,$
\begin{align*}
    L_n(f_{t+1})
    &= L_n(f_t) + P_n(g_{t+1})\bigl(L_n(f_{t+1}\mid g_{t+1}) - L_n(f_t\mid g_{t+1})\bigr)\\
    &= L_n(f_t) + P_n(g_{t+1})\bigl(L_n(h_{t+1}\mid g_{t+1}) - L_n(f_t\mid g_{t+1})\bigr)\\
    &\leq L_n(f_t) + \lambda.
\end{align*}
Therefore, the total number of updates is at most $1/\lambda$.

By the update rule, letting $f_B$ denote the final predictor, we have
\[
P_n(g)\Bigl(L_n(f_B\mid g)-\min_{h}L_n(h\mid g)\Bigr)\leq \lambda,\quad \forall g\in\mathcal{G}.
\]
Since the number of updates is bounded by $1/\lambda$, the number of possible outputs $f_B$ is at most
$|\mathcal{H}|^{1/\lambda}|\mathcal{G}|^{1/\lambda}$.
Thus, by Theorem~\ref{thm:Cond_Emp_Risk}, for any $g\in\mathcal{G}$,
\[
|L_n(f_B\mid g)-L(f_B\mid g)|
\leq 9\sqrt{\frac{2(1/\lambda+1)\ln(|\mathcal{G}||\mathcal{H}|)+\ln(8/\delta)}{nP_n(g)}}.
\]
Consequently,
\begin{align*}
    &L(f_B\mid g) - \min_{h\in\mathcal{H}}L(h\mid g)\\
    &\leq \bigl(L(f_B\mid g)-L_n(f_B\mid g)\bigr)
    + \bigl(L_n(f_B\mid g)-\min_{h\in\mathcal{H}}L_n(h\mid g)\bigr)+ \bigl(\min_{h\in\mathcal{H}}L_n(h\mid g)-\min_{h\in\mathcal{H}}L(h\mid g)\bigr)\\
    &\leq 9\sqrt{\frac{2(1/\lambda+1)\ln(|\mathcal{G}||\mathcal{H}|)+\ln(8/\delta)}{nP_n(g)}}
    + \frac{\lambda}{P_n(g)}
    + 9\sqrt{\frac{2\ln(|\mathcal{G}||\mathcal{H}|+\ln(8/\delta))}{nP_n(g)}}.
\end{align*}
Setting $\lambda=n^{-\frac{1}{3}}\ln(8|\mathcal{G}||\mathcal{H}|/\delta)^{\frac{1}{3}}$, we obtain
\begin{align*}
    L(f_B\mid g) - \min_{h\in\mathcal{H}}L(h\mid g)
    &\leq \frac{36\,\ln(8|\mathcal{G}||\mathcal{H}|/\delta)^{\frac{1}{3}}\,n^{-\frac{1}{3}}}{\sqrt{P_n(g)}}
    + \frac{n^{-\frac{1}{3}}}{P_n(g)}
    = \mathcal{O}\!\left(\frac{\ln(8|\mathcal{G}||\mathcal{H}|/\delta)^{\frac{1}{3}}\,n^{-\frac{1}{3}}}{\sqrt{P_n(g)}}\right).
\end{align*}
\end{proof}
\newpage
\section{Pseudocode for Fractional Variants of Prepend-Like Algorithms}
\begin{algorithm}[ht]
\caption{\texttt{Fractional Group Prepend}}
\label{alg:fractionalgroupprepend}
\begin{algorithmic}
\REQUIRE Groups $\mathcal{G}$, hypothesis class $\mathcal{H}$, i.i.d.\ examples from $\mathcal{D}$, error bound $\lambda$, step size $\eta$.
\STATE Compute $h_0 \in \arg\min_{h \in \mathcal{H}} L_n(h)$. Set $f_0 = [h_0] \in \mathrm{DL}_0[\mathcal{G}; \mathcal{H}]$
\FOR{$t = 0, 1, \ldots$}
    \STATE Define $f'_g(x) \gets f_t(x)+ \eta\, g(x)\bigl(h(x)-f_t(x)\bigr)$
        \STATE Compute
    \[
    g_{t+1} \in \arg\max_{g \in \mathcal{G}} P_n(g)(L_n(f_t \mid g) - L_n(f_g' \mid g))
    \]
    \IF{$P_n(g)\bigl(L_n(f_t \mid g) - L_n(f_{g_{t+1}}' \mid g)\bigr) \geq \lambda$}
    \STATE $f_{t+1}(x) \gets f_{g_{t+1}}'(x)$
    \ELSE
        \STATE Return $f_t$.
    \ENDIF
\ENDFOR
\end{algorithmic}
\end{algorithm}
\begin{algorithm}[ht]
\caption{\texttt{Fractional Prepend}}
\label{alg:fractionalprepend}
\begin{algorithmic}
\REQUIRE Groups $\mathcal{G}$, hypothesis class $\mathcal{H}$, i.i.d.\ examples from $\mathcal{D}$, error bound $\lambda$, step size $\eta$.
\STATE Compute $h_0 \in \arg\min_{h \in \mathcal{H}} L_n(h)$. Set $f_0 = [h_0] \in \mathrm{DL}_0[\mathcal{G}; \mathcal{H}]$
\FOR{$t = 0, 1, \ldots$}
    \STATE Define $f'_g(x) \gets f_t(x)+ \eta\, g(x)\bigl(h(x)-f_t(x)\bigr)$
        \STATE Compute
    \[
    g_{t+1} \in \arg\max_{g \in \mathcal{G}} L_n(f_t \mid g) - L_n(f_g' \mid g)
    \]
    \IF{$L_n(f_t \mid g) - L_n(f_{g_{t+1}}' \mid g) \geq \lambda$}
    \STATE $f_{t+1}(x) \gets f_{g_{t+1}}'(x)$
    \ELSE
        \STATE Return $f_t$.
    \ENDIF
\ENDFOR
\end{algorithmic}
\end{algorithm}
\end{document}